\documentclass[12pt,reqno,oneside]{amsart}
\usepackage[latin1]{inputenc}
\usepackage[T1]{fontenc}
\usepackage{amsmath,amsfonts,amscd,amssymb,amsthm}
\usepackage{mathtools}
\usepackage{dsfont}
\usepackage{color}
\usepackage[mathscr]{euscript}
\usepackage{graphicx}
\usepackage{placeins}
\usepackage{extsizes}
\usepackage{float}
\usepackage{multirow}
\usepackage{pdflscape}
\usepackage{caption}
\usepackage{pgfplots}
\usepackage{subcaption}
\usepackage{mathabx}
\usepackage{hyperref}
\usepackage[final]{changes}
\usepackage{tikz}
\usetikzlibrary{decorations.pathreplacing}
\usetikzlibrary{backgrounds} 
\usetikzlibrary{calc}
\pgfdeclarelayer{bg}    
\pgfsetlayers{bg,main}
\usepackage{rotate}
\usepackage[noend]{algorithmic}
\usepackage{algorithm}
\usepackage{setspace}
\usepackage{array}
\definecolor{color1bg}{HTML}{f73d28}
\definecolor{color2bg}{HTML}{FA8072}
\definecolor{bblue}{HTML}{00BFFF}
\definecolor{bblue2}{HTML}{00ffff}

\usetikzlibrary{arrows,calc}
\tikzset{
	>=stealth',
	help lines/.style={dashed, thick},
	axis/.style={<->},
	important line/.style={thick},
	connection/.style={thick, dotted},
}

\tikzset{
	square/.style={%
		draw=none,
		circle,
		append after command={%
			\pgfextra \draw[black] (\tikzlastnode.north-|\tikzlastnode.west) rectangle 
			(\tikzlastnode.south-|\tikzlastnode.east);\endpgfextra}
	}
}

\usetikzlibrary{shadows}
\usetikzlibrary{backgrounds}
\tikzset{
	diagonal fill/.style 2 args={fill=#2, path picture={
			\fill[#1, sharp corners] (path picture bounding box.south west) -|
			(path picture bounding box.north east) -- cycle;}},
	reversed diagonal fill/.style 2 args={fill=#2, path picture={
			\fill[#1, sharp corners] (path picture bounding box.north west) |- 
			(path picture bounding box.south east) -- cycle;}}
}
\usetikzlibrary{arrows.meta}

\makeatletter
\@addtoreset{equation}{section}
\makeatother
\newcounter{as}[section]

\graphicspath{{Images/}}

\title{Discrete Morphological Neural Networks}

\author{Diego Marcondes \and Junior Barrera}
\date{\today}

\address{Visiting Scholar, Department of Electrical and Computer Engineering, Texas A\&M University, USA and Post-doctorate Researcher, Department of Computer Science, Institute of Mathematics and University, University of São Paulo, Brazil. e-mail: \texttt{dmarcondes@ime.usp.br}}
\address{Professor, Department of Computer Science, Institute of Mathematics and University, University of São Paulo, Brazil. e-mail: \texttt{jb@ime.usp.br}}

\allowdisplaybreaks
\newtheorem{theorem}{Theorem}[section]
\newtheorem{remark}[theorem]{Remark}

\newtheorem{definition}[theorem]{Definition}

\newtheorem{corollary}[theorem]{Corollary}

\newtheorem{proposition}[theorem]{Proposition}
\newtheorem{example}[theorem]{Example}

\usepackage{lineno}

\definecolor{bblue}{rgb}{.2,0.2,.8}

\begin{document}
	\maketitle
	
\begin{abstract}
	A classical approach to designing binary image operators is Mathematical Morphology (MM). We propose the Discrete Morphological Neural Networks (DMNN) for binary image analysis to represent  W-operators and estimate them via machine learning. A DMNN architecture, which is represented by a Morphological Computational Graph, is designed as in the classical heuristic design of morphological operators, in which the designer should combine a set of MM operators and Boolean operations based on prior information and theoretical knowledge. Then, once the architecture is fixed, instead of adjusting its parameters (i.e., structural elements or maximal intervals) by hand, we propose a lattice descent algorithm (LDA) to train these parameters based on a sample of input and output images under the usual machine learning approach. We also propose a stochastic version of the LDA that is more efficient, is scalable and can obtain small error in practical problems. The class represented by a DMNN can be quite general or specialized according to expected properties of the target operator, i.e., prior information, and the semantic expressed by algebraic properties of classes of operators is a differential relative to other methods. The main contribution of this paper is the merger of the two main paradigms for designing morphological operators: classical heuristic design and automatic design via machine learning. As a proof-of-concept, we apply the DMNN to recognize the boundary of digits with noise, and we discuss many topics for future research.
\end{abstract}

\section{Introduction}

\subsection{Mathematical Morphology and W-operators}

Mathematical Morphology (MM), a theory initiated in the sixties by George Matheron \cite{matheron} and Jean Serra \cite{serra1983image1,serra1983image} at the Fontainebleau campus of the École Superieur des Mines de Paris (ESMP), has been important for the development of the field of image analysis. Lattices are the fundamental algebraic structure in MM, which studies mappings between complete lattices, and, from the sixties to the eighties, this theory evolved from Boolean to general lattice mappings \cite{serra1983image1}. These mappings are decomposed by lattice operations and two pairs of dual elementary operators: erosion and dilation, which commute, respectively, with infimum and supremum, as well as, anti-dilation and anti-erosion, which commute, respectively, with supremum-infimum and infimum-supremum. Based on these elementary operators, Banon and Barrera \cite{banon1991minimal} proposed two dual minimal representations for any lattice operator: one in terms of supremum of sup-generating operators (i.e., infimum of an erosion and an anti-dilation) and another in terms of infimum of inf-generating operators (i.e., supremum of a dilation and an anti-erosion).

A particular class of morphological operators for binary image analysis, very important in practical applications, are the W-operators \cite{barrera1996set}. Let a binary image $X$ be a subset of the image domain $E$ and let $(\mathcal{P}(E),\subseteq)$ be the Boolean lattice of the powerset of $E$. A set operator $\psi$ is a mapping from $\mathcal{P}(E)$ to $\mathcal{P}(E)$ that transforms image $X \in \mathcal{P}(E)$ into image $\psi(X)$. When $(E,+)$ is an \textit{Abelian group} with respect to a binary operation $+$, a set operator $\psi$ is a W-operator if, and only if, it is translation invariant and locally defined in a window $W \in \mathcal{P}(E)$, i.e., $\psi(X + h) = \psi(X) + h$ and 
\begin{linenomath}
	\begin{align*}
		h \in \psi(X) \iff f_{\psi}((X - h) \cap W) = 1, \text{ for } h \in E,		
	\end{align*}
\end{linenomath}
in which $f_{\psi}: \mathcal{P}(W) \to \{0,1\}$ is the characteristic function of $\psi$. In this case, the kernel of $\psi$ is $\mathcal{K}_{W}(\psi) = \{X \in \mathcal{P}(W): f_{\psi}(X) = 1\}$ and the basis of $\psi$, denoted by $\boldsymbol{B}_{W}(\psi)$, is the collection of maximal intervals in $\mathcal{K}_{W}(\psi)$. The sup-generating operators
\begin{linenomath}
	\begin{align*}
		\lambda_{[A,B]}(X) = \{h \in E: A \subseteq (X - h) \cap W \subseteq B\}, & & X \in \mathcal{P}(E),		
	\end{align*}
\end{linenomath}
associated to intervals $[A,B] \added{\coloneqq \{X \in \mathcal{P}(E): A \subseteq X \subseteq B\}} \subseteq \mathcal{P}(W)$ are the building blocks to represent any W-operator:
\begin{linenomath}
	\begin{align*}
		\psi(X) = \bigcup \left\{\lambda_{[A,B]}(X): [A,B] \in \boldsymbol{B}_{W}(\psi)\right\} & & X \in E.
	\end{align*}
\end{linenomath}
This expression has also a dual form based on inf-generating operators. In fact, any W-operator can be represented in terms of unions, intersections, compositions, and complements of erosions, dilations, anti-erosions and anti-dilations, and have several equivalent representations, but all of them can be converted into a unique basis representation \cite{banon1991minimal}. It is possible to compute incrementally the basis of any W-operator from any representation of it in terms of the elementary operators of mathematical morphology (i.e., erosion and dilation), the Boolean lattice operations, (i.e., intersection, union and complement), and the basis of the identity operator \cite{barrera1996set}.

\subsection{Design of set operators}

\deleted{From} \added{S}everal studies about mathematical morphology operators \cite{serra1983image1,serra1983image} \added{led to} \deleted{followed} the proposal of many families of operators which can be combined heuristically to solve image analysis problems. Some classes of operators include the hit-or-miss operators \cite{serra1983image1}, morphological filters \cite{serra1992overview}, and watershed segmentation \cite{beucher1982watersheds}. The research group at ESMP developed procedures to heuristically design operators, which require prior information about the problem at hand, knowledge of several classes of operators (the so-called MM toolbox), and the ability to combine them through composition and lattice operations. Once the combination of operators is performed, the designer should carefully adjust the parameters of the designed operator to obtain a solution that properly solves the practical problem.

Specialized software, whose architectures were based on basic operators and operations (i.e., erosion, dilation, infimum, supremum, complement), from which more complex operators (i.e., skeletons, filters, etc) can be built, were implemented in the nineties. In special, the researchers and engineers of the Center for Mathematical Morphology of ESMP were the pioneers in developing software and hardware for morphological image processing, and the \textit{mmach} toolbox for the Khoros system \cite{barrera1998mmach}, developed by researchers of USP, UNICAMP and INPE in Brazil, became popular at the end of the nineties. These software have had a considerable impact on the solution of a large class of image analysis problems. We refer to \cite{barrera1998mmach,dougherty2003hands,van1996fast,vincent1991efficient} for more details on practical methods for the design of MM operators.

Although the heuristic design of morphological operators had a great impact on image analysis, it is a time-consuming manual procedure which does not scale efficiently to more complex problems. Since combining basic morphological operators to form a complex image processing pipeline is not a trivial task, a natural idea is to develop methods to automatically design morphological operators. At the end of the eighties, M. Shimi\added{tt}\deleted{dt} \cite{schmitt1989mathematical} proposed a design procedure based on artificial intelligence, which combined W-operators (represented by compositions of parameterized morphological operators) and adjusted their parameters. Since the late nineties, approaches for the automatic design of morphological operators based on machine learning have been proposed, mainly by the work of Ed. Dougherty, J. Barrera and their collaborators \cite{barrera2000automatic}, and have been extensively applied in the literature with great success to solve specific problems \cite{barrera1997automatic,brun2003design,brun2004nonlinear,dellamonica2007exact,hirata2008multilevel,hirata1999design,hirata2002incremental,hirata2000iterative,hirata2002segmentation,hirata2000aperture,montagner2017staff,montagner2016kernel,santos2010information}. Many of these approaches are also heuristics, which may consider prior information about the problem at hand. However, the parameters of the operators are more efficiently adjusted by learning from data and good practical results have been obtained. More details about MM in the context of machine learning may be found in \cite{barrera2022mathematical,hirata2021machine}.

\deleted{In the nineties \cite{ritter1996introduction}, and then more recently, MM methods have been studied in connection with deep neural networks, by either combining convolutional neural networks (CNN) with a morphological operator \cite{julca2017image}, or by replacing the convolution operations of CNN with basic morphological operators, such as erosions and dilations, obtaining the so-called morphological neural networks (MNN)} \deleted{. A MNN has the general \deleted{framework} \added{structure} of neural networks, and its specificity is on the fact that the layers realize morphological operations. Although it has been seen empirically that convolutional layers could be replaced by morphological layers \cite{franchi2020deep}, and MNN have shown a better performance than CNN in some tasks \cite{hu2022learning}, MNN have lost the spirit of MM, that is a careful theoretical design of the operator based on prior information, followed by the adjustment or learning of its parameters implying a fully understanding of its properties. This detachment from classical MM made MNN opaque and as much a black-box as CNN, so no real gain in control and interpretability is obtained with MNN when compared to CNN.}

\added{Since the nineties, MM methods have been studied in connection with neural networks. The pioneering papers about morphological neural networks (MNN), such as \cite{davidson1992simulated,davidson1993morphology,davidson1990theory,ritter1996introduction}, among others, proposed neural network architectures in which the operations of multiplication and addition in a neuron are replaced by addition and maximum (or minimum), respectively. This replacement implies that the operation performed by each neuron is either an erosion or a dilation. MNN usually have the general structure of neural networks, and their specificity is on the fact that the layers realize morphological operations.}

\added{In the following years, many MNN architectures and training algorithms have been proposed for classification and image processing problems \cite{araujo2017morphological,dimitriadis2021advances,grana2001some,mondal2019morphological2,sussner2009constructive}. In special, morphological/rank neural networks have been proposed as a class of MNN which can be trained via back-propagation \cite{pessoa1996morphological,pessoa2000neural}; the dendrite MNN have been successfully applied in classification problems by partitioning the input domain by hypercubes \cite{arce2018differential,ritter2003morphological,sossa2014efficient}; and the modular MNN \cite{araujo2006modular,de2000designing} have been proposed to represent translation invariant set mappings via the minimal representation results of Banon and Barrera \cite{banon1991minimal} and Matheron \cite{matheron1974random}.  We refer to \cite{monteiro2008brief} and the references therein for a review of the early learning methods based on MNN.}

\added{More recently, MNN have been studied in the context of convolutional neural networks (CNN) and deep learning, by either combining CNN with a morphological operator \cite{julca2017image}, or by replacing the convolution operations of CNN with basic morphological operators, such as erosions and dilations \cite{franchi2020deep,groenendijk2022morphpool,mondal2019morphological}. Although it has been seen empirically that convolutional layers could be replaced by morphological layers \cite{franchi2020deep}, and MNN have shown a better performance than CNN in some tasks \cite{hu2022learning}, MNN have lost the spirit of MM, that is a careful theoretical design of the operator based on prior information, followed by the adjustment or learning of its parameters implying a fully understanding of its properties. This detachment from classical MM made MNN opaque and as much a black-box as usual neural networks, so no real gain in control and interpretability is obtained with MNN when compared to CNN.}

\subsection{Main contributions}

In this context, we propose the Discrete Morphological Neural Networks (DMNN) to represent and automatically design W-operators via machine learning. The design of a DMNN architecture, which is represented by a Morphological Computational Graph, is equivalent to the first part of the classical heuristic design of operators, in which the designer should combine basic operators and operations based on prior information and theoretical knowledge of MM. When such a combination satisfies certain axioms, it is equivalent to an architecture of a canonical DMNN. Then, once the architecture is fixed, instead of adjusting its parameters by hand, we propose the lattice \deleted{gradient} descent algorithm (L\deleted{G}DA) to train these parameters based on a sample of input and output images under the usual machine learning approach for learning W-operators. Since the L\deleted{G}DA, that is based on the U-curve algorithms \cite{u-curve3,ucurveParallel,reis2018,u-curve1}, is a combinatorial algorithm whose complexity increases exponentially with the number of computational nodes in the architecture computational graph, we also propose a stochastic version of it that, even though is suboptimal, is much more efficient, is scalable and can obtain good results in practical problems.

The main contribution of this paper is the merger of the two main paradigms for designing morphological operators: classical heuristic design and automatic design via machine learning. On the \added{one} hand, a DMNN architecture is equivalent to the manual combination of basic operators performed in heuristic design based on prior information and known properties of MM. On the other hand, the training of an architecture, or equivalently the adjustment of the parameters of the heuristic combination, is performed based on data under a machine learning framework. The adjustment of the parameters scales to more complex problems if one considers the stochastic L\deleted{G}DA, and the learning via the L\deleted{G}DA is general and not problem specific as many of the automatic design methods based on machine learning, so the DMNN is a contribution to both design paradigms. This merger is a flexible and transparent approach for automatically designing W-operators in general settings via machine learning. 

\subsection{Paper structure}

In Section \ref{Sec_lattice}, we present the main elements of Boolean lattices, in Section \ref{Sec_canonical}, we present the canonical decomposition of set operators, and in Section \ref{Sec_composition}, we summarize the main results of \cite{barrera1996set} about the composition of operators in the canonical form. Sections \ref{Sec_lattice} to \ref{Sec_composition} follow the presentation of \cite{barrera1996set}. In Section \ref{Sec_DMNN}, we define the Morphological Computation Graph and the Discrete Morphological Neural Networks, and then show that DMNN are universal representers of translation invariant and locally defined set operators. In Section \ref{Sec_CDMNN}, we particularize to the canonical DMNN, and in Section \ref{Sec_train}, we propose the lattice \deleted{gradient} descent algorithm to train them. In Section \ref{Sec_Applications}, we illustrate the proposed method with \deleted{an} \added{a proof-of-concept} application, and in Section \ref{Sec_Discussion}, we discuss the main contributions of this paper.

\section{Elements of Boolean lattices}
\label{Sec_lattice}

Let $E$ by a non-empty set, and let $W$ be a finite subset of $E$. We assume that $(E,+)$ is an \textit{Abelian group} with respect to a binary operation denoted by $+$. We denote the zero element of $E$ by $o$. Let $\mathcal{P}(W)$ be the collection of all subsets of $W$ and let $\subseteq$ be the usual inclusion relation on sets. The pair $(\mathcal{P}(W),\subseteq)$ is a complete Boolean lattice, with least and greatest element $\emptyset$ and $W$, respectively. The intersection and union of $X_{1}$ and $X_{2}$ in $\mathcal{P}(W)$ are, respectively, $X_{1} \cap X_{2}$ and $X_{1} \cup X_{2}$. 

Given $A,B \in \mathcal{P}(W)$, the sub-collection $[A,B]$ of $\mathcal{P}(W)$ defined by
\begin{linenomath}
	\begin{equation*}
		[A,B] = \{X \in \mathcal{P}(W): A \subseteq X \subseteq B\}
	\end{equation*}
\end{linenomath}
is called a closed interval, or simply interval. If $A \subseteq B$, then $A$ and $B$ are called, respectively, the left and right extremities of $[A,B]$. For all pairs $(A,B)$ such that $A \nsubseteq B$, $[A,B]$ represents the empty collection $\emptyset$.

We will denote sub-collections of $\mathcal{P}(W)$ by uppercase script letters $\mathscr{A}, \mathscr{B}, \dots, \mathscr{X}, \mathscr{Y}, \mathscr{Z}$. The collections of closed intervals will be denoted by uppercase bold face letters $\boldsymbol{A}, \boldsymbol{B},\dots, \boldsymbol{X}, \\ \boldsymbol{Y}, \boldsymbol{Z}$. 

An element of a collection of closed intervals $\boldsymbol{X}$ is called \textit{maximal in} $\boldsymbol{X}$ if no other element of $\boldsymbol{X}$ properly contains it, that is, $\forall [A,B] \in \boldsymbol{X}$,
\begin{linenomath}
	\begin{equation*}
		[A,B] \text{ is maximal in } \boldsymbol{X} \iff \forall [A^\prime,B^\prime] \in \boldsymbol{X}, [A,B] \subseteq [A^\prime,B^\prime] \implies [A,B] = [A^\prime,B^\prime].
	\end{equation*}
\end{linenomath}
The collection of maximal closed intervals in $\boldsymbol{X}$ is denoted by $\text{Max}(\boldsymbol{X})$.

Let $\mathscr{X}$ be a sub-collection of $\mathcal{P}(W)$. The collection of all maximal closed intervals contained in $\mathscr{X}$ is denoted by $\boldsymbol{M}(\mathscr{X})$ and is given by
\begin{linenomath}
	\begin{equation*}
		\boldsymbol{M}(\mathscr{X}) = \text{Max}\left(\{[A,B] \subseteq \mathcal{P}(W): [A,B] \subseteq \mathscr{X}\}\right).
	\end{equation*}
\end{linenomath}
Usually, we will denote a sub-collection and the set of maximal intervals contained in it by the same letter, for example $\boldsymbol{X} = \boldsymbol{M}(\mathscr{X})$.

We denote by $\cup\boldsymbol{X}$ the collection of all elements of $\mathcal{P}(W)$ that are elements of closed intervals in $\boldsymbol{X}$, that is
\begin{linenomath}
	\begin{equation*}
		\cup\boldsymbol{X} = \{X \in \mathcal{P}(W): X \in [A,B], [A,B] \in \boldsymbol{X}\}.
	\end{equation*}
\end{linenomath}
Note that, for any $\mathscr{X} \subseteq \mathcal{P}(W)$, $\cup \boldsymbol{X} = \mathscr{X}$. Since $W$ is finite and $\mathscr{X} \subseteq \mathcal{P}(W)$, for any $[A,B] \subseteq \mathscr{X}$, there exists $[A^\prime,B^\prime] \in \boldsymbol{M}(\mathscr{X})$ such that $[A,B] \subseteq [A^\prime,B^\prime]$, and hence any sub-collection $\mathscr{X}$ can be represented by its maximal closed intervals.

Let $\mathcal{P}(\mathcal{P}(W))$ be the collection of all sub-collections of $\mathcal{P}(W)$ and let $\subseteq$ be the usual inclusion relation on sets. The pair $(\mathcal{P}(\mathcal{P}(W)),\subseteq)$ is a complete Boolean lattice. The least and the greatest element of $\mathcal{P}(\mathcal{P}(W))$ are, respectively, $\emptyset$ and $\mathcal{P}(W)$. The intersection and the union of $\mathscr{X}_{1}$ and $\mathscr{X}_{2}$ in $\mathcal{P}(\mathcal{P}(W))$ are, respectively, $\mathscr{X}_{1} \cap \mathscr{X}_{2}$ and $\mathscr{X}_{1} \cup \mathscr{X}_{2}$. The complementary collection of a sub-collection $\mathscr{X}$ in $\mathcal{P}(\mathcal{P}(W))$, with respect to $\mathcal{P}(W)$, is defined as $\mathscr{X}^{c} = \{X \in \mathcal{P}(W): X \notin \mathscr{X}\}$.

Let $\Pi_{W} \coloneqq \{\boldsymbol{M}(\mathscr{X}): \mathscr{X} \subseteq \mathcal{P}(W)\}$ be the collection of the sub-collections of maximal intervals of $\mathcal{P}(W)$. We will define the partial order $\leq$ on the elements of $\Pi_{W}$ by setting, $\forall \boldsymbol{X}, \boldsymbol{Y} \in \Pi_{W}$,
\begin{linenomath}
	\begin{equation*}
		\boldsymbol{X} \leq \boldsymbol{Y} \iff \forall [A,B] \in \boldsymbol{X}, \exists [A^\prime,B^\prime] \in \boldsymbol{Y}: [A,B] \subseteq [A^\prime,B^\prime].
	\end{equation*}
\end{linenomath}
The \added{partially ordered set (}poset\added{)} $(\Pi_{W},\leq)$ constitutes a complete Boolean lattice, with infimum, supremum and complement operations given, respectively, by
\begin{linenomath}
	\begin{align*}
		\boldsymbol{X} \sqcap \boldsymbol{Y} = \boldsymbol{M}(\mathscr{X} \cap \mathscr{Y}) & & \boldsymbol{X} \sqcup \boldsymbol{Y} = \boldsymbol{M}(\mathscr{X} \cup \mathscr{Y}) & & \bar{\boldsymbol{X}} = \boldsymbol{M}(\mathscr{X}^{c})
	\end{align*}
\end{linenomath}
for all $\boldsymbol{X}, \boldsymbol{Y} \in \Pi_{W}$, in which $\boldsymbol{X} = \boldsymbol{M}(\mathscr{X})$ and $\boldsymbol{Y} = \boldsymbol{M}(\mathscr{Y})$. These expressions follow since the mapping $\boldsymbol{M}(\cdot)$, defined from $\mathcal{P}(\mathcal{P}(W))$ to $\Pi_{W}$, is a lattice isomorphism. The inverse of the mapping $\boldsymbol{M}(\cdot)$ is the mapping $\cup(\cdot)$. In particular, the least and the greatest element of $(\Pi_{W},\leq)$ are, respectively, $\boldsymbol{M}(\emptyset) = \{\emptyset\}$ and $\boldsymbol{M}(\mathcal{P}(W)) = \{[\emptyset,W]\}$.

Let $\mathfrak{B}_{W} \coloneqq \{0,1\}^{\mathcal{P}(W)}$ denote the set of all Boolean functions defined on $\mathcal{P}(W)$. The pair $(\mathfrak{B}_{W},\leq)$, where $\leq$ is the partial order inherited from the total order in $\{0,1\}$, constitutes a complete Boolean lattice. This lattice is isomorphic to the lattice $(\Pi_{W},\leq)$, since the mapping $F$ defined from $\Pi_{W}$ to $\mathfrak{B}_{W}$ by
\begin{linenomath}
	\begin{align*}
		F(\boldsymbol{X})(X) = \begin{cases}
			1, &\text{ if } X \in \cup \boldsymbol{X},\\
			0, &\text{ otherwise}
		\end{cases} & & X \in \mathcal{P}(W)
	\end{align*}
\end{linenomath}
is a lattice isomorphism. The inverse mapping $F$ is the mapping $F^{-1}$ defined by
\begin{linenomath}
	\begin{align*}
		F^{-1}(f) = \boldsymbol{M}\left(\{X \in \mathcal{P}(W): f(X) = 1\}\right) & & f \in \mathfrak{B}_{W}.
	\end{align*}
\end{linenomath}
Figure \ref{lattice_iso} illustrates the lattice isomorphisms between $(\mathcal{P}(\mathcal{P}(W)),\subseteq), (\Pi_{W},\leq)$ and $(\mathfrak{B}_{W},\leq)$.

\section{Canonical Decomposition of Set Operators}
\label{Sec_canonical}

A \textit{set operator} is any mapping defined from $\mathcal{P}(E)$ into itself. We will denote set operators by lowercase Greek letters $\alpha,\beta,\gamma,\delta,\epsilon,\psi,\phi,\\ \theta,\lambda,\mu,...$ and collections of set operators by uppercase Greek letters $\Psi,\Theta,\Gamma,\Lambda,\Delta,\Phi,\Omega,\Sigma,...$. The set $\Psi$ of all the operators from $\mathcal{P}(E)$ to $\mathcal{P}(E)$ inherits the complete lattice structure of $(\mathcal{P}(E),\subseteq)$ by setting, $\forall \psi_{1},\psi_{2} \in \Psi$,
\begin{linenomath}
	\begin{align}
		\label{partial_Psi}
		\psi_{1} \leq \psi_{2} \iff \psi_{1}(X) \subseteq \psi_{2}(X), \forall X \in \mathcal{P}(E).
	\end{align}
\end{linenomath}
The supremum and infimum of a subset $\Theta \subseteq \Psi$ of the complete lattice $(\Psi,\leq)$ satisfy
\begin{linenomath}
	\begin{align*}
		\left(\bigvee \Theta\right)(X) = \bigcup \{\theta(X): \theta \in \Theta\} & & X \in \mathcal{P}(E),
	\end{align*}
\end{linenomath}
and 
\begin{linenomath}
	\begin{align*}
		\left(\bigwedge \Theta\right)(X) = \bigcap \{\theta(X): \theta \in \Theta\} & & X \in \mathcal{P}(E),
	\end{align*}
\end{linenomath}
respectively.

For any $h \in E$ and $X \in \mathcal{P}(E)$, the set
\begin{linenomath}
	\begin{equation*}
		X + h \coloneqq \{x \in E: x - h \in X\}
	\end{equation*}
\end{linenomath}
is called the translation of $X$ by $h$. We may also denote $X + h$ as $X_{h}$ and, in special, $X_{o} = X$ in which $o$ is the origin of $E$. Let $X^{t}$ be the transpose of $X \in \mathcal{P}(E)$ defined as
\begin{linenomath}
	\begin{equation*}
		X^{t} = \{y \in E: y = -x,x \in X\}.
	\end{equation*}
\end{linenomath} 

A set operator $\psi$ is called \textit{translation invariant} (t.i) if, and only if, $\forall h \in E$,
\begin{linenomath}
	\begin{align*}
		\psi(X + h) = \psi(X) + h & & X \in \mathcal{P}(E).
	\end{align*}
\end{linenomath}
Let $W$ be a finite subset of $E$. A set operator $\psi$ is called \textit{locally defined within a window} $W$ if, and only if, $\forall h \in E$,
\begin{linenomath}
	\begin{align*}
		h \in \psi(X) \iff h \in \psi(X \cap W_{h}) & & X \in \mathcal{P}(E).
	\end{align*}
\end{linenomath}
Let $\Psi_{W}$ denote the collection of t.i. operators locally defined within a window $W \in \mathcal{P}(E)$ and let
\begin{linenomath}
	\begin{equation*}
		\Omega = \bigcup\limits_{\substack{W \in \mathcal{P}(E)\\|W| < \infty}} \Psi_{W}
	\end{equation*}
\end{linenomath}
be the subset of $\Psi$ of all operators from $\mathcal{P}(E)$ to $\mathcal{P}(E)$ that are t.i. and locally defined within some finite window $W \in \mathcal{P}(E)$. The elements of $\Omega$ are called $W$\textit{-operators}. For each $W \in \mathcal{P}(E)$, the pair $(\Psi_{W},\leq)$ constitutes a finite sub-lattice of the lattice $(\Psi,\leq)$.

The kernel $\mathcal{K}_{W}(\psi)$ of a $W$-operator $\psi$ is the sub-collection of $\mathcal{P}(W)$ defined by
\begin{linenomath}
	\begin{align}
		\label{kernel}
		\mathcal{K}_{W}(\psi) = \{X \in \mathcal{P}(W): o \in \psi(X)\} & & \psi \in \Psi_{W}.
	\end{align}
\end{linenomath}
The mapping $\mathcal{K}_{W}$ from $\Psi_{W}$ to $\mathcal{P}(\mathcal{P}(W))$ defined by \eqref{kernel} constitutes a lattice isomorphism between the lattices $(\Psi_{W},\leq)$ and $(\mathcal{P}(\mathcal{P}(W)),\subseteq)$. The inverse of the mapping $\mathcal{K}_{W}$ is the mapping $\mathcal{K}_{W}^{-1}$ defined by
\begin{linenomath}
	\begin{align*}
		\mathcal{K}_{W}^{-1}(\mathscr{X})(X) = \{x \in E: X_{-x} \cap W \in \mathscr{X}\} & & \mathscr{X} \in \mathcal{P}(\mathcal{P}(W)), X \in \mathcal{P}(E).  
	\end{align*}
\end{linenomath}
As a consequence of this isomorphism, we have that, $\forall \psi_{1},\psi_{2} \in \Psi_{W}$,
\begin{linenomath}
	\begin{align*}
		\mathcal{K}_{W}(\psi_{1} \wedge \psi_{2}) = \mathcal{K}_{W}(\psi_{1}) \cap \mathcal{K}_{W}(\psi_{2}) & & \text{ and } & & \mathcal{K}_{W}(\psi_{1} \vee \psi_{2}) = \mathcal{K}_{W}(\psi_{1}) \cup \mathcal{K}_{W}(\psi_{2}).
	\end{align*}
\end{linenomath}
See \cite[Proposition~4.1]{barrera1996set} for more details.

Define by $\nu$ and $\iota$ the set operators
\begin{linenomath}
	\begin{align*}
		\iota(X) = X & & \nu(X) = X^{c}
	\end{align*}
\end{linenomath}
for $X \in \mathcal{P}(E)$, called, respectively, the identity and the complement operators, in which the complement of $X$ is with respect to $E$: $X^{c} = \{x \in E: x \notin X\}$.

For $A,B \in \mathcal{P}(E)$, the operations
\begin{linenomath}
	\begin{align*}
		A \oplus B = \bigcup\limits_{b \in B} A_{b} & & \text{ and } & & A \ominus B = \bigcap\limits_{b \in B} A_{-b}
	\end{align*}
\end{linenomath}
are called, respectively, \textit{Minkowski addition} and \textit{subtraction}. For $B \in \mathcal{P}(W)$, the t.i. set operators $\delta_{B}$ and $\epsilon_{B}$ defined by
\begin{linenomath}
	\begin{align*}
		\delta_{B}(X) = X \oplus B & & X \in \mathcal{P}(E)
	\end{align*}
\end{linenomath}
and 
\begin{linenomath}
	\begin{align*}
		\epsilon_{B}(X) = X \ominus B & & X \in \mathcal{P}(E)
	\end{align*}
\end{linenomath}
are called, respectively, \textit{dilation} and \textit{erosion} by $B$. The set $B$ is called structural element.

Let $A,B \in \mathcal{P}(W)$ be such that $A \subseteq B$. The t.i. set operators $\lambda_{[A,B]}^{W}$ and $\mu_{[A,B]}^{W}$ defined by
\begin{linenomath}
	\begin{align*}
		\lambda_{[A,B]}^{W}(X) = \{x \in E: A \subseteq X_{-x} \cap W \subseteq B\} & & X \in \mathcal{P}(E)
	\end{align*}
\end{linenomath}
and 
\begin{linenomath}
	\begin{align*}
		\mu_{[A,B]}^{W}(X) = \left\{x \in E: X_{-x} \cap A^{t} \neq \emptyset \text{ or } \left[X_{-x} \cap W^{t}\right] \cup B^{t} \neq W^{t}\right\} & & X \in \mathcal{P}(E)
	\end{align*}
\end{linenomath}
are called, respectively, sup-generating  and inf-generating operators. These operators are locally defined within, respectively, $W$ and $W^{t}$. The sup-generating and inf-generating operators can be decomposed in terms of erosions and dilations, respectively, by
\begin{linenomath}
	\begin{align}
		\label{formula_supGen}
		\lambda_{[A,B]}^{W}(X) = \epsilon_{A}(X) \cap \nu\delta_{B^{tc}}(X) & & X \in \mathcal{P}(E)
	\end{align}
\end{linenomath}
and
\begin{linenomath}
	\begin{align}
		\label{formula_infGen}
		\mu_{[A,B]}^{W}(X) = \delta_{A}(X) \cup \nu\epsilon_{B^{ct}}(X) & & X \in \mathcal{P}(E)
	\end{align}
\end{linenomath}
in which the complement of $B$ is taken relative to $W$. The operators $\nu\delta_{B^{tc}}$ and $\nu\epsilon_{B^{ct}}$ are called, respectively, anti-dilation and anti-erosion.

The dual operator of $\psi$, denoted by $\psi^{\star}$, is defined as
\begin{linenomath}
	\begin{equation*}
		\psi^{\star} = \nu\psi\nu.
	\end{equation*}
\end{linenomath}
The next result, due to a specialization of the results in \cite{banon1991minimal} proved in \cite{barrera1996set}, presents two decompositions of a $W$-operator called canonical sup-decomposition and canonical inf-decomposition.

\begin{proposition}
	\label{prop_canonical_decomposition}
	Let $\psi$ be a $W$-operator. Then $\psi$ can be decomposed as
	\begin{linenomath}
		\begin{align*}
			\psi(X) = \bigcup \left\{\lambda_{[A,B]}^{W}(X): [A,B] \subset \mathcal{K}_{W}(\psi)\right\} & & X \in \mathcal{P}(E)
		\end{align*}
	\end{linenomath}
	and
	\begin{linenomath}
		\begin{align*}
			\psi(X) = \bigcap \left\{\mu_{[A^{t},B^{t}]}^{W^{t}}(X): [A,B] \subset \mathcal{K}_{W}(\psi^{\star})\right\} & & X \in \mathcal{P}(E),
		\end{align*}
	\end{linenomath}
	called, respectively, canonical sup-decomposition and canonical inf-decomposition of $\psi$.
\end{proposition}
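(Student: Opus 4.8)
\emph{Proof sketch.} The plan is to prove the canonical sup-decomposition directly from the kernel isomorphism $\mathcal{K}_{W}$, and then to deduce the canonical inf-decomposition from it by duality. The whole argument reduces to two elementary facts about the generating operators: that $\mathcal{K}_{W}(\lambda_{[A,B]}^{W}) = [A,B]$ as a sub-collection of $\mathcal{P}(W)$, and that $\nu\lambda_{[A,B]}^{W}\nu = \mu_{[A^{t},B^{t}]}^{W^{t}}$.

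For the sup-decomposition, I would first compute the kernel of a sup-generating operator: by definition $o \in \lambda_{[A,B]}^{W}(X)$ iff $A \subseteq X \cap W \subseteq B$, which for $X \in \mathcal{P}(W)$ amounts to $X \in [A,B]$, so $\mathcal{K}_{W}(\lambda_{[A,B]}^{W}) = [A,B]$. Since $\mathcal{K}_{W}$ is a lattice isomorphism between the (finite, hence complete) lattices $(\Psi_{W},\leq)$ and $(\mathcal{P}(\mathcal{P}(W)),\subseteq)$, it carries suprema to unions, so the kernel of $\bigvee\{\lambda_{[A,B]}^{W} : [A,B] \subseteq \mathcal{K}_{W}(\psi)\}$ is $\bigcup\{[A,B] : [A,B] \subseteq \mathcal{K}_{W}(\psi)\}$. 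This union is exactly $\mathcal{K}_{W}(\psi)$: one inclusion is immediate, and the reverse one holds because every $X \in \mathcal{K}_{W}(\psi)$ lies in the degenerate interval $[X,X] \subseteq \mathcal{K}_{W}(\psi)$. Hence $\psi$ and $\bigvee\{\lambda_{[A,B]}^{W} : [A,B] \subseteq \mathcal{K}_{W}(\psi)\}$ have the same kernel, and injectivity of $\mathcal{K}_{W}$ forces them to coincide, which is the first formula. (Equivalently, one can argue membershipwise: for $x \in E$, translation invariance and local definition give $x \in \psi(X) \iff X_{-x} \cap W \in \mathcal{K}_{W}(\psi)$, while $x$ lies in the right-hand side iff $X_{-x} \cap W$ belongs to some interval contained in $\mathcal{K}_{W}(\psi)$, and these two conditions are the same.)

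For the inf-decomposition, I would apply the sup-decomposition just proved to the dual operator $\psi^{\star} = \nu\psi\nu$, which is again translation invariant and locally defined within $W$ (a routine check), obtaining
\[
\psi^{\star}(X) = \bigcup\bigl\{\lambda_{[A,B]}^{W}(X) : [A,B] \subseteq \mathcal{K}_{W}(\psi^{\star})\bigr\}, \qquad X \in \mathcal{P}(E).
\]
Then, since $\psi = (\psi^{\star})^{\star} = \nu\psi^{\star}\nu$ and $\nu$ sends unions to intersections, I would get
\[
\psi(X) = \bigcap\bigl\{\nu\lambda_{[A,B]}^{W}\nu(X) : [A,B] \subseteq \mathcal{K}_{W}(\psi^{\star})\bigr\}, \qquad X \in \mathcal{P}(E),
\]
and the stated second formula follows once the identity $\nu\lambda_{[A,B]}^{W}\nu = \mu_{[A^{t},B^{t}]}^{W^{t}}$ has been established.

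That identity is the one step I expect to demand genuine care. I would verify it either by dualizing the decomposition \eqref{formula_supGen} term by term --- using that the dual of an infimum of operators is the supremum of the duals, together with the conjugation rules $\nu\epsilon_{B}\nu = \delta_{B^{t}}$ and $\nu\delta_{B}\nu = \epsilon_{B^{t}}$, which turn \eqref{formula_supGen} into \eqref{formula_infGen} with transposed extremities and transposed window --- or directly, by expanding the defining membership conditions of $\nu\lambda_{[A,B]}^{W}\nu$ and of $\mu_{[A^{t},B^{t}]}^{W^{t}}$ and matching them. The delicate bookkeeping is that the complement appearing inside $\mu$ and inside the anti-operators is relative to $W$, whereas $\nu$ complements relative to $E$, together with the identities $(A^{t})^{t}=A$, $(W^{t})^{t}=W$ and the commutation of translation with complementation; none of this is conceptually hard, but it is the place where a window or a complement could be mishandled.
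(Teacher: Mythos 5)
The paper gives no proof of this proposition; it is quoted as a specialization of the results of Banon and Barrera proved in the cited reference on set-operator decomposition. Your argument is correct and is essentially the standard one from that literature: the identification $\mathcal{K}_{W}(\lambda_{[A,B]}^{W})=[A,B]$ combined with the fact that $\mathcal{K}_{W}$ is a complete lattice isomorphism (so suprema go to unions and injectivity lets you compare kernels), the observation that every $X\in\mathcal{K}_{W}(\psi)$ lies in the degenerate interval $[X,X]$, and duality via $\nu\lambda_{[A,B]}^{W}\nu=\mu_{[A^{t},B^{t}]}^{W^{t}}$ to obtain the inf-decomposition; your identification of the window/complement bookkeeping in that last identity as the only delicate point is accurate, and the identity does check out against the paper's definitions of $\lambda_{[A,B]}^{W}$ and $\mu_{[A,B]}^{W}$.
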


These decompositions are quite general, but there may exist a smaller family of sup-generating or inf-generating operators that represents a same operator. Indeed, let 
\begin{linenomath}
	\begin{align*}
		\boldsymbol{B}_{W}(\psi) \coloneqq \boldsymbol{M}\left(\mathcal{K}_{W}(\psi)\right) & & \psi \in \Psi_{W}
	\end{align*}
\end{linenomath}
be the collection of all maximal closed intervals contained in $\mathcal{K}_{W}(\psi)$. We call $\boldsymbol{B}_{W}(\psi)$ the \textit{basis} of $\psi$ and obtain the following simplification of the decompositions in Proposition \ref{prop_canonical_decomposition}.

\begin{corollary}
	\label{corollary_canonical_basis}
	Let $\psi$ be a $W$-operator. Then $\psi$ can be decomposed as
	\begin{linenomath}
		\begin{align*}
			\psi(X) = \bigcup \left\{\lambda_{[A,B]}^{W}(X): [A,B] \in \boldsymbol{B}_{W}(\psi)\right\} & & X \in \mathcal{P}(E)
		\end{align*}
	\end{linenomath}
	and
	\begin{linenomath}
		\begin{align*}
			\psi(X) = \bigcap \left\{\mu_{[A^{t},B^{t}]}^{W^{t}}(X): [A,B] \in \boldsymbol{B}_{W}(\psi^{\star})\right\} & & X \in \mathcal{P}(E).
		\end{align*}
	\end{linenomath}
\end{corollary}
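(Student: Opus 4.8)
The plan is to deduce Corollary~\ref{corollary_canonical_basis} from Proposition~\ref{prop_canonical_decomposition} by showing that restricting the index set from all intervals contained in $\mathcal{K}_{W}(\psi)$ (respectively $\mathcal{K}_{W}(\psi^{\star})$) to the maximal ones leaves the union (respectively the intersection) unchanged. The single technical ingredient I would isolate first is the monotonicity of sup-generating operators with respect to interval inclusion: if $[A,B]\subseteq[A^{\prime},B^{\prime}]$, i.e.\ $A^{\prime}\subseteq A$ and $B\subseteq B^{\prime}$, then for every $X\in\mathcal{P}(E)$ and every $x\in E$ the membership condition $A\subseteq X_{-x}\cap W\subseteq B$ entails $A^{\prime}\subseteq X_{-x}\cap W\subseteq B^{\prime}$, so that $\lambda_{[A,B]}^{W}(X)\subseteq\lambda_{[A^{\prime},B^{\prime}]}^{W}(X)$. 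This is a one-line containment check straight from the definition of $\lambda_{[A,B]}^{W}$.

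For the canonical sup-decomposition, the inclusion ``$\supseteq$'' relative to Proposition~\ref{prop_canonical_decomposition} is immediate, since $\boldsymbol{B}_{W}(\psi)=\boldsymbol{M}(\mathcal{K}_{W}(\psi))$ is a subfamily of the intervals contained in $\mathcal{K}_{W}(\psi)$, hence the union over $\boldsymbol{B}_{W}(\psi)$ has fewer terms. For ``$\subseteq$'', I would fix an arbitrary interval $[A,B]\subseteq\mathcal{K}_{W}(\psi)$; since $W$ is finite, the fact recalled in Section~\ref{Sec_lattice} yields some $[A^{\prime},B^{\prime}]\in\boldsymbol{M}(\mathcal{K}_{W}(\psi))=\boldsymbol{B}_{W}(\psi)$ with $[A,B]\subseteq[A^{\prime},B^{\prime}]$, and the monotonicity lemma gives $\lambda_{[A,B]}^{W}(X)\subseteq\lambda_{[A^{\prime},B^{\prime}]}^{W}(X)$, a term already appearing in the union over $\boldsymbol{B}_{W}(\psi)$. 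Taking the union over all such $[A,B]$ and combining with Proposition~\ref{prop_canonical_decomposition} gives the claimed sup-decomposition.

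For the canonical inf-decomposition I would dualize rather than repeat the argument verbatim. From \eqref{formula_supGen}, \eqref{formula_infGen} and De Morgan's laws one obtains the standard duality $(\lambda_{[A,B]}^{W})^{\star}=\mu_{[A^{t},B^{t}]}^{W^{t}}$, that is, $\mu_{[A^{t},B^{t}]}^{W^{t}}(X)=\bigl(\lambda_{[A,B]}^{W}(X^{c})\bigr)^{c}$, so the canonical inf-decomposition of $\psi$ in Proposition~\ref{prop_canonical_decomposition} reads $\psi(X)=\bigl(\bigcup\{\lambda_{[A,B]}^{W}(X^{c}):[A,B]\subseteq\mathcal{K}_{W}(\psi^{\star})\}\bigr)^{c}$; the sup statement already proved, applied to $\psi^{\star}$ with argument $X^{c}$, collapses the inner union to one over $\boldsymbol{B}_{W}(\psi^{\star})$, and taking complements back recovers $\psi(X)=\bigcap\{\mu_{[A^{t},B^{t}]}^{W^{t}}(X):[A,B]\in\boldsymbol{B}_{W}(\psi^{\star})\}$. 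Equivalently, one checks directly from the definition of $\mu$ that interval inclusion reverses the pointwise order of inf-generating operators, so non-maximal intervals are redundant in the intersection. I do not expect a genuine obstacle here: the only points requiring care are the monotonicity lemma, the degenerate pairs $(A,B)$ with $A\nsubseteq B$ (which represent $\emptyset$ and contribute nothing to either union or intersection), and keeping track of the order reversal in the dual statement — the finiteness of $W$, through the domination-by-maximal-intervals fact of Section~\ref{Sec_lattice}, does the real work.
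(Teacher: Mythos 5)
Your argument is correct and is exactly the route the paper intends: the corollary is stated as an immediate simplification of Proposition~\ref{prop_canonical_decomposition}, using the fact from Section~\ref{Sec_lattice} that every interval in $\mathcal{K}_{W}(\psi)$ is dominated by a maximal one (finiteness of $W$), the monotonicity of $\lambda_{[A,B]}^{W}$ under interval inclusion, and duality for the inf-form; the paper simply leaves these steps implicit, deferring to \cite{barrera1996set}. Your write-up fills in precisely those steps, including the correct verification of $\mu_{[A^{t},B^{t}]}^{W^{t}}(X)=\bigl(\lambda_{[A,B]}^{W}(X^{c})\bigr)^{c}$, so there is nothing to object to.
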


From Proposition \ref{prop_canonical_decomposition} and Corollary \ref{corollary_canonical_basis} follow that $W$-operators are uniquely determined, or parametrized, by their kernel or their basis. As a consequence of the isomorphisms in Figure \ref{lattice_iso}, a $W$-operator can also be represented by a Boolean function, as follows.

Let $T$ be the mapping between $\Psi_{W}$ and $\mathfrak{B}_{W}$ defined by
\begin{linenomath}
	\begin{align*}
		T(\psi)(X) = \begin{cases}
			1, & \text{ if } o \in \psi(X)\\
			0, & \text{ otherwise}.
		\end{cases} & & \psi \in \Psi_{W}, X \in \mathcal{P}(W).
	\end{align*}
\end{linenomath}
The mapping $T$ constitutes a lattice isomorphism between the complete lattices $(\Psi_{W},\leq)$ and $(\mathfrak{B}_{W},\leq)$, and its inverse $T^{-1}$ is defined by
\begin{linenomath}
	\begin{align*}
		T^{-1}(f)(X) = \left\{x \in E: f(X_{-x} \cap W) = 1\right\} & & f \in \mathfrak{B}_{W}, X \in \mathcal{P}(E).
	\end{align*}
\end{linenomath}

We note that underlying all these decomposition results are the isomorphisms between $(\Psi_{W},\leq), (\mathcal{P}(\mathcal{P}(W)),\leq), (\Pi_{W},\leq)$ and $(\mathfrak{B}_{W},\leq)$ depicted in Figure \ref{lattice_iso}.

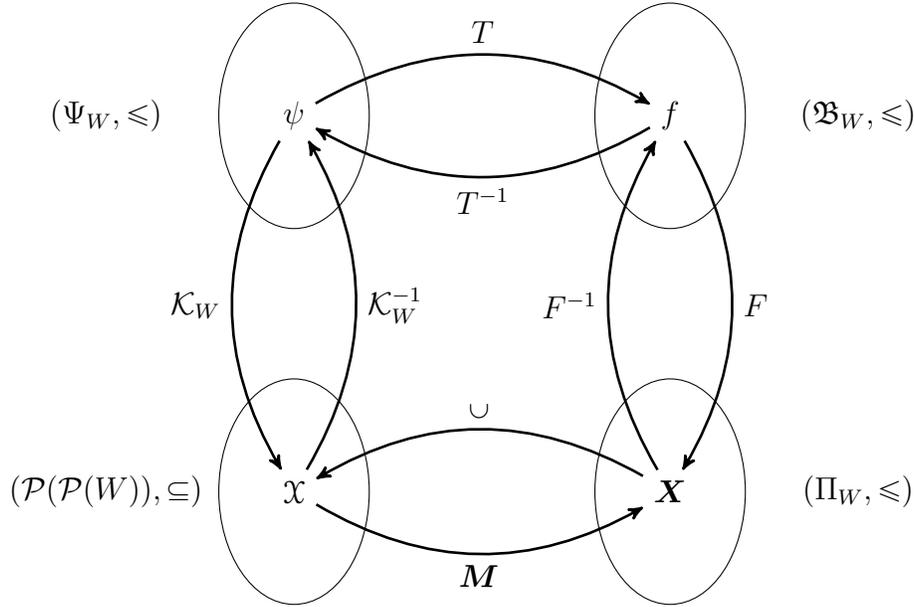
\begin{figure}[ht]
	\centering
	\begin{tikzpicture}
		\tikzstyle{hs} = [circle,draw=black, rounded corners,minimum width=3em, vertex distance=2.5cm, line width=1pt]
		\tikzstyle{hs2} = [circle,draw=black,dashed, rounded corners,minimum width=3em, vertex distance=2.5cm, line width=1pt]
		
		\draw (0,0) ellipse (1cm and 1.5cm);
		\draw (5,0) ellipse (1cm and 1.5cm);
		\draw (0,-5) ellipse (1cm and 1.5cm);
		\draw (5,-5) ellipse (1cm and 1.5cm);
		
		\node (psi) at (0,0) {$\psi$};
		\node (f) at (5,0) {$f$};
		\node (IntX) at (5,-5) {$\boldsymbol{X}$};
		\node (X) at (0,-5) {$\mathscr{X}$};
		
		\node at (-2.5,0) {$(\Psi_{W},\leq)$};
		\node at (7.5,0) {$(\mathfrak{B}_{W},\leq)$};
		\node at (7.5,-5) {$(\Pi_{W},\leq)$};
		\node at (-2.5,-5) {$(\mathcal{P}(\mathcal{P}(W)),\subseteq)$};

		\begin{scope}[line width=1pt]
			\draw[->] (psi) to[bend left]  node[sloped,above] {$T$} (f);
			\draw[->] (f) to[bend left]  node[sloped,below] {$T^{-1}$} (psi);
			
			\draw[->] (psi) to[bend right]  node[left] {$\mathcal{K}_{W}$} (X);
			\draw[->] (X) to[bend right]  node[right] {$\mathcal{K}_{W}^{-1}$} (psi);
			
			\draw[->] (f) to[bend left]  node[right] {$F$} (IntX);
			\draw[->] (IntX) to[bend left]  node[left] {$F^{-1}$} (f);
			
			\draw[->] (X) to[bend right]  node[sloped,below] {$\boldsymbol{M}$} (IntX);
			\draw[->] (IntX) to[bend right]  node[sloped,above] {$\cup$} (X);
		\end{scope}
	\end{tikzpicture}
	\caption{\footnotesize The lattice isomorphisms between representations of W-operators.} \label{lattice_iso}
\end{figure}

\section{Composition of Operators in the Canonical Form}
\label{Sec_composition}

The basic operations between set operators are supremum, infimum, and composition. In this section, we summarize the main results of \cite{barrera1996set} regarding the basis of set operators obtained via such operations.

The first result are assertions about the window of the supremum, infimum, and composition of set operators. For any $h \in E$ let
\begin{linenomath}
	\begin{align*}
		\tau_{h}(X) = X + h & & X \in \mathcal{P}(E)
	\end{align*}
\end{linenomath}
be the \textit{translation by $h$ operator}.

\begin{proposition}
	\label{prop_window_combination}
	Let $\psi \in \Psi_{W}, \psi_{1} \in \Psi_{W_{1}}$ and $\psi_{2} \in \Psi_{W_{2}}$ be t.i. operators locally defined within windows $W,W_{1},W_{2} \in \mathcal{P}(E)$, respectively. The following hold:
	\begin{itemize}
		\item[(a)] The operator $\psi$ is locally defined within any window $W^{\prime} \supseteq W$.
		\item[(b)] The operators $\psi_{1} \vee \psi_{2}$ and $\psi_{1} \wedge \psi_{2}$ are t.i. and locally defined within the window $W_{1} \cup W_{2}$.
		\item[(c)] For any $h \in E$, $\tau_{h}\psi$ is t.i. and locally defined within the window $W - h$.
		\item[(d)] For $B \in \mathcal{P}(E)$, the operators $\delta_{B}\psi$ and $\epsilon_{B}\psi$ are t.i. and locally defined, respectively, within windows $W \oplus B^{t}$ and $W \oplus B$.
		\item[(e)] The operator $\psi_{2}\psi_{1}$ is t.i. and locally defined within the window $W_{1} \oplus W_{2}$.
	\end{itemize} 
\end{proposition}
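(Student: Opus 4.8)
The plan is to establish the six items in the listed order, with (a) serving as a workhorse for the later parts; everything reduces to careful bookkeeping with the definitions of translation invariance and local definition. For (a), I would invoke the local definition of $\psi$ within $W$ twice: applied to $X$ it gives $h \in \psi(X) \iff h \in \psi(X \cap W_{h})$, and applied to $X \cap W'_{h}$, using that $W_{h} \subseteq W'_{h}$ forces $(X \cap W'_{h}) \cap W_{h} = X \cap W_{h}$, it gives $h \in \psi(X \cap W'_{h}) \iff h \in \psi(X \cap W_{h})$. Chaining the two equivalences yields $h \in \psi(X) \iff h \in \psi(X \cap W'_{h})$, i.e.\ $\psi$ is locally defined within $W'$.

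For (b), translation invariance is immediate since $\tau_{h}$ distributes over $\cap$ and $\cup$, so $(\psi_{1}\vee\psi_{2})(X+h) = (\psi_{1}(X)\cup\psi_{2}(X))+h$ and likewise for the infimum. For local definition, I would first use (a) to regard both $\psi_{1}$ and $\psi_{2}$ as locally defined within $W := W_{1}\cup W_{2}$, and then observe that $h \in (\psi_{1}\vee\psi_{2})(X)$ iff $h\in\psi_{1}(X)$ or $h\in\psi_{2}(X)$ iff $h\in\psi_{1}(X\cap W_{h})$ or $h\in\psi_{2}(X\cap W_{h})$ iff $h\in(\psi_{1}\vee\psi_{2})(X\cap W_{h})$, and symmetrically for $\wedge$ with ``or'' replaced by ``and''. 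For (c), translation invariance of $\tau_{h}\psi$ uses commutativity of $+$ in $E$, since $\tau_{h}\psi(X+g) = \psi(X)+g+h = (\tau_{h}\psi(X))+g$; for local definition within $W-h$, I would rewrite $g\in\tau_{h}\psi(X) \iff g-h\in\psi(X)$, apply the local definition of $\psi$ within $W$ at the point $g-h$, and use the identity $(W-h)_{g} = W_{g-h}$, so that both $g\in\tau_{h}\psi(X)$ and $g\in\tau_{h}\psi(X\cap(W-h)_{g})$ reduce to $g-h\in\psi(X\cap W_{g-h})$.

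For (d), I would write $\delta_{B}\psi = \bigvee_{b\in B}\tau_{b}\psi$ and $\epsilon_{B}\psi = \bigwedge_{b\in B}\tau_{-b}\psi$ directly from the definitions of $\oplus$ and $\ominus$, and observe that $W\oplus B^{t} = \bigcup_{b\in B}(W-b)$ and $W\oplus B = \bigcup_{b\in B}(W+b)$. Then (c) gives that each $\tau_{b}\psi$ (resp.\ $\tau_{-b}\psi$) is locally defined within $W-b$ (resp.\ $W+b$), (a) promotes each of these to local definition within $W\oplus B^{t}$ (resp.\ $W\oplus B$), and the argument of (b) — extended from two operators to the family $\{\tau_{\pm b}\psi : b\in B\}$ — closes the local-definition claim. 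Translation invariance of $\delta_{B}\psi$ and $\epsilon_{B}\psi$ follows from the standard commutations $(A+h)\oplus B = (A\oplus B)+h$ and $(A+h)\ominus B = (A\ominus B)+h$, themselves consequences of associativity and commutativity of $+$.

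The main obstacle is (e). Translation invariance of $\psi_{2}\psi_{1}$ is immediate by applying translation invariance of $\psi_{1}$ and then of $\psi_{2}$. The crux of local definition within $W_{1}\oplus W_{2}$ is the claim that $\psi_{1}(X)\cap(W_{2}+h)$ depends on $X$ only through $X\cap((W_{1}\oplus W_{2})+h)$: for $y\in W_{2}+h$ one has $W_{1}+y \subseteq (W_{1}\oplus W_{2})+h$ — which is exactly the window bookkeeping $\bigcup_{y\in W_{2}+h}(W_{1}+y) = (W_{1}\oplus W_{2})+h$ — so the local definition of $\psi_{1}$ within $W_{1}$ gives $y\in\psi_{1}(X) \iff y\in\psi_{1}(X\cap(W_{1}+y)) \iff y\in\psi_{1}(X\cap((W_{1}\oplus W_{2})+h))$, and hence $\psi_{1}(X)\cap(W_{2}+h) = \psi_{1}(X\cap((W_{1}\oplus W_{2})+h))\cap(W_{2}+h)$ as sets. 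Granting this, applying the local definition of $\psi_{2}$ within $W_{2}$ at $h$ yields $h\in\psi_{2}\psi_{1}(X) \iff h\in\psi_{2}(\psi_{1}(X)\cap(W_{2}+h)) \iff h\in\psi_{2}(\psi_{1}(X\cap((W_{1}\oplus W_{2})+h))\cap(W_{2}+h)) \iff h\in\psi_{2}\psi_{1}(X\cap((W_{1}\oplus W_{2})+h))$, where the outer two equivalences are the local definition of $\psi_{2}$ and the middle one is the set equality just established; this is the desired conclusion. I expect the only real care to be needed in the set-theoretic manipulation of the windows in this last step, everything else being routine.
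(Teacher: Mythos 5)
The paper does not actually prove Proposition \ref{prop_window_combination}; it states the result as a summary of \cite{barrera1996set} and defers the proof there, so there is no in-paper argument to compare against. Your proof is correct and self-contained: the reductions of (b)--(d) to (a) and (c) via the decompositions $\delta_{B}\psi=\bigvee_{b\in B}\tau_{b}\psi$ and $\epsilon_{B}\psi=\bigwedge_{b\in B}\tau_{-b}\psi$ all check out, and the key step in (e) --- that $\psi_{1}(X)\cap(W_{2}+h)$ depends on $X$ only through $X\cap((W_{1}\oplus W_{2})+h)$, justified by $W_{1}+y\subseteq(W_{1}\oplus W_{2})+h$ for $y\in W_{2}+h$ together with local definition of $\psi_{1}$ applied to the truncated set --- is exactly the right argument and matches the standard proof in the cited reference.
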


From Proposition \ref{prop_window_combination} \added{it} follows that the collection $\Omega$ of all t.i. and locally defined set operators is closed under finite composition, infimum, and supremum.

\begin{corollary}
	\label{corollary_Omega_closed}
	The set $\Omega$ of all translation invariant and locally defined set operators from $\mathcal{P}(E)$ to $\mathcal{P}(E)$ is closed under finite composition, infimum, and supremum.
\end{corollary}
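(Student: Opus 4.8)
The plan is to read the three closure properties straight off Proposition \ref{prop_window_combination}, exploiting the definition $\Omega = \bigcup_{|W| < \infty} \Psi_{W}$: membership of an operator in $\Omega$ is nothing more than the existence of \emph{some} finite window within which it is translation invariant and locally defined. So the whole task is to produce such a window for each of the three operations, and to check it is finite.

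For supremum and infimum I would argue as follows. Given $\psi_{1} \in \Psi_{W_{1}}$ and $\psi_{2} \in \Psi_{W_{2}}$ with $W_{1}, W_{2}$ finite, part (b) of Proposition \ref{prop_window_combination} says that $\psi_{1} \vee \psi_{2}$ and $\psi_{1} \wedge \psi_{2}$ are t.i.\ and locally defined within $W_{1} \cup W_{2}$. A union of two finite sets is finite, so $W_{1} \cup W_{2}$ is an admissible window and $\psi_{1} \vee \psi_{2},\, \psi_{1} \wedge \psi_{2} \in \Psi_{W_{1} \cup W_{2}} \subseteq \Omega$. To handle a finite family $\psi_{1}, \dots, \psi_{n} \in \Omega$ I would induct on $n$: at each step I absorb one more operator and replace the running window by its union with the next window, which keeps it finite throughout, and the associativity of $\vee$ and $\wedge$ (inherited from the lattice structure of $(\Psi,\le)$) makes the induction go through.

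For composition the same scheme works with part (e) in place of part (b): given $\psi_{1} \in \Psi_{W_{1}}$ and $\psi_{2} \in \Psi_{W_{2}}$ with $W_{1}, W_{2}$ finite, $\psi_{2}\psi_{1}$ is t.i.\ and locally defined within $W_{1} \oplus W_{2}$, and since $W_{1} \oplus W_{2} = \bigcup_{b \in W_{2}} (W_{1})_{b}$ is a finite union of translates of the finite set $W_{1}$, it is finite; hence $\psi_{2}\psi_{1} \in \Psi_{W_{1} \oplus W_{2}} \subseteq \Omega$. An induction on the length of the composition, using associativity of composition and the fact that $\oplus$ of two finite sets is finite, then covers arbitrary finite compositions.

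I do not expect a genuine obstacle: the corollary is an immediate unpacking of Proposition \ref{prop_window_combination} together with the elementary remarks that binary union and Minkowski addition of finite sets are finite. The only thing needing a little care is bookkeeping during the two inductions, namely verifying that every window produced along the way is still finite so that it remains a legitimate window in the definition of $\Omega$.
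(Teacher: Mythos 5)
Your proposal is correct and matches the paper's argument: the paper states this corollary as an immediate consequence of Proposition \ref{prop_window_combination}, and your unpacking via parts (b) and (e), together with the observations that finite unions and Minkowski sums of finite sets are finite and the routine inductions for finite families, is exactly the intended justification. No gaps.
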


The next result presents the basis of the supremum, infimum, and composition of set operators. See \cite{barrera1996set} for a proof.

\begin{proposition}
	\label{prop_basis_combination}
	Let $\psi \in \Psi_{W}, \psi_{1} \in \Psi_{W_{1}}$ and $\psi_{2} \in \Psi_{W_{2}}$ be t.i. operators locally defined within windows $W,W_{1},W_{2} \in \mathcal{P}(E)$, respectively. The following hold:
	\begin{itemize}
		\item[(a)] $\boldsymbol{B}_{W_{1} \cup W_{2}}(\psi_{1} \wedge \psi_{2}) = \boldsymbol{B}_{W_{1} \cup W_{2}}(\psi_{1}) \bigsqcap \boldsymbol{B}_{W_{1} \cup W_{2}}(\psi_{2})$
		\item[(b)] $\boldsymbol{B}_{W_{1} \cup W_{2}}(\psi_{1} \vee \psi_{2}) = \boldsymbol{B}_{W_{1} \cup W_{2}}(\psi_{1}) \bigsqcup \boldsymbol{B}_{W_{1} \cup W_{2}}(\psi_{2})$
		\item[(c)] $\boldsymbol{B}_{W}(\nu\psi) = \bar{\boldsymbol{B}}_{W}(\psi)$
		\item[(d)] $\boldsymbol{B}_{W-h}(\tau_{h}\psi) = \boldsymbol{B}_{W}(\psi) - h$
		\item[(e)] For $B \in \mathcal{P}(E)$, $\boldsymbol{B}_{W \oplus B^{t}}(\delta_{B}\psi) = \bigsqcup\limits_{b \in B} \boldsymbol{B}_{W \oplus B^{t}}(\tau_{b}\psi)$
		\item[(f)] For $B \in \mathcal{P}(E)$, $\boldsymbol{B}_{W \oplus B}(\epsilon_{B}\psi) = \bigsqcap\limits_{b \in B^{t}} \boldsymbol{B}_{W \oplus B}(\tau_{b}\psi)$
		\item[(g)] Denoting $\boldsymbol{B}_{W_{2}}(\psi_{2}) = \{[A_{i},B_{i}]: i \in \mathfrak{I}\}$, then
		\begin{linenomath}
			\begin{equation*}
				\boldsymbol{B}_{W_{1} \oplus W_{2}}(\psi_{2}\psi_{1}) = \bigsqcup \left\{\boldsymbol{B}_{W_{1} \oplus W_{2}}(\epsilon_{A_{i}}\psi_{1}) \sqcap \bar{\boldsymbol{B}}_{W_{1} \oplus W_{2}}(\delta_{B_{i}^{ct}}\psi_{1}): i \in \mathfrak{I}\right\}
			\end{equation*}
		\end{linenomath}
	\end{itemize}
\end{proposition}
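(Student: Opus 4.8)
The plan is to deduce every item from the corresponding statement about \emph{kernels} and then transport it to the basis lattice $(\Pi_{W},\leq)$ through the isomorphism $\boldsymbol{M}(\cdot)$. The facts I would rely on are: $\boldsymbol{B}_{W}(\psi) = \boldsymbol{M}(\mathcal{K}_{W}(\psi))$; $\mathcal{K}_{W}$ is a lattice isomorphism between $(\Psi_{W},\leq)$ and $(\mathcal{P}(\mathcal{P}(W)),\subseteq)$, so it carries $\wedge,\vee$ to $\cap,\cup$; $\boldsymbol{M}(\cdot)$ is a lattice isomorphism carrying $\cap,\cup,(\cdot)^{c}$ on $\mathcal{P}(\mathcal{P}(W))$ to $\sqcap,\sqcup,\overline{(\cdot)}$ on $\Pi_{W}$, and $\boldsymbol{M}(\cdot)$ commutes with translation (translation being a lattice isomorphism $\mathcal{P}(\mathcal{P}(W))\to\mathcal{P}(\mathcal{P}(W-h))$). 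Since Proposition~\ref{prop_window_combination} already pins down the window in which each combined operator is locally defined, in each item the only remaining task is to compute the kernel in that window and apply $\boldsymbol{M}(\cdot)$.

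For (a) and (b): viewing $\psi_{1},\psi_{2}$ as locally defined in the common window $W_{1}\cup W_{2}$ via Proposition~\ref{prop_window_combination}(a), the kernel isomorphism gives $\mathcal{K}_{W_{1}\cup W_{2}}(\psi_{1}\wedge\psi_{2}) = \mathcal{K}_{W_{1}\cup W_{2}}(\psi_{1})\cap\mathcal{K}_{W_{1}\cup W_{2}}(\psi_{2})$, and likewise with $\vee,\cup$; applying $\boldsymbol{M}(\cdot)$ produces $\sqcap$ and $\sqcup$. For (c): $o\in\nu\psi(X)\iff o\notin\psi(X)$, hence $\mathcal{K}_{W}(\nu\psi) = \mathcal{K}_{W}(\psi)^{c}$, and $\boldsymbol{M}(\cdot)$ turns this into $\overline{\boldsymbol{B}}_{W}(\psi)$. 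For (d): a short computation using translation invariance shows that, for $X\in\mathcal{P}(W-h)$, $o\in\tau_{h}\psi(X)\iff X+h\in\mathcal{K}_{W}(\psi)$, so $\mathcal{K}_{W-h}(\tau_{h}\psi) = \mathcal{K}_{W}(\psi)-h$, and applying $\boldsymbol{M}(\cdot)$ together with its commutation with translation gives the claim. Items (e) and (f) then follow by writing $\delta_{B}\psi = \bigvee_{b\in B}\tau_{b}\psi$ and $\epsilon_{B}\psi = \bigwedge_{b\in B^{t}}\tau_{b}\psi$ (immediate from the definitions of $\oplus$ and $\ominus$), noting via Proposition~\ref{prop_window_combination}(d) that these are locally defined in $W\oplus B^{t}$ and $W\oplus B$ respectively, extending every $\tau_{b}\psi$ to that common window by Proposition~\ref{prop_window_combination}(a), and iterating (b), resp. (a), over the finitely many translates.

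The substantive item is (g). The plan is: write $\psi_{2}$ in its canonical sup-decomposition over its basis, $\psi_{2} = \bigvee_{i\in\mathfrak{I}}\lambda_{[A_{i},B_{i}]}^{W_{2}}$ (Corollary~\ref{corollary_canonical_basis}); use that composition on the left distributes over both supremum and infimum, so that inserting the decomposition \eqref{formula_supGen} of each sup-generating operator gives $\psi_{2}\psi_{1} = \bigvee_{i\in\mathfrak{I}}\bigl[(\epsilon_{A_{i}}\psi_{1})\wedge\nu(\delta_{B_{i}^{tc}}\psi_{1})\bigr]$; then apply (a) to each binary infimum, (c) to the complement $\nu(\delta_{B_{i}^{tc}}\psi_{1})$, and (b) to the supremum over $i$, which produces exactly the stated formula. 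The main obstacle is the window bookkeeping: the auxiliary operators $\epsilon_{A_{i}}\psi_{1}$ and $\delta_{B_{i}^{tc}}\psi_{1}$ are a priori locally defined in smaller windows ($W_{1}\oplus A_{i}$ and $W_{1}\oplus B_{i}^{c}$ by Proposition~\ref{prop_window_combination}(d)), and one must check, using $A_{i}\subseteq W_{2}$ and $B_{i}^{c}\subseteq W_{2}$ together with Proposition~\ref{prop_window_combination}(e), that all of these are contained in $W_{1}\oplus W_{2}$; only after extending every kernel to that common window via Proposition~\ref{prop_window_combination}(a) are the infima and suprema computed in $(\Pi_{W_{1}\oplus W_{2}},\leq)$ the correct ones, and then each remaining step is a direct instance of (a)--(c).
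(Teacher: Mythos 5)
The paper itself gives no proof of this proposition --- it defers entirely to \cite{barrera1996set} --- but your argument is correct and is essentially the one used in that reference and the one the paper's own machinery (the isomorphisms $\mathcal{K}_{W}$, $\boldsymbol{M}$ and $F$ of Figure \ref{lattice_iso}) is set up to support: each item is proved at the kernel level and transported to the basis level, with (e)--(g) reduced to (a)--(d) by writing $\delta_{B}\psi$, $\epsilon_{B}\psi$ and $\psi_{2}\psi_{1}$ as finite suprema/infima of translates and of the operators $\epsilon_{A_{i}}\psi_{1}$ and $\nu\delta_{B_{i}^{ct}}\psi_{1}$ coming from \eqref{formula_supGen}. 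The window bookkeeping you flag in (g) is indeed the only delicate point, and your containments $W_{1}\oplus A_{i}\subseteq W_{1}\oplus W_{2}$ and $W_{1}\oplus B_{i}^{c}\subseteq W_{1}\oplus W_{2}$, together with Proposition \ref{prop_window_combination}(a), settle it.
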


\section{Discrete Morphological Neural Networks}
\label{Sec_DMNN}

A Discrete Morphological Neural Network (DMNN) can be represented as a computational graph with vertices that compute either a $W$-operator, or one of the two basic supremum and infimum operations. A DMNN realizes the $W$-operator computed by its computational graph. We start by defining the Morphological Computational Graphs that represent DMNN. Then, we define a DMNN architecture and the space of $W$-operators realized by it, and show that any $W$-operator may be non-trivially represented by a DMNN. We end this section showing that DMNN are universal representers of t.i and locally defined set operators.

\subsection{Morphological Computational Graph}

Let $\mathcal{G} = (\mathcal{V},\mathcal{E},\mathcal{C})$ be a \textit{computational graph}, in which $\mathcal{V}$ is a general set of vertices, $\mathcal{E} \subset \{(\mathfrak{v}_{1},\mathfrak{v}_{2}) \in \mathcal{V} \times \mathcal{V}: \mathfrak{v}_{1} \neq \mathfrak{v}_{2}\}$ is a set of directed edges, and $\mathcal{C}: \mathcal{V} \to \Omega \cup \{\vee,\wedge\}$ is a mapping that associates each vertex $\mathfrak{v} \in \mathcal{V}$ to a \textit{computation} given by either applying a t.i. and locally defined operator $\psi \in \Omega$ or one of the two basic operations $\{\vee,\wedge\}$.

For each $\mathfrak{v} \in \mathcal{V}$ we denote by
\begin{linenomath}
	\begin{equation*}
		\mathds{I}(\mathfrak{v}) = \left\{\mathfrak{a} \in \mathcal{V}: (\mathfrak{a},\mathfrak{v}) \in \mathcal{E}\right\}
	\end{equation*}
\end{linenomath}
the vertices in $\mathcal{V}$ connected with $\mathfrak{v}$ by an edge ending in $\mathfrak{v}$ (input vertices of $\mathfrak{v}$), and
\begin{linenomath}
	\begin{equation*}
		\mathds{O}(\mathfrak{v}) = \left\{\mathfrak{a} \in \mathcal{V}: (\mathfrak{v},\mathfrak{a}) \in \mathcal{E}\right\}
	\end{equation*}
\end{linenomath}
as the vertices in $\mathcal{V}$ connected with $\mathfrak{v}$ by an edge starting in $\mathfrak{v}$ (output vertices of $\mathfrak{v}$).

We say that $\mathcal{G}$ is a Morphological Computational Graph (MCG) when the following conditions are satisfied.

\vspace{0.25cm}

\noindent \textbf{Axioms of Morphological Computational Graphs} A computational graph $\mathcal{G} = (\mathcal{V},\mathcal{E},\mathcal{C})$ is a Morphological Computational Graph if, and only if, \deleted{all} the following conditions hold:
\begin{itemize}
	\item[] \textbf{A1}: The directed graph $(\mathcal{V},\mathcal{E})$ is acyclic and the number of vertices is finite and greater than 2, i.e., $2 < |\mathcal{V}| < \infty$;
	\item[] \textbf{A2}: There exists a unique $\mathfrak{v}_{i} \in \mathcal{V}$ with $|\mathds{I}(\mathfrak{v}_{i})| = 0$ and it computes the identity operator, i.e., $\mathcal{C}(\mathfrak{v}_{i}) = \iota$;
	\item[] \textbf{A3}: There exists a unique $\mathfrak{v}_{o} \in \mathcal{V}$ with $|\mathds{O}(\mathfrak{v}_{o})| = 0$ and it computes the identity operator, i.e., $\mathcal{C}(\mathfrak{v}_{o}) = \iota$;
	\item[] \textbf{A4}: If $\mathcal{C}(\mathfrak{v}) \in \Omega, \added{\mathfrak{v} \neq \mathfrak{v}_{i}}$, then $|\mathds{I}(\mathfrak{v})| = 1$;
	\item[] \textbf{A5}: If $\mathcal{C}(\mathfrak{v}) \in \{\vee,\wedge\}$, then $|\mathds{I}(\mathfrak{v})| \geq 2$.
	\item[] \deleted{\textbf{A6}: For all $\mathfrak{v} \in \mathcal{V}, \mathfrak{v} \neq \mathfrak{v}_{o}$, $|\mathds{O}(\mathfrak{v})| \geq 1$.}
\end{itemize}

\vspace{0.25cm}

The computation of a vertex $\mathfrak{v}$ in $\mathcal{G}$ receives as input the output of the computation of the vertices in $\mathds{I}(\mathfrak{v})$, and the output of its computation will be used as the input of the computation of the vertices in $\mathds{O}(\mathfrak{v})$. We assume there is an input vertex $\mathfrak{v}_{i}$, and an output vertex $\mathfrak{v}_{o}$, that store the input, which is an element $X \in \mathcal{P}(E)$, and output of the computational graph, respectively. These vertices cannot have, respectively, an edge ending and starting in them (\textbf{A2} and \textbf{A3}). The directed graph $(\mathcal{V},\mathcal{E})$ should be acyclic, so each computation is performed at most once in any path of $(\mathcal{V},\mathcal{E})$, and there should be at least three vertices in $\mathcal{V}$ so it is not formed solely by the input and output vertices; there must be a finite number of vertices so the computation of $\mathcal{G}$ is possible (\textbf{A1}). 

On the one hand, if $\mathcal{C}(\mathfrak{v}) \in \Omega$, then the computation of $\mathfrak{v}$ is given by applying a t.i. and locally defined set operator which has only one input, so it must hold $|\mathds{I}(\mathfrak{v})| = 1$ (\textbf{A4}). On the other hand, if $\mathcal{C}(\mathfrak{v}) \in \{\vee,\wedge\}$, then the computation of $\mathfrak{v}$ is given by taking the infimum or the supremum of the output of at least two preceding computations, so it must hold $|\mathds{I}(\mathfrak{v})| \geq 2$ (\textbf{A5}). Apart from the output vertex $\mathfrak{v}_{o}$, all other vertices should have at least one edge starting in them, so their computation is the input of at least one other vertex \added{(consequence of \textbf{A3})}. \deleted{\textbf{(A6)}} Table \ref{tab_dic} summarizes the types of vertices of a MCG.

\begin{table}[ht]
	\centering
	\caption{\footnotesize Types of vertices of a Morphological Computational Graph.} \label{tab_dic}
	\resizebox{\linewidth}{!}{\begin{tabular}{clcccl}
			\hline
			Symbol & Name & $\mathcal{C}(\mathfrak{v})$ & $|\mathds{I}(\mathfrak{v})|$ & $|\mathds{O}(\mathfrak{v})|$ & Description \\
			\hline
			\raisebox{-.5\height}{\includegraphics[scale=0.2]{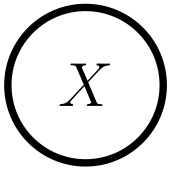}} & Input vertex $\mathfrak{v}_{i}$ & $\iota$ & 0 & $\geq 1$ & Receive and store the input $X \in \mathcal{P}(E)$ of $\mathcal{G}$ \\
			\raisebox{-.5\height}{\includegraphics[scale=0.2]{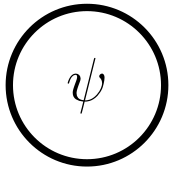}} & Set operator vertex & $\psi \in \Omega$ & $1$ & $\geq 1$ & Apply operator $\psi = \mathcal{C}(\mathfrak{v}) \in \Omega$ to its input\\
			\raisebox{-.5\height}{\includegraphics[scale=0.2]{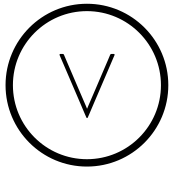}} & Supremum vertex & $\vee$ & $\geq 2$ & $\geq 1$ & Take the supremum of its inputs\\
			\raisebox{-.5\height}{\includegraphics[scale=0.2]{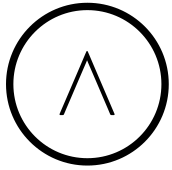}} & Infimum vertex & $\wedge$ & $\geq 2$ & $\geq 1$ & Take the infimum of its inputs\\
			\raisebox{-.5\height}{\includegraphics[scale=0.2]{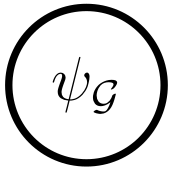}} & Output vertex $\mathfrak{v}_{o}$ & $\iota$ & 1 & 0 & Receive and store the output $\psi_{\mathcal{G}}(X)$ of $\mathcal{G}$ \\
			\hline
			\hline
	\end{tabular}}
\end{table}

Since the input of the vertex $\mathfrak{v}_{i}$ is an element of $\mathcal{P}(E)$, the output and the input of all computations performed by vertices in $\mathcal{G}$ are elements of $\mathcal{P}(E)$. Denoting by $\psi_{\mathcal{G}}(X)$ the output of vertex $\mathfrak{v}_{o}$ when the input of vertex $\mathfrak{v}_{i}$ is $X \in \mathcal{P}(E)$, we have that the MCG $\mathcal{G}$ generates a set operator $\psi_{\mathcal{G}}: \mathcal{P}(E) \to \mathcal{P}(E)$. Actually, it follows from Corollary \ref{corollary_Omega_closed} that $\psi_{\mathcal{G}}$ is actually t.i. and locally defined within a window $W_{\mathcal{G}}$.

\begin{proposition}
	\label{prop_graph_Woperator}
	Let $\mathcal{G} = (\mathcal{V},\mathcal{E},\mathcal{C})$ be a Morphological Computational Graph. Then $\psi_{\mathcal{G}}$ is translation invariant and locally defined within a finite window $W_{\mathcal{G}} \in \mathcal{P}(E)$.
\end{proposition}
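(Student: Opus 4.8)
The plan is to argue by induction on the structure of the acyclic graph underlying $\mathcal{G}$. By \textbf{A1}, $(\mathcal{V},\mathcal{E})$ is a finite directed acyclic graph, so it admits a topological ordering $\mathfrak{v}_{1},\dots,\mathfrak{v}_{n}$ of its vertices, meaning $(\mathfrak{v}_{p},\mathfrak{v}_{q})\in\mathcal{E}$ implies $p<q$; since by \textbf{A2} the unique vertex with no in-edges is $\mathfrak{v}_{i}$ we may take $\mathfrak{v}_{1}=\mathfrak{v}_{i}$, and since by \textbf{A3} the unique vertex with no out-edges is $\mathfrak{v}_{o}$ we may take $\mathfrak{v}_{n}=\mathfrak{v}_{o}$ (the case $\mathfrak{v}_{i}=\mathfrak{v}_{o}$ is excluded, since such a vertex would be isolated and then a maximal backward walk from any other vertex --- which has in-degree at least $1$ by \textbf{A2} --- could not terminate, contradicting acyclicity). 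Along this order I will define recursively the set operator $\psi_{\mathfrak{v}_{m}}\colon\mathcal{P}(E)\to\mathcal{P}(E)$ that vertex $\mathfrak{v}_{m}$ computes when $\mathfrak{v}_{i}$ is fed $X\in\mathcal{P}(E)$: set $\psi_{\mathfrak{v}_{1}}=\iota$; and for $m\geq 2$, writing $\mathds{I}(\mathfrak{v}_{m})=\{\mathfrak{a}_{1},\dots,\mathfrak{a}_{k}\}$ --- all of index $<m$ by the topological property, and $k\geq 1$ by \textbf{A2} --- set $\psi_{\mathfrak{v}_{m}}=\mathcal{C}(\mathfrak{v}_{m})\circ\psi_{\mathfrak{a}_{1}}$ when $\mathcal{C}(\mathfrak{v}_{m})\in\Omega$ (where $k=1$ by \textbf{A4}), and $\psi_{\mathfrak{v}_{m}}=\psi_{\mathfrak{a}_{1}}\vee\cdots\vee\psi_{\mathfrak{a}_{k}}$ or $\psi_{\mathfrak{a}_{1}}\wedge\cdots\wedge\psi_{\mathfrak{a}_{k}}$ when $\mathcal{C}(\mathfrak{v}_{m})$ is $\vee$ or $\wedge$ (where $k\geq 2$ by \textbf{A5}). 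This recursion is well posed exactly because the inputs of a vertex precede it in the order, and by construction $\psi_{\mathcal{G}}=\psi_{\mathfrak{v}_{n}}$.

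The core of the argument is then the claim that $\psi_{\mathfrak{v}_{m}}\in\Omega$ for every $m$, which I will prove by induction on $m$. The base case is that $\psi_{\mathfrak{v}_{1}}=\iota$ is translation invariant and locally defined within the finite window $\{o\}$ --- indeed $h\in\iota(X)\iff h\in X\iff h\in X\cap(\{o\}+h)$ --- so $\iota\in\Psi_{\{o\}}\subseteq\Omega$. For the inductive step, fix $m\geq 2$; by the induction hypothesis the finitely many input operators $\psi_{\mathfrak{a}_{1}},\dots,\psi_{\mathfrak{a}_{k}}$ all lie in $\Omega$, and $\psi_{\mathfrak{v}_{m}}$ is obtained from them by exactly one of the three operations: composition with an operator in $\Omega$, a finite supremum, or a finite infimum. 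Since $\Omega$ is closed under finite composition, infimum, and supremum by Corollary \ref{corollary_Omega_closed}, we get $\psi_{\mathfrak{v}_{m}}\in\Omega$. Taking $m=n$ yields $\psi_{\mathcal{G}}=\psi_{\mathfrak{v}_{n}}\in\Omega$, which is precisely the assertion that $\psi_{\mathcal{G}}$ is translation invariant and locally defined within some finite window $W_{\mathcal{G}}\in\mathcal{P}(E)$.

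If an explicit window is desired, it can be read off recursively from Proposition \ref{prop_window_combination}: starting from $W_{\mathfrak{v}_{1}}=\{o\}$, part (e) shows $\psi_{\mathfrak{v}_{m}}$ is locally defined within $W_{\mathfrak{a}_{1}}\oplus W$ when $\mathcal{C}(\mathfrak{v}_{m})=\psi\in\Psi_{W}$, and part (b) gives the window $W_{\mathfrak{a}_{1}}\cup\cdots\cup W_{\mathfrak{a}_{k}}$ when $\mathcal{C}(\mathfrak{v}_{m})\in\{\vee,\wedge\}$, so one may take $W_{\mathcal{G}}=W_{\mathfrak{v}_{n}}$. I do not expect a genuine obstacle. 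The only points needing care are that the recursion defining the $\psi_{\mathfrak{v}}$ is well defined --- which is exactly why finiteness and acyclicity in \textbf{A1} are used --- and the bookkeeping of which vertex type triggers which closure property, dictated by \textbf{A4}--\textbf{A5}; all the substance is carried by Corollary \ref{corollary_Omega_closed} (equivalently, Proposition \ref{prop_window_combination}).
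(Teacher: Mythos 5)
Your proof is correct and takes essentially the same route as the paper's: both reduce the statement to the closure of $\Omega$ under finite composition, supremum, and infimum (Corollary \ref{corollary_Omega_closed}). The paper states this in one line, treating the combination of the vertex operators as self-evident, whereas you make that step rigorous via a topological ordering of the acyclic graph and an induction along it; this is a more careful rendering of the same argument, not a different one.
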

\begin{proof}
	Fix $X \in \mathcal{P}(E)$ and let 
	\begin{linenomath}
		\begin{equation*}
			\Omega_{\mathcal{G}} = \left\{\psi \in \Omega: \exists \mathfrak{v} \in \mathcal{V} \text{ s.t. } \mathcal{C}(\mathfrak{v}) = \psi\right\} \subseteq \Omega
		\end{equation*}
	\end{linenomath}
	be the finite collection of t.i. and locally defined set operators applied by vertices in $\mathcal{V}$. Since $\psi_{\mathcal{G}}(X)$ is obtained by combining the set operators in $\Omega_{\mathcal{G}}$ via the composition, supremum and infimum operations, and $\Omega$ is closed under finite composition, infimum and supremum (cf. Corollary \ref{corollary_Omega_closed}), it follows that $\psi_{\mathcal{G}} \in \Omega$, and hence it is a t.i. and locally defined set operator.
\end{proof}

\begin{remark}
	\label{remark_basis}
	An algorithm to calculate a window $W_{\mathcal{G}}$ of $\psi_{\mathcal{G}}$ or its basis $\boldsymbol{B}_{W_{\mathcal{G}}}(\psi_{\mathcal{G}})$ is a consequence of the results of \cite{barrera1996set} summarized in Propositions \ref{prop_window_combination} and \ref{prop_basis_combination}. The window or the basis may be computed via the computational graph $\mathcal{G}$, but instead of applying the computations determined by $\mathcal{C}$, one combines the windows or basis of the input vertices based on Propositions \ref{prop_window_combination} and \ref{prop_basis_combination} to obtain the window or basis of the set operator defined as the output of each vertex. At the end of the computations, the window or basis stored in the output vertex will be that of $\psi_{\mathcal{G}}$.
\end{remark}

We are in position to define the DMNN represented by a MCG.

\begin{definition}
	Let $\mathcal{G} = (\mathcal{V},\mathcal{E},\mathcal{C})$ be a Morphological Computational Graph. We define the Discrete Morphological Neural Network represented by $\mathcal{G}$ as the translation invariant and locally defined set operator $\psi_{\mathcal{G}}$.
\end{definition}

We present four examples of MCG which, respectively, trivially represents a $\psi \in \Omega$, represents $\psi$ via its canonical sup-generating or inf-generating decomposition, represents an alternate-sequential filter, and represents the composition of an alternate-sequential filter and a W-operator.

\begin{example}[Trivial DMNN]
	\normalfont \label{example_MDNN_trivial}
	Fix $\psi \in \Omega$ and let $\mathcal{G}$ be the MCG with $\mathcal{V} = \{\mathfrak{v}_{i},\mathfrak{v},\mathfrak{v}_{o}\}$, $\mathcal{E} = \{(\mathfrak{v}_{i},\mathfrak{v}),(\mathfrak{v},\mathfrak{v}_{o})\}$ and $\mathcal{C}(\mathfrak{v}) = \psi$. This MCG, depicted in Figure \ref{fig_trivial}, is such that $\psi = \psi_{\mathcal{G}}$ and illustrates that any $\psi \in \Omega$ can be trivially represented by a DMNN.
	
	\begin{figure}[ht]
		\centering
		\begin{tikzpicture}[scale=1]
			\node[circle,draw=black,minimum size=30pt,line width=0.5mm] (ei) at (0,0) {$X$};
			\node[circle,draw=black,minimum size=30pt,line width=0.5mm] (e) at (2.5,0) {$\psi$};
			\node[circle,draw=black,minimum size=30pt,line width=0.5mm] (eo) at (5,0) {$\psi_{\mathcal{G}}$};
			
			\draw[->,line width=0.5mm] (ei) -- (e);
			\draw[->,line width=0.5mm] (e) -- (eo);
		\end{tikzpicture}
		\caption{\footnotesize Trivial DMNN representation of $\psi \in \Omega$.} \label{fig_trivial}
	\end{figure}
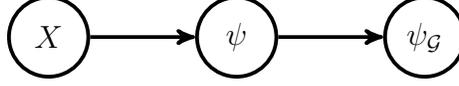
	
	\hfill$\blacksquare$
\end{example}

\begin{example}[Sup-generating and inf-generating DMNN]
	\normalfont \label{example_MDNN_supGenerating}
	
	Let $\psi \in \Psi_{W}$. Due to the canonical \added{sup} \deleted{inf} and \deleted{sup}\added{inf}-generating decomposition of $\psi$ (cf. Proposition \ref{prop_canonical_decomposition}), it can be represented by a DMNN with a vertex for each element in $\boldsymbol{B}_{W}(\psi)$ \added{or $\boldsymbol{B}_{W}(\psi^{\star})$}, that applies the respective \added{sup} \deleted{inf} and \deleted{sup}\added{inf}-generating operator to the input $X \in \mathcal{P}(E)$, and \added{a supremum or infimum} \deleted{an infimum or supremum} vertex that combines their output, respectively. The \added{sup} \deleted{inf} and \deleted{sup}\added{inf}-generating DMNN are represented in Figure \ref{fig_supGenDMNN}. This example illustrates that any t.i. and locally defined set operator can be non-trivially represented by a DMNN.
	
	\begin{figure}[ht]
		\centering
		\begin{tikzpicture}[scale=0.75]
			\node[circle,draw=black,minimum size=30pt,line width=0.5mm] (ei) at (0,0) {$X$};
			
			\node[circle,draw=black,minimum size=30pt,line width=0.5mm] (e1) at (2.5,2.5) {$\lambda_{\mathscr{I}_{1}}^{W}$};
			\node[circle,draw=black,minimum size=30pt,line width=0.5mm] (e2) at (2.5,0) {$\lambda_{\mathscr{I}_{2}}^{W}$};
			\node[circle,draw=black,minimum size=30pt,line width=0.5mm] (e3) at (2.5,-2.5) {$\lambda_{\mathscr{I}_{3}}^{W}$};
			
			\node[circle,draw=black,minimum size=30pt,line width=0.5mm] (esup) at (5,0) {$\bigvee$};
			
			\node[circle,draw=black,minimum size=30pt,line width=0.5mm] (eo) at (7.5,0) {$\psi_{\mathcal{G}}$};
			
			\node (a) at (3.75,-3.75) {(a)};
			
			\node[circle,draw=black,minimum size=30pt,line width=0.5mm] (ei2) at (10,0) {$X$};
			
			\node[circle,draw=black,minimum size=30pt,line width=0.5mm] (e12) at (12.5,2.5) {$\mu_{\mathscr{I}_{1}^{\star t}}^{W^{t}}$};
			\node[circle,draw=black,minimum size=30pt,line width=0.5mm] (e22) at (12.5,0) {$\mu_{\mathscr{I}_{2}^{\star t}}^{W^{t}}$};
			\node[circle,draw=black,minimum size=30pt,line width=0.5mm] (e32) at (12.5,-2.5) {$\mu_{\mathscr{I}_{3}^{\star t}}^{W^{t}}$};
			
			\node[circle,draw=black,minimum size=30pt,line width=0.5mm] (esup2) at (15,0) {$\bigwedge$};
			
			\node[circle,draw=black,minimum size=30pt,line width=0.5mm] (eo2) at (17.5,0) {$\psi_{\mathcal{G}}$};
			
			\node (b) at (13.75,-3.75) {(b)};
			
			\draw[->,line width=0.5mm] (ei) -- (e1);
			\draw[->,line width=0.5mm] (ei) -- (e2);
			\draw[->,line width=0.5mm] (ei) -- (e3);
			\draw[->,line width=0.5mm] (e1) -- (esup);
			\draw[->,line width=0.5mm] (e2) -- (esup);
			\draw[->,line width=0.5mm] (e3) -- (esup);
			\draw[->,line width=0.5mm] (esup) -- (eo);
			
			\draw[->,line width=0.5mm] (ei2) -- (e12);
			\draw[->,line width=0.5mm] (ei2) -- (e22);
			\draw[->,line width=0.5mm] (ei2) -- (e32);
			\draw[->,line width=0.5mm] (e12) -- (esup2);
			\draw[->,line width=0.5mm] (e22) -- (esup2);
			\draw[->,line width=0.5mm] (e32) -- (esup2);
			\draw[->,line width=0.5mm] (esup2) -- (eo2);
		\end{tikzpicture}
		\caption{\footnotesize (a) Sup-generating and (b) inf-generating DMNN representation of a $\psi \in \Psi_{W}$ with $\boldsymbol{B}_{W}(\psi) = \{\mathscr{I}_{1},\mathscr{I}_{2},\mathscr{I}_{3}\}$ \added{and $\boldsymbol{B}_{W}(\psi) = \{\mathscr{I}_{1}^{\star},\mathscr{I}_{2}^{\star},\mathscr{I}_{3}^{\star}\}$}.} \label{fig_supGenDMNN}
	\end{figure}
	
	\hfill$\blacksquare$
\end{example}

\begin{example}[Alternate-sequential filters DMNN]
	\normalfont \label{example_ASF}
	
	The DMNN are specially useful to represent operators that can be decomposed sequentially as the composition of set operators. A special class of such operators are the alternate-sequential filters (ASF) which are defined as the interleaved composition of openings and closings. Fixed a structural element $B \in \mathcal{P}(E)$, the opening $\gamma_{B}$ and the closing $\phi_{B}$ are the set operators defined, respectively, as
	\begin{linenomath}
		\begin{align}
			\label{formula_open_close}
			\gamma_{B} = \delta_{B}\epsilon_{B} & & \phi_{B} = \epsilon_{B}\delta_{B},
		\end{align}
	\end{linenomath}
	that are, respectively, an erosion followed by a dilation and a dilation followed by an erosion. Openings and closings are increasing and idempotent set operators. The openings are anti-extensive, i.e., $\gamma_{B}(X) \subseteq X$, and the closings are extensive, i.e., $X \subseteq \phi_{B}(X)$. See \cite[Section~5.4]{serra1983image} for more details.
	
	Let $\gamma_{B_{1}},\phi_{B_{1}},\dots,\gamma_{B_{n}},\phi_{B_{n}}$ be a sequence of $n$ openings and $n$ closings. The set operator $\psi$ given by
	\begin{linenomath}
		\begin{equation*}
			\psi = \phi_{B_{n}}\gamma_{B_{n}} \cdots \phi_{B_{1}}\gamma_{B_{1}}
		\end{equation*}
	\end{linenomath}
	is called an alternate-sequential filter. Figure \ref{fig_ASF} presents three DMNN representations of $\psi$ in which, respectively, the vertices are a composition of an opening and a closing, the vertices are openings or closings, and the vertices are erosions or dilations.
	
	\begin{figure}[ht]
		\centering
		\begin{tikzpicture}[scale=1]
			\node[circle,draw=black,minimum size=30pt,line width=0.5mm] (ei) at (0,0) {$X$};
			\node[circle,draw=black,minimum size=30pt,line width=0.5mm] (e1) at (4,0) {$\gamma\phi_{B_{1}}$};
			\node[circle,draw=black,minimum size=30pt,line width=0.5mm] (e2) at (8,0) {$\gamma\phi_{B_{2}}$};
			\node[circle,draw=black,minimum size=30pt,line width=0.5mm] (eo) at (12,0) {$\psi_{\mathcal{G}}$};
			
			\node (blank) at (0,-1) {};
			
			\draw[->,line width=0.5mm] (ei) -- (e1);
			\draw[->,line width=0.5mm] (e1) -- (e2);
			\draw[->,line width=0.5mm] (e2) -- (eo);
		\end{tikzpicture}
		
		\begin{tikzpicture}[scale=1]
			\node[circle,draw=black,minimum size=30pt,line width=0.5mm] (ei) at (0,0) {$X$};
			\node[circle,draw=black,minimum size=30pt,line width=0.5mm] (e1) at (2.5,0) {$\gamma_{B_{1}}$};
			\node[circle,draw=black,minimum size=30pt,line width=0.5mm] (e2) at (5,0) {$\phi_{B_{1}}$};
			\node[circle,draw=black,minimum size=30pt,line width=0.5mm] (e3) at (7.5,0) {$\gamma_{B_{2}}$};
			\node[circle,draw=black,minimum size=30pt,line width=0.5mm] (e4) at (10,0) {$\phi_{B_{2}}$};
			\node[circle,draw=black,minimum size=30pt,line width=0.5mm] (eo) at (12.5,0) {$\psi_{\mathcal{G}}$};
			
			\node (blank) at (0,-1) {};
			
			\draw[->,line width=0.5mm] (ei) -- (e1);
			\draw[->,line width=0.5mm] (e1) -- (e2);
			\draw[->,line width=0.5mm] (e2) -- (e3);
			\draw[->,line width=0.5mm] (e3) -- (e4);
			\draw[->,line width=0.5mm] (e4) -- (eo);
		\end{tikzpicture}
		
		\begin{tikzpicture}[scale=1]
			\node[circle,draw=black,minimum size=30pt,line width=0.5mm] (ei) at (0,0) {$X$};
			\node[circle,draw=black,minimum size=30pt,line width=0.5mm] (e1) at (1.5,0) {$\epsilon_{B_{1}}$};
			\node[circle,draw=black,minimum size=30pt,line width=0.5mm] (e2) at (3,0) {$\delta_{B_{1}}$};
			\node[circle,draw=black,minimum size=30pt,line width=0.5mm] (e3) at (4.5,0) {$\delta_{B_{1}}$};
			\node[circle,draw=black,minimum size=30pt,line width=0.5mm] (e4) at (6,0) {$\epsilon_{B_{1}}$};
			\node[circle,draw=black,minimum size=30pt,line width=0.5mm] (e5) at (7.5,0) {$\epsilon_{B_{2}}$};
			\node[circle,draw=black,minimum size=30pt,line width=0.5mm] (e6) at (9,0) {$\delta_{B_{2}}$};
			\node[circle,draw=black,minimum size=30pt,line width=0.5mm] (e7) at (10.5,0) {$\delta_{B_{2}}$};
			\node[circle,draw=black,minimum size=30pt,line width=0.5mm] (e8) at (12,0) {$\epsilon_{B_{2}}$};
			\node[circle,draw=black,minimum size=30pt,line width=0.5mm] (eo) at (13.5,0) {$\psi_{\mathcal{G}}$};
			
			\draw[->,line width=0.5mm] (ei) -- (e1);
			\draw[->,line width=0.5mm] (e1) -- (e2);
			\draw[->,line width=0.5mm] (e2) -- (e3);
			\draw[->,line width=0.5mm] (e3) -- (e4);
			\draw[->,line width=0.5mm] (e4) -- (e5);
			\draw[->,line width=0.5mm] (e5) -- (e6);
			\draw[->,line width=0.5mm] (e6) -- (e7);
			\draw[->,line width=0.5mm] (e7) -- (e8);
			\draw[->,line width=0.5mm] (e8) -- (eo);
		\end{tikzpicture}
		\caption{\footnotesize Three DMNN representations of a same alternate-sequential filter. The operators $\gamma\phi_{B_{1}}$ and $\gamma\phi_{B_{2}}$ denote, respectively, the compositions $\phi_{B_{1}}\gamma_{B_{1}}$ and $\phi_{B_{2}}\gamma_{B_{2}}$.} \label{fig_ASF}
	\end{figure}
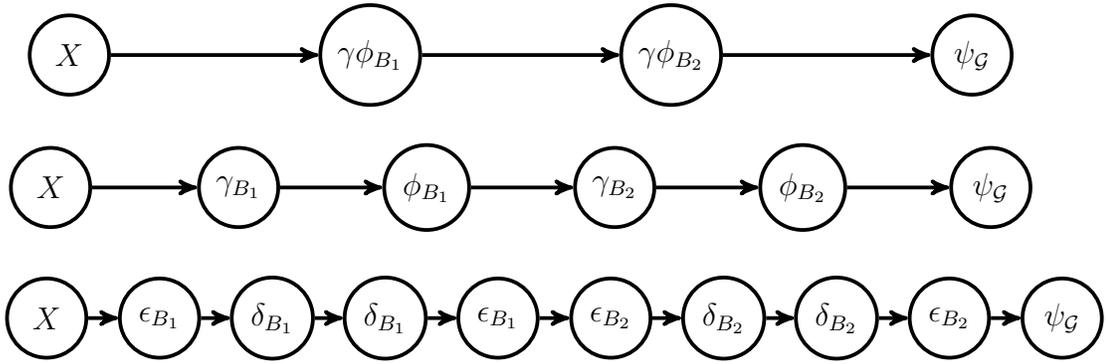 
	
	\hfill$\blacksquare$
\end{example}

\begin{example}[Composition of an ASF and a W-operator]
	\normalfont \label{example_ASFSupGen}
	
	Figure \ref{fig_ASF_SG} presents a DMNN representation of a composition of an ASF $\phi_{A}\gamma_{A}$ and a $\psi \in \Psi_{W}$ with three elements in its basis. This example illustrates how elements of sequential and sup-generating architectures may be combined to obtain more complex DMNN.

	\begin{figure}[ht]
		\centering
		\begin{tikzpicture}[scale=0.75]
			\node[circle,draw=black,minimum size=30pt,line width=0.5mm] (ei) at (-2.5,0) {$X$};
			
			\node[circle,draw=black,minimum size=30pt,line width=0.5mm] (eASF) at (0,0) {$\gamma\phi_{A}$};
			
			\node[circle,draw=black,minimum size=30pt,line width=0.5mm] (e1) at (2.5,2.5) {$\lambda_{\mathscr{I}_{1}}^{W}$};
			\node[circle,draw=black,minimum size=30pt,line width=0.5mm] (e2) at (2.5,0) {$\lambda_{\mathscr{I}_{2}}^{W}$};
			\node[circle,draw=black,minimum size=30pt,line width=0.5mm] (e3) at (2.5,-2.5) {$\lambda_{\mathscr{I}_{3}}^{W}$};
			
			\node[circle,draw=black,minimum size=30pt,line width=0.5mm] (esup) at (5,0) {$\bigvee$};
			
			\node[circle,draw=black,minimum size=30pt,line width=0.5mm] (eo) at (7.5,0) {$\psi_{\mathcal{G}}$};
			
			\draw[->,line width=0.5mm] (ei) -- (eASF);
			\draw[->,line width=0.5mm] (eASF) -- (e1);
			\draw[->,line width=0.5mm] (eASF) -- (e2);
			\draw[->,line width=0.5mm] (eASF) -- (e3);
			\draw[->,line width=0.5mm] (e1) -- (esup);
			\draw[->,line width=0.5mm] (e2) -- (esup);
			\draw[->,line width=0.5mm] (e3) -- (esup);
			\draw[->,line width=0.5mm] (esup) -- (eo);
		\end{tikzpicture}
		\caption{\footnotesize A DMNN representation of the composition of an ASF with structuring element $A \in \mathcal{P}(W)$ and a W-operator $\psi \in \Psi_{W}$ with $\boldsymbol{B}_{W}(\psi) = \{\mathscr{I}_{1},\mathscr{I}_{2},\mathscr{I}_{3}\}$.} \label{fig_ASF_SG}
	\end{figure}
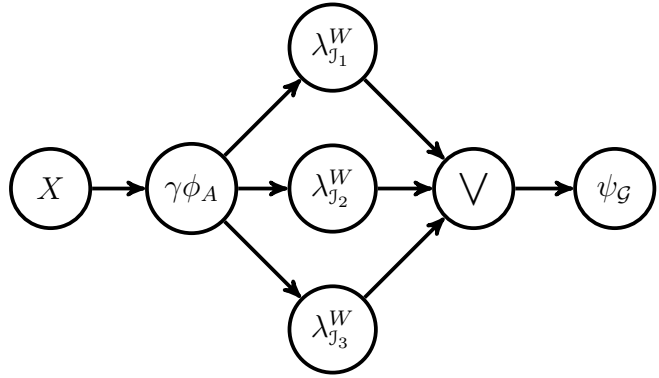
	
	\hfill$\blacksquare$
\end{example}

The examples above illustrate the flexibility of a DMNN in representing any t.i. and locally defined set operator. Such a flexibility allows, for instance, the representation of $\psi$ by a DMNN that is more efficiently computed via a MCG $\mathcal{G}$ or that has fewer \textit{parameters}. Another feature of this flexibility is the possibility of defining a class of set operators through restrictions on the set of all MCG. Indeed, since the mapping $\psi_{\mathcal{G}}$, from the collection of all MCG to $\Omega$, is surjective, restrictions on the MCG structure may generate restrictions on $\Omega$. In the next section, we formalize the restricted classes of MCG, which are the architectures of DMNN.

\subsection{DMNN Architectures}

Let $\mathcal{A} = (\mathcal{V},\mathcal{E},\mathcal{F})$ in which $\mathcal{V}$ is a set of vertices, $\mathcal{E} \subset \mathcal{V} \times \mathcal{V}$ is a set of ordered edges and $\mathcal{F} \subseteq \left(\Omega \cup \{\vee,\wedge\}\right)^{\mathcal{V}}$ is a collection of functions from $\mathcal{V}$ to $\Omega \cup \{\vee,\wedge\}$. We say that $\mathcal{A}$ is the architecture of a DMNN if it satisfies the following condition.  

\vspace{0.25cm}

\noindent \textbf{Axiom of Discrete Morphological Neural Network Architectures} A triple $\mathcal{A} = (\mathcal{V},\mathcal{E},\mathcal{F})$ is a Discrete Morphological Neural Network architecture if, and only if, $(\mathcal{V},\mathcal{E},\mathcal{C})$ is a Morphological Computational Graph for all $\mathcal{C} \in \mathcal{F}$.

\vspace{0.25cm}

A DMNN architecture is actually a collection of MCG with same graph $(\mathcal{V},\mathcal{E})$ and computation map $\mathcal{C}$ in $\mathcal{F}$. Since a DMNN architecture represents a collection of MCG, it actually represents a collection of t.i. and locally defined set operators (cf. Proposition \ref{prop_graph_Woperator}). For an architecture $\mathcal{A} = (\mathcal{V},\mathcal{E},\mathcal{F})$, let
\begin{linenomath}
	\begin{equation*}
		\mathbb{G}(\mathcal{A}) = \left\{\mathcal{G} = (\mathcal{V},\mathcal{E},\mathcal{C}): \mathcal{C} \in \mathcal{F}\right\}
	\end{equation*}
\end{linenomath}
be the collection of MCG generated by $\mathcal{A}$. We say that $\mathcal{G} \in \mathbb{G}(\mathcal{A})$ is a realization of architecture $\mathcal{A}$ and we define
\begin{linenomath}
	\begin{equation*}
		\mathcal{H}(\mathcal{A}) = \left\{\psi \in \Omega: \psi = \psi_{\mathcal{G}}, \mathcal{G} \in \mathbb{G}(\mathcal{A})\right\}
	\end{equation*}
\end{linenomath}
as the collection of t.i. and locally defined set operators that can be realized by $\mathcal{A}$.

We present three examples of DMNN architectures.

\begin{example}[Sup-generating DMNN architecture]
	\normalfont \label{example_supGen_architecture}
	
	Figure \ref{fig_supGenDMNN_architecture} presents two examples of sup-generating architectures, which are characterized by the supremum of sup-generating set operators. In both examples, we consider the graph $(\mathcal{V},\mathcal{E})$ such that $\mathcal{V} = \{\mathfrak{v}_{i},\mathfrak{v}_{1},\mathfrak{v}_{2},\mathfrak{v}_{3},\mathfrak{v}_{\vee},\mathfrak{v}_{o}\}$ and 
	\begin{linenomath}
		\begin{equation*}
			\mathcal{E} = \{(\mathfrak{v}_{i},\mathfrak{v}_{1}),(\mathfrak{v}_{i},\mathfrak{v}_{2}),(\mathfrak{v}_{i},\mathfrak{v}_{3}),(\mathfrak{v}_{1},\mathfrak{v}_{\vee}),(\mathfrak{v}_{2},\mathfrak{v}_{\vee}),(\mathfrak{v}_{3},\mathfrak{v}_{\vee}),(\mathfrak{v}_{\vee},\mathfrak{v}_{o})\}.
		\end{equation*}
	\end{linenomath}	
	The architecture $\mathcal{A}_{1} = (\mathcal{V},\mathcal{E},\mathcal{F}_{1})$ is such that, for all $\mathcal{C} \in \mathcal{F}$, $\mathcal{C}(\mathfrak{v}_{i}) = \mathcal{C}(\mathfrak{v}_{o}) = \iota$, $\mathcal{C}(\mathfrak{v}_{\vee}) = \vee$ and
	\begin{linenomath}
		\begin{align*}
			\mathcal{C}(\mathfrak{v}_{j}) = \lambda_{[A,B]}^{W} \text{ with } A,B \in \mathcal{P}(W), A \subseteq B & & j = 1,2,3,
		\end{align*}
	\end{linenomath}
	which realizes the $W$-operators with at most three elements in their basis, that is,
	\begin{linenomath}
		\begin{equation*}
			\mathcal{H}(\mathcal{A}_{1}) = \left\{\psi \in \Psi_{W}: |\boldsymbol{B}_{W}(\psi)| \leq 3\right\}.
		\end{equation*}
	\end{linenomath}
	Denoting
	\begin{linenomath}
		\begin{equation*}
			\Lambda_{W} = \{\lambda_{[A,B]}^{W}: A,B \in \mathcal{P}(W), A \subseteq B\}
		\end{equation*}
	\end{linenomath}
	as the sup-generating set operators locally defined within $W$, it follows that
	\begin{linenomath}
		\begin{equation*}
			\mathcal{F}_{1} = \{\iota\} \times \Lambda_{W} \times \Lambda_{W} \times \Lambda_{W} \times \{\vee\} \times \{\iota\},
		\end{equation*}
	\end{linenomath}
	so $\mathcal{F}_{1}$ is the Cartesian product of elements in $\mathcal{P}(\Psi_{W}) \cup \{\{\vee\},\{\wedge\}\}$. This means that, under this architecture, the computation of each vertex can be chosen independently of one another.
	
	Denoting
	\begin{linenomath}
		\begin{equation*}
			\Sigma_{W} = \{\epsilon_{A}^{W}: A \in \mathcal{P}(W)\}
		\end{equation*}
	\end{linenomath}
	as the erosions with structural elements in $\mathcal{P}(W)$, the architecture $\mathcal{A}_{2}$ is such that
	\begin{linenomath}
		\begin{equation*}
			\mathcal{F}_{2} = \{\iota\} \times \Sigma_{W} \times \Sigma_{W} \times \Sigma_{W} \times \{\vee\} \times \{\iota\},
		\end{equation*}
	\end{linenomath}
	so it realizes the more restricted class of increasing $W$-operators with at most three elements in their basis, that is,
	\begin{linenomath}
		\begin{equation*}
			\mathcal{H}(\mathcal{A}_{2}) = \left\{\psi \in \Psi_{W}: \psi \text{ is increasing},|\boldsymbol{B}_{W}(\psi)| \leq 3\right\},
		\end{equation*}
	\end{linenomath}
	which can be represented by the supremum of three erosions. Even though $\mathcal{A}_{1}$ and $\mathcal{A}_{2}$ have the same graph, $\mathcal{H}(\mathcal{A}_{2}) \subseteq \mathcal{H}(\mathcal{A}_{1})$ so $\mathcal{A}_{2}$ is a restriction of $\mathcal{A}_{1}$.
	
	\begin{figure}[ht]
		\centering
		\begin{tikzpicture}[scale=0.75]
			\node[circle,draw=black,minimum size=30pt,line width=0.5mm] (ei) at (0,0) {$X$};
			
			\node[circle,draw=black,minimum size=30pt,line width=0.5mm] (e1) at (2.5,2.5) {$\lambda_{\cdot}^{W}$};
			\node[circle,draw=black,minimum size=30pt,line width=0.5mm] (e2) at (2.5,0) {$\lambda_{\cdot}^{W}$};
			\node[circle,draw=black,minimum size=30pt,line width=0.5mm] (e3) at (2.5,-2.5) {$\lambda_{\cdot}^{W}$};
			
			\node[circle,draw=black,minimum size=30pt,line width=0.5mm] (esup) at (5,0) {$\bigvee$};
			
			\node[circle,draw=black,minimum size=30pt,line width=0.5mm] (eo) at (7.5,0) {$\psi_{\cdot}$};
			
			\node (a) at (3.75,-3.75) {$\mathcal{A}_{1}$};
			
			\node[circle,draw=black,minimum size=30pt,line width=0.5mm] (ei2) at (10,0) {$X$};
			
			\node[circle,draw=black,minimum size=30pt,line width=0.5mm] (e12) at (12.5,2.5) {$\epsilon_{\cdot}^{W}$};
			\node[circle,draw=black,minimum size=30pt,line width=0.5mm] (e22) at (12.5,0) {$\epsilon_{\cdot}^{W}$};
			\node[circle,draw=black,minimum size=30pt,line width=0.5mm] (e32) at (12.5,-2.5) {$\epsilon_{\cdot}^{W}$};
			
			\node[circle,draw=black,minimum size=30pt,line width=0.5mm] (esup2) at (15,0) {$\bigvee$};
			
			\node[circle,draw=black,minimum size=30pt,line width=0.5mm] (eo2) at (17.5,0) {$\psi_{\cdot}$};
			
			\node (b) at (13.75,-3.75) {$\mathcal{A}_{2}$};
			
			\draw[->,line width=0.5mm] (ei) -- (e1);
			\draw[->,line width=0.5mm] (ei) -- (e2);
			\draw[->,line width=0.5mm] (ei) -- (e3);
			\draw[->,line width=0.5mm] (e1) -- (esup);
			\draw[->,line width=0.5mm] (e2) -- (esup);
			\draw[->,line width=0.5mm] (e3) -- (esup);
			\draw[->,line width=0.5mm] (esup) -- (eo);
			
			\draw[->,line width=0.5mm] (ei2) -- (e12);
			\draw[->,line width=0.5mm] (ei2) -- (e22);
			\draw[->,line width=0.5mm] (ei2) -- (e32);
			\draw[->,line width=0.5mm] (e12) -- (esup2);
			\draw[->,line width=0.5mm] (e22) -- (esup2);
			\draw[->,line width=0.5mm] (e32) -- (esup2);
			\draw[->,line width=0.5mm] (esup2) -- (eo2);
		\end{tikzpicture}
		\caption{\footnotesize Two DMNN architectures with the same graph that represent sup-generating decompositions. When we exchange a parameter by a dot, we mean that it is free, so $\lambda_{\cdot}^{W}$ represents any sup-generating operator locally defined within $W$ and $\epsilon_{\cdot}^{W}$ represents any erosion locally defined within $W$. Architecture $\mathcal{A}_{1}$ realizes all $\psi \in \Psi_{W}$ with at most 3 elements in the basis, i.e., $|\boldsymbol{B}_{W}(\psi)| \leq 3$, while architecture $\mathcal{A}_{2}$ realizes all $\psi \in \Psi_{W}$ that can be written as the supremum of three erosions, which are actually the increasing operators in $\Psi_{W}$ with at most three elements in their basis. See Figure \ref{fig_supGenDMNN} (a) for a realization of architecture $\mathcal{A}_{1}$.} \label{fig_supGenDMNN_architecture}
	\end{figure}
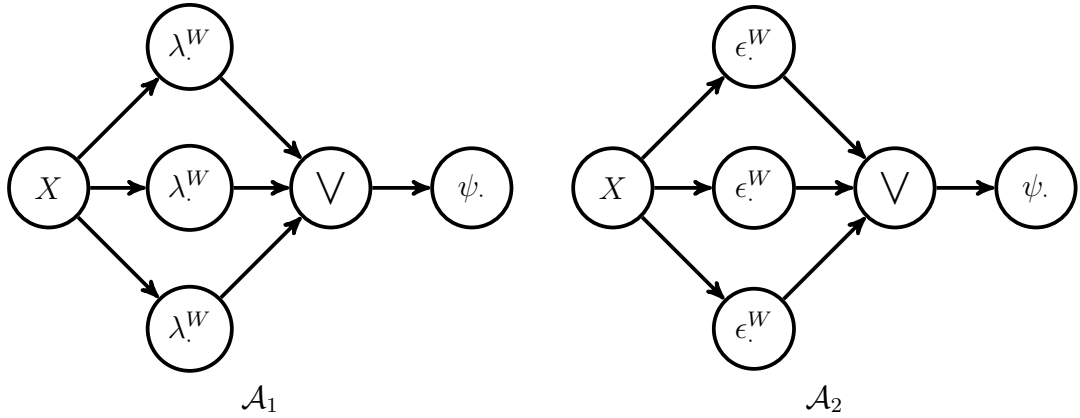
	
	\hfill$\blacksquare$
\end{example}

\begin{example}[Sequential DMNN architecture]
	\normalfont \label{example_sequential_architecture}
	
	Figure \ref{fig_sequential_architecture} presents three examples of sequential DMNN architectures characterized by the absence of supremum and infimum vertices. The architectures $\mathcal{A}_{1}$ and $\mathcal{A}_{2}$ have the same graph $(\mathcal{V},\mathcal{E})$ with $\mathcal{V} = \{\mathfrak{v}_{i},\mathfrak{v}_{1},\mathfrak{v}_{2},\mathfrak{v}_{3},\mathfrak{v}_{4},\mathfrak{v}_{o}\}$ and $\mathcal{E}$ linking the vertices in sequence. For fixed windows $W_{1}, W_{2} \in \mathcal{P}(E)$, the architecture $\mathcal{A}_{1}$ is such that
	\begin{linenomath}
		\begin{equation*}
			\mathcal{F}_{1} = \{\iota\} \times \Psi_{W_{1}} \times \Psi_{W_{1}} \times \Psi_{W_{2}} \times \Psi_{W_{2}} \times \{\iota\}
		\end{equation*}
	\end{linenomath}
	so it realizes the operators in $\Psi_{W}$, with $W = W_{1} \oplus W_{1} \oplus W_{2} \oplus W_{2}$, that can be written as the composition of two operators in $\Psi_{W_{1}}$ composed with two operators in $\Psi_{W_{2}}$, that is,
	\begin{linenomath}
		\begin{equation*}
			\mathcal{H}(\mathcal{A}_{1}) = \left\{\psi \in \Psi_{W}: \psi = \psi^{\prime W_{2}}\psi^{W_{2}}\psi^{\prime W_{1}}\psi^{W_{1}}; \psi^{W_{i}},\psi^{\prime W_{i}} \in \Psi_{W_{i}}, i = 1,2\right\}.
		\end{equation*}
	\end{linenomath}
	This is another example in which the computation of the vertices in the architecture can be chosen independently, since $\mathcal{F}_{1}$ is a Cartesian product.
	
	In the architecture $\mathcal{A}_{2}$, the functions $\mathcal{C}$ in $\mathcal{F}_{2}$ are such that
	\begin{linenomath}
		\begin{equation*}
			\begin{cases}
				\mathcal{C}(\mathfrak{v}_{i}) = \mathcal{C}(\mathfrak{v}_{o}) = \iota\\
				\mathcal{C}(\mathfrak{v}_{1}) = \gamma_{A}, \mathcal{C}(\mathfrak{v}_{2}) = \phi_{A}, \text{ for a } A \in \mathcal{P}(W_{1}) \\
				\mathcal{C}(\mathfrak{v}_{3}) = \gamma_{B}, \mathcal{C}(\mathfrak{v}_{4}) = \phi_{B}, \text{ for a } B \in \mathcal{P}(W_{2})
			\end{cases}
		\end{equation*}
	\end{linenomath}
	so $\mathcal{A}_{2}$ realizes the subclass of the ASF in $\mathcal{H}(\mathcal{A}_{1})$, that is,
	\begin{linenomath}
		\begin{equation*}
			\mathcal{H}(\mathcal{A}_{2}) = \left\{\psi \in \mathcal{H}(\mathcal{A}_{1}): \psi \text{ is an alternate-sequential filter}\right\}.
		\end{equation*}
	\end{linenomath}
	See Figure \ref{fig_ASF} for a realization of this architecture. In this example, $\mathcal{F}_{2}$ is not a Cartesian product, so the computation of its vertices are not independent. Indeed, the structural element of the first and last pairs of opening and closing have the same structural element, so in this example the vertices share a parameter. However, this sub-collection of set operators could also be represented by the architecture $\mathcal{A}_{3}$ in Figure \ref{fig_sequential_architecture} in which the interior vertices compute a composition of an opening and a closing with a same structuring element and the vertices do not share parameters. Even though $\mathcal{H}(\mathcal{A}_{2}) = \mathcal{H}(\mathcal{A}_{3})$, $\mathcal{F}_{3}$ is a Cartesian product, so the computation of the vertices can be chosen independently.
	
	This is another example of architectures with the same graph, but such that $\mathcal{H}(\mathcal{A}_{3}) = \mathcal{H}(\mathcal{A}_{2}) \subseteq \mathcal{H}(\mathcal{A}_{1})$, so $\mathcal{A}_{2}$ is a restriction of $\mathcal{A}_{1}$. This is also an example of a DMNN that can be represented by two equivalent architectures.
	
	\begin{figure}[ht]
		\centering
		\begin{tikzpicture}[scale=1]
			\node[circle,draw=black,minimum size=30pt,line width=0.5mm] (ei) at (0,0) {$X$};
			\node[circle,draw=black,minimum size=30pt,line width=0.5mm] (e1) at (2.5,0) {$\psi^{W_{1}}$};
			\node[circle,draw=black,minimum size=30pt,line width=0.5mm] (e2) at (5,0) {$\psi^{\prime W_{1}}$};
			\node[circle,draw=black,minimum size=30pt,line width=0.5mm] (e3) at (7.5,0) {$\psi^{W_{2}}$};
			\node[circle,draw=black,minimum size=30pt,line width=0.5mm] (e4) at (10,0) {$\psi^{\prime W_{2}}$};
			\node[circle,draw=black,minimum size=30pt,line width=0.5mm] (eo) at (12.5,0) {$\psi_{\mathcal{\cdot}}$};
			
			\node (blank) at (6.25,-1) {$\mathcal{A}_{1}$};
			
			\node[circle,draw=black,minimum size=30pt,line width=0.5mm] (ei2) at (0,-2) {$X$};
			\node[circle,draw=black,minimum size=30pt,line width=0.5mm] (e12) at (2.5,-2) {$\gamma_{\dot{A}}^{W_{1}}$};
			\node[circle,draw=black,minimum size=30pt,line width=0.5mm] (e22) at (5,-2) {$\phi_{\dot{A}}^{W_{1}}$};
			\node[circle,draw=black,minimum size=30pt,line width=0.5mm] (e32) at (7.5,-2) {$\gamma_{\dot{B}}^{W_{2}}$};
			\node[circle,draw=black,minimum size=30pt,line width=0.5mm] (e42) at (10,-2) {$\phi_{\dot{B}}^{W_{2}}$};
			\node[circle,draw=black,minimum size=30pt,line width=0.5mm] (eo2) at (12.5,-2) {$\psi_{\mathcal{\cdot}}$};
			
			\node (blank) at (6.25,-3) {$\mathcal{A}_{2}$};
			
			\node[circle,draw=black,minimum size=30pt,line width=0.5mm] (ei3) at (0,-4) {$X$};
			\node[circle,draw=black,minimum size=30pt,line width=0.5mm] (e13) at (4,-4) {$\gamma\phi_{\dot{A}}^{W_{1}}$};
			\node[circle,draw=black,minimum size=30pt,line width=0.5mm] (e23) at (8,-4) {$\gamma\phi_{\dot{B}}^{W_{2}}$};
			\node[circle,draw=black,minimum size=30pt,line width=0.5mm] (eo3) at (12,-4) {$\psi_{\mathcal{\cdot}}$};
			
			\node (blank) at (6.25,-5) {$\mathcal{A}_{3}$};
			
			\draw[->,line width=0.5mm] (ei) -- (e1);
			\draw[->,line width=0.5mm] (e1) -- (e2);
			\draw[->,line width=0.5mm] (e2) -- (e3);
			\draw[->,line width=0.5mm] (e3) -- (e4);
			\draw[->,line width=0.5mm] (e4) -- (eo);
			
			\draw[->,line width=0.5mm] (ei2) -- (e12);
			\draw[->,line width=0.5mm] (e12) -- (e22);
			\draw[->,line width=0.5mm] (e22) -- (e32);
			\draw[->,line width=0.5mm] (e32) -- (e42);
			\draw[->,line width=0.5mm] (e42) -- (eo2);
			
			\draw[->,line width=0.5mm] (ei3) -- (e13);
			\draw[->,line width=0.5mm] (e13) -- (e23);
			\draw[->,line width=0.5mm] (e23) -- (eo3);
		\end{tikzpicture}
		\caption{\footnotesize Three DMNN with a sequential architecture. The notation $\psi^{W}, \gamma^{W}_{\dot{A}}$ and $\phi^{W}_{\dot{A}}$ refer to any operator in $\Psi_{W}$, any opening in $\Psi_{W}$ and any closing in $\Psi_{W}$, respectively. A letter with a dot on top is used to represent the openings and closings with the same structural element.  We denote by $\gamma\phi_{\dot{A}}^{W_{1}}$ and $\gamma\phi_{\dot{B}}^{W_{2}}$ the compositions $\phi_{\dot{A}}^{W_{1}}\gamma_{\dot{A}}^{W_{1}}$ and $\phi_{\dot{B}}^{W_{2}}\gamma_{\dot{B}}^{W_{2}}$, respectively. The architecture $\mathcal{A}_{1}$ realizes the operators in $\Psi_{W}$ with $W = W_{1} \oplus W_{1} \oplus W_{2} \oplus W_{2}$ that can be written as the composition of four operators in $\Psi_{W_{1}}, \Psi_{W_{1}}, \Psi_{W_{2}}$ and $\Psi_{W_{2}}$, respectively. The architectures $\mathcal{A}_{2}$ and $\mathcal{A}_{3}$ realize the ASF given by the composition of, respectively, an opening and a closing in $\Psi_{W_{1}}$, and an opening and a closing in $\Psi_{W_{2}}$. Each pair of opening and closing share the structural element.} \label{fig_sequential_architecture}
	\end{figure}
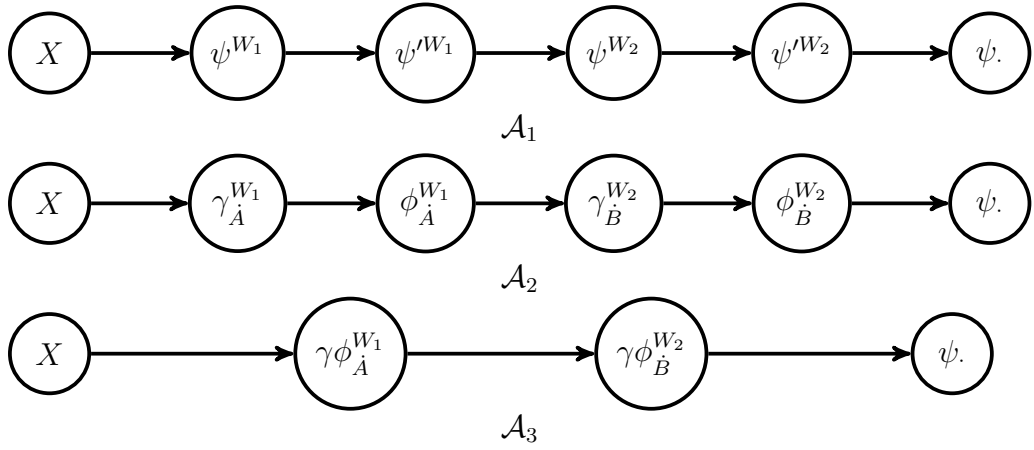 
	
	\hfill$\blacksquare$
\end{example}

\begin{example}[\deleted{Sequential} \added{Composition of} ASF and W-operator]
	\normalfont \label{example_ASFSupGen_LatticeF}
	
	Figure \ref{fig_ASF_SG_architecture} presents an example of an architecture which composes an ASF with a W-operator. This architecture has a graph $(\mathcal{V},\mathcal{E})$ such that $\mathcal{V} = \{\mathfrak{v}_{i},\mathfrak{v}_{1},\mathfrak{v}_{2},\mathfrak{v}_{3},\mathfrak{v}_{4},\mathfrak{v}_{\vee},\mathfrak{v}_{o}\}$ and 
	\begin{linenomath}
		\begin{equation*}
			\mathcal{E} = \{(\mathfrak{v}_{i},\mathfrak{v}_{1}),(\mathfrak{v}_{1},\mathfrak{v}_{2}),(\mathfrak{v}_{1},\mathfrak{v}_{3}),(\mathfrak{v}_{1},\mathfrak{v}_{4}),(\mathfrak{v}_{2},\mathfrak{v}_{\vee}),(\mathfrak{v}_{3},\mathfrak{v}_{\vee}),(\mathfrak{v}_{4},\mathfrak{v}_{\vee}),(\mathfrak{v}_{\vee},\mathfrak{v}_{o})\}.
		\end{equation*}
	\end{linenomath}	
	The architecture is such that, for all $\mathcal{C} \in \mathcal{F}$, $\mathcal{C}(\mathfrak{v}_{i}) = \mathcal{C}(\mathfrak{v}_{o}) = \iota$, $\mathcal{C}(\mathfrak{v}_{\vee}) = \vee$ and
	\begin{linenomath}
		\begin{align*}
			& \mathcal{C}(\mathfrak{v}_{1}) = \phi_{A}\gamma_{A} \text{ with } A \in \mathcal{P}(W) & & \\
			& \mathcal{C}(\mathfrak{v}_{j}) = \lambda_{[A_{j},B_{j}]}^{W} \text{ with } A_{j},B_{j} \in \mathcal{P}(W), A_{j} \subseteq B_{j} & & j = 2,3,4,
		\end{align*}
	\end{linenomath}
	which realizes the composition of a one\added{-}layer ASF and a $W$-operator with at most three elements in its basis. Observe that, denoting
	\begin{linenomath}
		\begin{equation*}
			\Gamma\Phi_{W} = \{\phi_{A}\gamma_{A}: A \in \mathcal{P}(W)\}
		\end{equation*}
	\end{linenomath}
	as the one layer ASF with structuring element in $W$, in this example we have
	\begin{linenomath}
		\begin{equation*}
			\mathcal{F} = \{\iota\} \times \Gamma\Phi_{W} \times \Lambda_{W} \times \Lambda_{W} \times \Lambda_{W} \times \{\vee\} \times \{\iota\},
		\end{equation*}
	\end{linenomath}
	so $\mathcal{F}$ is the Cartesian product of elements in $\mathcal{P}(\Psi_{W}) \cup \{\{\vee\},\{\wedge\}\}$, and the computation of each vertex can be chosen independently.

	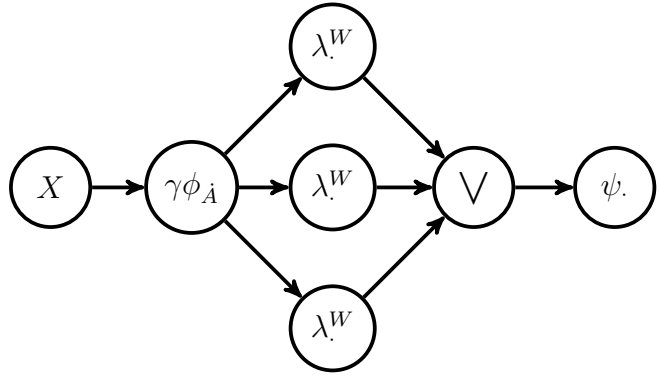
\begin{figure}[ht]
		\centering
		\begin{tikzpicture}[scale=0.75]
			\node[circle,draw=black,minimum size=30pt,line width=0.5mm] (ei) at (-2.5,0) {$X$};
			
			\node[circle,draw=black,minimum size=30pt,line width=0.5mm] (eASF) at (0,0) {$\gamma\phi_{\dot{A}}$};
			
			\node[circle,draw=black,minimum size=30pt,line width=0.5mm] (e1) at (2.5,2.5) {$\lambda_{\cdot}^{W}$};
			\node[circle,draw=black,minimum size=30pt,line width=0.5mm] (e2) at (2.5,0) {$\lambda_{\cdot}^{W}$};
			\node[circle,draw=black,minimum size=30pt,line width=0.5mm] (e3) at (2.5,-2.5) {$\lambda_{\cdot}^{W}$};
			
			\node[circle,draw=black,minimum size=30pt,line width=0.5mm] (esup) at (5,0) {$\bigvee$};
			
			\node[circle,draw=black,minimum size=30pt,line width=0.5mm] (eo) at (7.5,0) {$\psi_{\cdot}$};
			
			\draw[->,line width=0.5mm] (ei) -- (eASF);
			\draw[->,line width=0.5mm] (eASF) -- (e1);
			\draw[->,line width=0.5mm] (eASF) -- (e2);
			\draw[->,line width=0.5mm] (eASF) -- (e3);
			\draw[->,line width=0.5mm] (e1) -- (esup);
			\draw[->,line width=0.5mm] (e2) -- (esup);
			\draw[->,line width=0.5mm] (e3) -- (esup);
			\draw[->,line width=0.5mm] (esup) -- (eo);
		\end{tikzpicture}
		\caption{\footnotesize Example of a DMNN architecture which composes a one layer ASF and a W-operator with at most three intervals in its basis.} \label{fig_ASF_SG_architecture}
	\end{figure}
	
	\hfill$\blacksquare$
\end{example}

\subsection{DMNN are universal representers}

The semantic expressiveness of DMNN allows to represent any collection of t.i. and locally defined set operators by a DMNN. This is the result of the next theorem.

\begin{theorem}
	\label{theorem_universal_representer}
	Let $W \in \mathcal{P}(E)$ be a finite set. For any $\Theta \in \mathcal{P}(\Psi_{W})$, there exists a Discrete Morphological Neural Network architecture $\mathcal{A}_{\Theta}$ such that
	\begin{linenomath}
		\begin{equation*}
			\mathcal{H}(\mathcal{A}_{\Theta}) = \Theta.
		\end{equation*}
	\end{linenomath}
\end{theorem}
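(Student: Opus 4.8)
The plan is to construct $\mathcal{A}_{\Theta}$ by hand, reusing the graph that underlies the trivial DMNN of Example~\ref{example_MDNN_trivial} and letting its single operator vertex range over exactly the members of $\Theta$. First I fix the three-vertex graph $(\mathcal{V},\mathcal{E})$ with $\mathcal{V}=\{\mathfrak{v}_{i},\mathfrak{v},\mathfrak{v}_{o}\}$ and $\mathcal{E}=\{(\mathfrak{v}_{i},\mathfrak{v}),(\mathfrak{v},\mathfrak{v}_{o})\}$, and for each $\psi\in\Theta$ I define the computation map $\mathcal{C}_{\psi}$ by $\mathcal{C}_{\psi}(\mathfrak{v}_{i})=\mathcal{C}_{\psi}(\mathfrak{v}_{o})=\iota$ and $\mathcal{C}_{\psi}(\mathfrak{v})=\psi$. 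Since $W$ is finite, $\Psi_{W}\subseteq\Omega$, so $\mathcal{C}_{\psi}$ takes values in $\Omega\cup\{\vee,\wedge\}$; the triple $(\mathcal{V},\mathcal{E},\mathcal{C}_{\psi})$ is precisely the Morphological Computational Graph of Example~\ref{example_MDNN_trivial} with operator $\psi$, so it satisfies the axioms of Morphological Computational Graphs and, by that example, $\psi_{(\mathcal{V},\mathcal{E},\mathcal{C}_{\psi})}=\psi$.

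Next I set $\mathcal{F}_{\Theta}=\{\mathcal{C}_{\psi}:\psi\in\Theta\}$ and $\mathcal{A}_{\Theta}=(\mathcal{V},\mathcal{E},\mathcal{F}_{\Theta})$. The Axiom of Discrete Morphological Neural Network Architectures is satisfied because it only requires $(\mathcal{V},\mathcal{E},\mathcal{C})$ to be a Morphological Computational Graph for each individual $\mathcal{C}\in\mathcal{F}_{\Theta}$, which was checked in the previous step; in particular nothing forces $\mathcal{F}_{\Theta}$ to be a Cartesian product, and the degenerate case $\Theta=\emptyset$ (hence $\mathcal{F}_{\Theta}=\emptyset$) is covered vacuously. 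It then remains only to unwind the definitions: $\mathbb{G}(\mathcal{A}_{\Theta})=\{(\mathcal{V},\mathcal{E},\mathcal{C}_{\psi}):\psi\in\Theta\}$, and therefore, using $\psi_{(\mathcal{V},\mathcal{E},\mathcal{C}_{\psi})}=\psi$,
\begin{linenomath}
\begin{equation*}
\mathcal{H}(\mathcal{A}_{\Theta})=\{\psi_{\mathcal{G}}:\mathcal{G}\in\mathbb{G}(\mathcal{A}_{\Theta})\}=\{\psi:\psi\in\Theta\}=\Theta,
\end{equation*}
\end{linenomath}
which is the assertion.

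There is essentially no technical obstacle here; the one point to appreciate is that the architecture axiom constrains $\mathcal{F}$ only pointwise, so one is free to take $\mathcal{F}_{\Theta}$ to be exactly the (finite, since $W$ is) list of computation maps producing the desired operators, rather than trying to carve $\Theta$ out of a structured family. A slightly more uniform alternative would start from the sup-generating architecture of Example~\ref{example_MDNN_supGenerating}: because $W$ is finite there is an integer $N$, depending only on $W$, with $|\boldsymbol{B}_{W}(\psi)|\leq N$ for every $\psi\in\Psi_{W}$, so the architecture with $N$ parallel sup-generating vertices feeding a single supremum vertex realizes all of $\Psi_{W}$ (padding unused branches by repeating an interval, and using a degenerate interval $[A,B]$ with $A\nsubseteq B$ for the null operator, cf. Proposition~\ref{prop_canonical_decomposition}), after which one restricts its function collection to the realizations corresponding to $\Theta$. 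Both routes work; I would present the trivial-graph version since it is the shortest and sidesteps the bookkeeping around basis sizes and the null operator.
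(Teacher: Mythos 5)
Your proof is correct, and it takes a genuinely different (and shorter) route than the paper. The paper proves the theorem with a sup-generating architecture: a graph with $2^{|W|-1}$ parallel vertices feeding a single supremum vertex, where each $\psi \in \Theta$ is assigned the computation map that places the sup-generating operators $\lambda^{W}_{\mathscr{I}_{j}}$ of its basis $\boldsymbol{B}_{W}(\psi)$ on the first $|\boldsymbol{B}_{W}(\psi)|$ vertices and pads the remaining ones with the constant operator $\emptyset$; the function collection is then $\mathcal{F}_{\Theta}=\{\mathcal{C}_{\psi}:\psi\in\Theta\}$, exactly as in your construction. So both arguments rest on the same key observation you highlight --- the architecture axiom constrains $\mathcal{F}$ only pointwise, so one may take $\mathcal{F}_{\Theta}$ to be precisely the set of computation maps realizing $\Theta$ --- and differ only in the underlying graph. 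Your trivial three-vertex graph sidesteps the bookkeeping around basis sizes and the null operator and verifies the MCG axioms immediately via Example~\ref{example_MDNN_trivial}. What the paper's choice buys is that every realization has vertices computing only sup-generating operators, which connects the theorem to the canonical decomposition of \cite{banon1991minimal} (as the subsequent remark notes) and to the canonical DMNN of Section~\ref{Sec_CDMNN}, whereas your architecture leaves a vertex computing an arbitrary $\psi\in\Psi_{W}$ and so is not canonical in the sense of Definition~\ref{def_canonical_DMNN}; the theorem as stated does not require this, so your proof is complete. One small caution on your sketched alternative: the class $\Lambda_{W}$ in \eqref{basic_classes} requires $A\subseteq B$, so padding with a degenerate interval $[A,B]$, $A\nsubseteq B$, would step outside it --- the paper instead pads with the constant $\emptyset$ operator directly.
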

\begin{proof}
	Fix a finite set $W \in \mathcal{P}(E)$ and a sub-collection $\Theta \in \mathcal{P}(\Psi_{W})$. We will present a sup-generating DMNN architecture that generates $\Theta$. Consider the architecture $\mathcal{A}_{\Theta} = (\mathcal{V},\mathcal{E},\mathcal{F}_{\Theta})$ with a graph $(\mathcal{V},\mathcal{E})$ such that
	\begin{linenomath}
		\begin{equation*}
			\mathcal{V} = \left\{\mathfrak{v}_{i},\mathfrak{v}_{1},\dots,\mathfrak{v}_{2^{|W|-1}},\mathfrak{v}_{\vee},\mathfrak{v}_{o}\right\}
		\end{equation*}
	\end{linenomath}
	and
	\begin{linenomath}
		\begin{equation*}
			\mathcal{E} = \left\{(\mathfrak{v}_{i},\mathfrak{v}_{j}): j = 1,\dots,2^{|W| - 1}\right\} \bigcup \left\{(\mathfrak{v}_{j},\mathfrak{v}_{\vee}): j = 1,\dots,2^{|W| - 1}\right\} \bigcup \{(\mathfrak{v}_{\vee},\mathfrak{v}_{o})\}.
		\end{equation*}
	\end{linenomath}
	This graph is analogous to that of Figure \ref{fig_supGenDMNN} (a), but with $2^{|W| - 1}$ vertices computing sup-generating set operators instead of three.
	
	For each $\psi \in \Theta$, denote the elements in its basis by
	\begin{linenomath}
		\begin{equation*}
			\boldsymbol{B}_{W}(\psi) = \{\mathscr{I}_{1},\dots,\mathscr{I}_{|\boldsymbol{B}_{W}(\psi)|}\}
		\end{equation*}
	\end{linenomath}
	and observe that $|\boldsymbol{B}_{W}(\psi)| \leq 2^{|W| - 1}$. Consider the function $\mathcal{C}_{\psi}: \mathcal{V} \to \Psi_{W} \cup \{\vee,\wedge\}$ given by $\mathcal{C}_{\psi}(\mathfrak{v}_{i}) = \mathcal{C}_{\psi}(\mathfrak{v}_{o}) = \iota$, $\mathcal{C}_{\psi}(\mathfrak{v}_{\vee}) = \vee$ and
	\begin{linenomath}
		\begin{equation*}
			\mathcal{C}_{\psi}(\mathfrak{v}_{j}) = \begin{cases}
				\lambda_{\mathscr{I}_{j}}^{W}, & \text{ if } 1 \leq j \leq |\boldsymbol{B}_{W}(\psi)|\\
				\emptyset, & \text{ if } j > |\boldsymbol{B}_{W}(\psi)|
			\end{cases}
		\end{equation*}
	\end{linenomath}
	for $j = 1,\dots, 2^{|W| - 1}$, in which $\emptyset$ stands for the constant set operator equal to $\emptyset$. Observe that the MCG $\mathcal{G}_{\psi} = (\mathcal{V},\mathcal{E},\mathcal{C}_{\psi})$ generates a W-operator $\psi_{\mathcal{G}_{\psi}}$ satisfying $\psi_{\mathcal{G}_{\psi}} = \psi$. Considering
	\begin{linenomath}
		\begin{equation*}
			\mathcal{F}_{\Theta} = \left\{\mathcal{C}_{\psi}: \psi \in \Theta\right\}
		\end{equation*}
	\end{linenomath}
	we have that
	\begin{linenomath}
		\begin{equation*}
			\mathcal{H}(\mathcal{A}_{\Theta}) = \left\{\psi_{\mathcal{G}_{\psi}}: \psi \in \Theta\right\} = \Theta
		\end{equation*}
	\end{linenomath}
	as desired.
\end{proof}

\begin{remark}
	We could readily obtain an inf-generating DMNN architecture that represents a given $\Theta \in \mathcal{P}(\Psi_{W})$ following steps analogous to the proof of Theorem \ref{theorem_universal_representer}.
\end{remark}

\begin{remark}
	A sub-collection $\Theta \in \mathcal{P}(\Psi_{W})$ may be represented by other DMNN architectures besides a sup-generating and an inf-generating architecture, as  is the case of the sub-collections represented by the DMNN architectures in Examples \ref{example_sequential_architecture} and \ref{example_ASFSupGen_LatticeF}. Theorem \ref{theorem_universal_representer} established the existence of a DMNN architecture that represents $\Theta$, but not its uniqueness, which clearly does not hold.
\end{remark}

Although the sup-generating representation of a $\Theta \in \mathcal{P}(\Psi_{W})$ via a DMNN derived on the proof of Theorem \ref{theorem_universal_representer} is not in principle a new result, since it is a direct consequence of \cite{banon1991minimal}, the result of Theorem \ref{theorem_universal_representer} is pragmatic. \deleted{since} \added{It} guarantees that any prior information about the set operator that best solves a given problem \added{may be expressed by a DMNN architecture as long as it} \deleted{which} can be translated into a sub-collection $\Theta \in \mathcal{P}(\Psi_{W})$ containing operators with properties believed to be satisfied by the best operator. \deleted{may be expressed by a DMNN architecture.}

\section{Canonical Discrete Morphological Neural Networks}
\label{Sec_CDMNN}

Dilations and erosions are the canonical operators, and the supremum, infimum and complement are the canonical operations, of Mathematical Morphology. Indeed, any W-operator can be represented by combining the supremum, infimum, and complement of compositions of erosions and dilations. This fact is a consequence of Corollary \ref{corollary_canonical_basis} and the representation of sup-generating and inf-generating operators by \eqref{formula_supGen} and \eqref{formula_infGen}, respectively. Furthermore, openings, closings and ASF can also be simply represented by the composition of dilations and erosions (cf. \eqref{formula_open_close}). 

We define the Canonical Discrete Morphological Neural Networks (CDMNN) as those in which the computations are canonical operations or can be simply decomposed \deleted{on} \added{into} canonical operators. For each finite subset $W \in \mathcal{P}(E)$, denote
\begin{linenomath}
	\begin{align}
		\label{basic_classes} \nonumber
		&\Delta_{W} = \left\{\delta_{A}: A \in \mathcal{P}(W)\right\} & & \Sigma_{W} = \left\{\epsilon_{A}: A \in \mathcal{P}(W)\right\}\\
		&\Gamma_{W} = \left\{\gamma_{A}: A \in \mathcal{P}(W)\right\} & & \Phi_{W} = \left\{\phi_{A}: A \in \mathcal{P}(W)\right\}\\ \nonumber
		&\Gamma\Phi_{W} = \left\{\phi_{A}\gamma_{A}: A \in \mathcal{P}(W)\right\}  && \Lambda_{W} = \left\{\lambda_{[A,B]}: [A,B] \subseteq \mathcal{P}(W), A \subseteq B\right\}  \\ \nonumber
		& \mathcal{M}_{W} = \left\{\mu_{[A,B]}: [A,B] \subseteq \mathcal{P}(W), A \subseteq B\right\} & & \mathfrak{N} = \{\nu\}	
	\end{align}
\end{linenomath}
as the class of the dilations, erosions, openings, closings, one layer ASF, sup-generating, inf-generating and complement operators, respectively, with structuring elements in $\mathcal{P}(W)$ or intervals in $\mathcal{P}(\mathcal{P}(W))$. 

\begin{definition}
	\label{def_canonical_DMNN}
	A Discrete Morphological Neural Network architecture $\mathcal{A} = (\mathcal{V},\mathcal{E},\mathcal{F})$ is canonical if the following hold\deleted{s}:
	\begin{itemize}
		\item[\textbf{C1}] There exist two subsets of \deleted{edges} \added{vertices} $\mathcal{V}_{\vee}, \mathcal{V}_{\wedge} \subseteq \mathcal{V}$ such that, for all $\mathcal{C} \in \mathcal{F}$,
		\begin{linenomath}
			\begin{align*}
				\begin{cases}
					\mathcal{C}(\mathfrak{v}) = \vee, \forall \mathfrak{v} \in \mathcal{V}_{\vee}\\
					\mathcal{C}(\mathfrak{v}) = \wedge, \forall \mathfrak{v} \in \mathcal{V}_{\wedge}\\
					\mathcal{C}(\mathfrak{v}) \in \Omega_{\mathfrak{v}} \subseteq \Omega, \forall \mathfrak{v} \in \mathcal{V}_{\Omega},
				\end{cases}
			\end{align*}
		\end{linenomath}
		in which $\Omega_{\mathfrak{v}}$ is a class of t.i. and locally defined operators and $\mathcal{V}_{\Omega} \coloneqq \mathcal{V}\setminus\left(\mathcal{V}_{\vee} \cup \mathcal{V}_{\wedge}\right)$. 
		\item[\textbf{C2}] For every $\mathfrak{v} \in \mathcal{V}_{\Omega}$ there exists a finite subset $W_{\mathfrak{v}} \in \mathcal{P}(E)$ such that
		\begin{linenomath}
			\begin{equation*}
				\Omega_{\mathfrak{v}} \in \{\Delta_{W_{\mathfrak{v}}},\Sigma_{W_{\mathfrak{v}}},\Gamma_{W_{\mathfrak{v}}},\Phi_{W_{\mathfrak{v}}},\Gamma\Phi_{W_{\mathfrak{v}}},\Lambda_{W_{\mathfrak{v}}},\mathcal{M}_{W_{\mathfrak{v}}},\mathfrak{N}\}.
			\end{equation*}
		\end{linenomath}
	\end{itemize}		
\end{definition}

A special case of canonical DMNN is the unrestricted canonical DMNN, in which $\mathcal{F}$ is a Cartesian product of $\{\vee\}, \{\wedge\}$ and the sets defined in \eqref{basic_classes}.

\begin{definition}
	\label{def_unrestricted_DMNN}
	A canonical Discrete Morphological Neural Network architecture $\mathcal{A} = (\mathcal{V},\mathcal{E},\\ \mathcal{F})$ is unrestricted if
	\begin{linenomath}
		\begin{equation*}
			\mathcal{F} = \prod\limits_{\mathfrak{v} \in \mathcal{V}_{\vee}} \{\vee\} \bigtimes \prod\limits_{\mathfrak{v} \in \mathcal{V}_{\wedge}} \{\wedge\} \bigtimes \prod\limits_{\mathfrak{v} \in \mathcal{V}_{\Omega}} \Omega_{\mathfrak{v}} \subseteq \left(\Omega \cup \{\vee,\wedge\}\right)^{\mathcal{V}}.
		\end{equation*}
	\end{linenomath}
\end{definition}

The architectures in Figures \ref{fig_supGenDMNN_architecture} and \ref{fig_ASF_SG_architecture}, and the architectures $\mathcal{A}_{2}$ and $\mathcal{A}_{3}$ in Figure \ref{fig_sequential_architecture}, are canonical since the vertices that compute the supremum are the same over all realizations of the architectures and the morphological operators computed by the remaining vertices are erosions, dilations, openings, closings, one layer ASF, sup-generating or inf-generating. Furthermore, these architectures, except $\mathcal{A}_{2}$ in Figure \ref{fig_sequential_architecture}, are unrestricted since the parameters of the operators, which are structural elements or intervals, are not shared among vertices. The architecture $\mathcal{A}_{1}$ in Figure \ref{fig_sequential_architecture} is not canonical, since there are four vertices that compute general W-operators.

Observe that all vertices of a CDMNN compute either a canonical operation (supremum, infimum, or complement), or an operator that can be decomposed as the supremum, infimum, and complement of erosions and dilations, what justifies the \textit{canonical} nomenclature. Furthermore, any CDMNN could be represented by a MCG in which the vertices compute only erosions, dilations, supremum, infimum or complement, but that may share parameters. This fact is illustrated in Figure \ref{fig_ASF} in which a realization of a sequential CDMNN can be represented by the composition of two ASF that do not share parameters, or by the composition of dilations and erosions that share parameters. We chose the representation of CDMNN by MCG satisfying \textbf{C1} and \textbf{C2} in Definition \ref{def_canonical_DMNN} since it is more concise than a representation via erosions, dilations, supremum, infimum, and complement only, and allows the definition of unrestricted CDMNN when there is no parameter sharing.

\subsection{The lattice associated to a CDMNN architecture}

Fix a canonical DMNN architecture $\mathcal{A} = (\mathcal{V},\mathcal{E},\mathcal{F})$. We will consider a partial order $\leq$ in $\mathcal{F}$ based on the \textit{parameters} that represent the functions $\mathcal{C} \in \mathcal{F}$. More precisely, each $\mathcal{C} \in \mathcal{F}$ can be represented by a vector of sets and intervals that represent the structural elements of its vertices that compute erosions, dilations, openings, closings and one layer ASF, and the intervals of its vertices that compute inf-generating and sup-generating operators. \deleted{and} The collection $\mathcal{F}$ inherits the partial order of the inclusion of sets and intervals.

For example, if each $\mathcal{C} \in \mathcal{F}$ is represented by a vector $(A^{\mathcal{C}}_{1},\dots,A^{\mathcal{C}}_{p}) \in \mathcal{P}(W)^{p}, p \geq 1,$ of sets, then
\begin{linenomath}
	\begin{equation*}
		\mathcal{C}_{1} \leq \mathcal{C}_{2} \iff A_{j}^{\mathcal{C}_{1}} \subseteq A_{j}^{\mathcal{C}_{2}}, \forall j = 1,\dots,p, 
	\end{equation*}
\end{linenomath}
and $\mathcal{F}$ inherits the partial order of the inclusion of its parameters. ASF architectures (cf. $\mathcal{A}_{2}$ and $\mathcal{A}_{3}$ in Figure \ref{fig_sequential_architecture}) and increasing sup-generating architectures (cf. $\mathcal{A}_{2}$ in Figure \ref{fig_supGenDMNN_architecture}) are examples in which the functions in $\mathcal{F}$ may be parameterized by a vector of sets, which represent the structuring elements of erosions, openings, or closings. As another example, if each $\mathcal{C} \in \mathcal{F}$ is represented by a vector $([A^{\mathcal{C}}_{1},B^{\mathcal{C}}_{1}],\dots,[A^{\mathcal{C}}_{p},B^{\mathcal{C}}_{p}]) \in \mathcal{P}(\mathcal{P}(W))^{p}, p \geq 1,$ of intervals, then
\begin{linenomath}
	\begin{equation}
		\label{partial_inclusion_intervals}
		\mathcal{C}_{1} \leq \mathcal{C}_{2} \iff [A_{j}^{\mathcal{C}_{1}},B_{j}^{\mathcal{C}_{1}}] \subseteq [A_{j}^{\mathcal{C}_{2}},B_{j}^{\mathcal{C}_{2}}], \forall j = 1,\dots,p, 
	\end{equation}
\end{linenomath}
and $\mathcal{F}$ inherits the partial order of the inclusion of its parameters. Sup-generating architectures (cf. $\mathcal{A}_{1}$ in Figure \ref{fig_supGenDMNN_architecture}) are examples in which the functions in $\mathcal{F}$ may be parameterized by a vector of intervals, which represent the intervals of the sup-generating operators. Finally, if each $\mathcal{C} \in \mathcal{F}$ is represented by a vector $([A^{\mathcal{C}}_{1},B^{\mathcal{C}}_{1}],\dots,[A^{\mathcal{C}}_{p_{I}},B^{\mathcal{C}}_{p_{I}}],C^{\mathcal{C}}_{1},\dots,C^{\mathcal{C}}_{p_{S}}) \in \mathcal{P}(\mathcal{P}(W))^{p_{I}} \times \mathcal{P}(W)^{p_{S}}, p_{I},p_{S} \geq 1,$ of intervals and sets, then
\begin{linenomath}
	\begin{equation*}
		\mathcal{C}_{1} \leq \mathcal{C}_{2} \iff \begin{cases}
			[A_{j}^{\mathcal{C}_{1}},B_{j}^{\mathcal{C}_{1}}] \subseteq [A_{j}^{\mathcal{C}_{2}},B_{j}^{\mathcal{C}_{2}}], &\forall j = 1,\dots,p_{I}\\
			C_{j}^{\mathcal{C}_{1}} \subseteq C_{j}^{\mathcal{C}_{2}}, &\forall j = 1,\dots,p_{S}. 
		\end{cases}
	\end{equation*}
\end{linenomath}
The architecture in Figure \ref{fig_ASF_SG_architecture} is an example of this case, in which each $\mathcal{C} \in \mathcal{F}$ is represented by $(A,\mathscr{I}_{1},\mathscr{I}_{2},\mathscr{I}_{3})$. The set $A$ is the structuring element of the one layer ASF, and the intervals are the parameters of the three sup-generating operators.

Therefore, for any CDMNN architecture, $\mathcal{F}$ inherits a partial order $\leq$ from that of the structuring elements (sets) of the openings, closings, dilations, erosions and ASF, and from that of the intervals of the sup-generating and inf-generating operators, that it computes.

The poset $(\mathcal{F},\leq)$ is not, in general, isomorphic to $(\mathcal{H}(\mathcal{A}),\leq)$ since the mapping $\psi_{\cdot}$, that maps each function $\mathcal{C} \in \mathcal{F}$ to the operator $\psi_{\mathcal{C}} \in \mathcal{H}(\mathcal{A})$ generated by the MCG $\mathcal{G} = (\mathcal{V},\mathcal{E},\mathcal{C})$, is not injective when two realizations of the architecture $\mathcal{A}$ generate the same set operator. This is the case, for example, of the architecture $\mathcal{A}_{1}$ in Figure \ref{fig_supGenDMNN_architecture} in which an operator $\psi$ is generated by all realizations of $\mathcal{A}_{1}$ where the union of the intervals of the sup-generating operators is equal to its kernel $\mathcal{K}_{W}(\psi)$ (cf. Proposition \ref{prop_canonical_decomposition}). Actually, $\mathcal{F}$ is usually an overparametrization of $\mathcal{H}(\mathcal{A})$.

Although the representation of the operators in $\mathcal{H}(\mathcal{A})$ by $\mathcal{F}$ is not unique and $(\mathcal{F},\leq)$ is not isomorphic to $(\mathcal{H}(\mathcal{A}),\leq)$, it has the advantage that the elements and partial order of $\mathcal{F}$ are completely known a priori, since $\mathcal{F}$ is fixed and the partial order is that of inclusion of sets/intervals, while the elements of $\mathcal{H}(\mathcal{A})$ and its partial order may need to be computed. Therefore, training the DMNN by minimizing an empirical loss in $(\mathcal{F},\leq)$ to obtain a set operator that \textit{fits} empirical data may be computationally cheaper than minimizing this loss in $\mathcal{H}(\mathcal{A})$ directly, since its elements and partial order should be computed (see Remark \ref{remark_FnotH} for more details.). This observation points to the fact that a DMNN is a tool, or a \textit{computational machine}, to learn set operators in a fixed collection $\Theta = \mathcal{H}(\mathcal{A})$ and not a learning model in itself.

The suboptimal minimization of an empirical loss in $(\mathcal{F},\leq)$ could be performed via a greedy algorithm that runs through this poset by locally minimizing the loss. This algorithm depends on the definition of the neighborhoods of $(\mathcal{F},\leq)$.

\subsection{Neighborhoods of $(\mathcal{F},\leq)$}

In view of the discussion above, we define the neighborhood of $\mathcal{C} \in \mathcal{F}$ as the points in $\mathcal{F}$ at a distance one from it. The poset $(\mathcal{F},\leq)$ generates a directed acyclic graph whose vertices are $\mathcal{F}$ and the edges connect every pair of subsequent elements, which are such that $\mathcal{C}_{1} \leq \mathcal{C}_{2}$ and if $\mathcal{C}_{1} \leq \mathcal{C} \leq \mathcal{C}_{2}, \mathcal{C} \in \mathcal{F}$, then either $\mathcal{C}_{1} = \mathcal{C}$ or $\mathcal{C}_{2} = \mathcal{C}$, with orientation from $\mathcal{C}_{1}$ to $\mathcal{C}_{2}$. In this setting, the distance between elements $\mathcal{C}_{1}, \mathcal{C}_{2} \in \mathcal{F}$, with $\mathcal{C}_{1} \leq \mathcal{C}_{2}$, denoted by $d(\mathcal{C}_{1},\mathcal{C}_{2}) = d(\mathcal{C}_{2},\mathcal{C}_{1})$, is the length of the shortest path from $\mathcal{C}_{1}$ to $\mathcal{C}_{2}$.

\begin{definition}
	\label{def_neighborhood}
	The neighborhood in $\mathcal{F}$ of each $\mathcal{C} \in \mathcal{F}$ is defined as
	\begin{linenomath}
		\begin{equation*}
			\mathcal{N}(\mathcal{C}) = \left\{\mathcal{C}^{\prime} \in \mathcal{F}: d(\mathcal{C},\mathcal{C}^{\prime}) = 1\right\}.
		\end{equation*}
	\end{linenomath}
\end{definition}

\added{We emphasize that $\mathcal{C}$ is not an element of $\mathcal{N}(\mathcal{C})$.} We exemplify $\mathcal{F}$ and its neighborhoods for an ASF architecture, a sup-generating architecture and the composition of an ASF and a W-operator.

\begin{example}
	\normalfont \label{example_ASF_LatticeF}
	
	Let $\mathcal{A}$ be the ASF architecture $\mathcal{A}_{3}$ in Figure \ref{fig_sequential_architecture}. In this example, $\mathcal{F}$ is isomorphic to the Boolean lattice $\mathcal{P}(W)^{2}$ and an element $(A,B) \in \mathcal{P}(W)^{2}$ realizes the ASF with two pairs of openings and closings in which the first has structuring element $A$ and the second $B$. Denote by $\mathcal{C}_{A,B} \in \mathcal{F}$ the element of $\mathcal{F}$ associated to $(A,B)$. We have that
	\begin{linenomath}
		\begin{equation*}
			\mathcal{N}(\mathcal{C}_{A,B}) = \left\{\mathcal{C}_{C,B} \in \mathcal{F}: d(C,A) = 1\right\} \cup \left\{\mathcal{C}_{A,C} \in \mathcal{F}: d(C,B) = 1\right\},
		\end{equation*}
	\end{linenomath}
	in which $d(C,A)$ and $d(C,B)$ are the respective distance in $\mathcal{P}(W)$, so a neighbor of $\mathcal{C}_{A,B}$ is obtained by adding or removing one point from $A$ or $B$. This is an example in which the poset $(\mathcal{F},\leq)$ is actually a Boolean lattice, which is partially depicted in Figure \ref{fig_F_ASF}.
	\hfill$\blacksquare$
\end{example}

\begin{example}
	\normalfont \label{example_SupGen_LatticeF}
	
	Let $\mathcal{A}$ be the sup-generating architecture $\mathcal{A}_{1}$ in Figure \ref{fig_supGenDMNN_architecture}. In this example, $\mathcal{F}$ is isomorphic to a subset of the Boolean lattice $\mathcal{P}(\mathcal{P}(W))^{3}$ and each element $(\mathscr{I}_{1},\mathscr{I}_{2},\mathscr{I}_{3})$ of this subset represents the sup-generating MCG given by the supremum of three sup-generating operators with intervals $(\mathscr{I}_{1},\mathscr{I}_{2},\mathscr{I}_{3})$. Denote by $\mathcal{C}_{\mathscr{I}_{1},\mathscr{I}_{2},\mathscr{I}_{3}} \in \mathcal{F}$ the element of $\mathcal{F}$ associated to $(\mathscr{I}_{1},\mathscr{I}_{2},\mathscr{I}_{3})$. We have that
	\begin{linenomath}
		\begin{align*}
			\mathcal{N}(\mathcal{C}_{\mathscr{I}_{1},\mathscr{I}_{2},\mathscr{I}_{3}}) =& \left\{\mathcal{C}_{\mathscr{I},\mathscr{I}_{2},\mathscr{I}_{3}} \in \mathcal{F}: d(\mathscr{I},\mathscr{I}_{1}) = 1\right\} \cup \left\{\mathcal{C}_{\mathscr{I}_{1},\mathscr{I},\mathscr{I}_{3}} \in \mathcal{F}: d(\mathscr{I},\mathscr{I}_{2}) = 1\right\}\\
			& \cup \left\{\mathcal{C}_{\mathscr{I}_{1},\mathscr{I}_{2},\mathscr{I}} \in \mathcal{F}: d(\mathscr{I},\mathscr{I}_{3}) = 1\right\},
		\end{align*}
	\end{linenomath}
	in which $d(\mathscr{I},\mathscr{I}_{j}), j = 1,2,3$, is the respective distance in the poset of the intervals in $\mathcal{P}(\mathcal{P}(W))$, so a neighbor of $\mathcal{C}_{\mathscr{I}_{1}, \mathscr{I}_{2}, \mathscr{I}_{3}}$ is obtained by adding or removing a certain point from one extremity of one of its three intervals.
	\hfill$\blacksquare$
\end{example}

\begin{example}
	\normalfont \label{example_ASF_SG_LatticeF}
	
	Let $\mathcal{A}$ be the architecture in Figure \ref{fig_ASF_SG_architecture}. In this example, $\mathcal{F}$ is isomorphic to a subset of the Boolean lattice $\mathcal{P}(W) \times \mathcal{P}(\mathcal{P}(W))^{3}$ and each element $(A,\mathscr{I}_{1},\mathscr{I}_{2},\mathscr{I}_{3})$ of this subset realizes the MCG given by the composition of a one layer ASF with structuring element $A$ and the supremum of three sup-generating operators with intervals $(\mathscr{I}_{1},\mathscr{I}_{2},\mathscr{I}_{3})$. Denote by $\mathcal{C}_{A,\mathscr{I}_{1},\mathscr{I}_{2},\mathscr{I}_{3}} \in \mathcal{F}$ the element of $\mathcal{F}$ associated to $(A,\mathscr{I}_{1},\mathscr{I}_{2},\mathscr{I}_{3})$. We have that
	\begin{linenomath}
		\begin{align*}
			\mathcal{N}(\mathcal{C}_{A,\mathscr{I}_{1},\mathscr{I}_{2},\mathscr{I}_{3}}) =& \left\{\mathcal{C}_{B,\mathscr{I}_{1},\mathscr{I}_{2},\mathscr{I}_{3}} \in \mathcal{F}: d(A,B) = 1\right\} \cup \left\{\mathcal{C}_{A,\mathscr{I},\mathscr{I}_{2},\mathscr{I}_{3}} \in \mathcal{F}: d(\mathscr{I},\mathscr{I}_{1}) = 1\right\}\\
			& \cup \left\{\mathcal{C}_{A,\mathscr{I}_{1},\mathscr{I},\mathscr{I}_{3}} \in \mathcal{F}: d(\mathscr{I},\mathscr{I}_{2}) = 1\right\} \cup \left\{\mathcal{C}_{A,\mathscr{I}_{1},\mathscr{I}_{2},\mathscr{I}} \in \mathcal{F}: d(\mathscr{I},\mathscr{I}_{3}) = 1\right\},
		\end{align*}
	\end{linenomath}
	in which the distances are on the respective posets of sets and intervals, so a neighbor of $\mathcal{C}_{A,\mathscr{I}_{1},\mathscr{I}_{2},\mathscr{I}_{3}}$ is obtained by adding or removing one point from the structuring element $A$ or from one extremity of one of its three intervals.
	\hfill$\blacksquare$
\end{example}

\section{Training Canonical Discrete Morphological Neural Networks}
\label{Sec_train}

A training algorithm for a CDMNN should receive an architecture and a sample of pairs of input and target sets, and return a realization of the architecture that well \textit{approximates} the set operation applied to the input sets to obtain the respective target set. For a $N \geq 1$, let
\begin{linenomath}
	\begin{equation*}
		\mathcal{D}_{N} = \left\{(X_{1},Y_{1}),\dots,(X_{N},Y_{N})\right\}
	\end{equation*}
\end{linenomath}
be a sample of $N$ input sets $X$ and target sets $Y$. We assume there exists a finite set $F \in \mathcal{P}(E)$ such that $X_{j}, Y_{j} \subseteq F, j = 1,\dots,N$. For example, if $E = \mathbb{Z}^{2}$ and $X,Y$ represent binary images with $d \times d$ pixels, then we could consider $F = \{-d/2,\dots,d/2\}^{2}$. 

It is assumed that each $Y$ is, apart from a random noise, obtained from $X$ by applying a set operator $\psi^{\star} \in \Omega$. Fixed a DMNN architecture $\mathcal{A}$, the main objective of the learning is to estimate $\psi^{\star}$ by a $\hat{\psi} \in \mathcal{H}(\mathcal{A})$ which, we hope, is such that $\hat{\psi} \approx \psi^{\star}$.

The main paradigm of learning is empirical risk minimization \cite{vapnik1998}, in which $\hat{\psi}$ is obtained by minimizing the empirical loss incurred when $\hat{\psi}(X)$ is employed to approximate $Y$ in sample $\mathcal{D}_{N}$. Common losses in this setting are the absolute loss and the intersection of union loss. The absolute loss is defined as
\begin{linenomath}
	\begin{equation}
		\label{Aerror}
		\ell_{a}(X,Y;\psi) = \frac{\left|\left\{x \in F: x \in \left(Y^{c} \cap \psi(X)\right) \cup \left(Y \cap [\psi(X)]^{c}\right)\right\}\right|}{|F|},
	\end{equation}
\end{linenomath}
that is the proportion of the points in $F$ which are in $Y$ but not in $\psi(X)$ or that are in $\psi(X)$ but not in $Y$. The intersection \added{over} \deleted{of} union (IoU) loss is defined as
\begin{linenomath}
	\begin{equation*}
		\ell_{IoU}(X,Y;\psi) = 1 - \frac{|Y \cap \psi(X)|}{|Y \cup \psi(X)|},
	\end{equation*}
\end{linenomath}
that is the proportion of points in $Y \cup \psi(X)$ that are not in $Y \cap \psi(X)$. The IoU loss is more suitable for form or object detection tasks, while the absolute loss is suitable for general image transformation tasks. When the loss function is not of importance, we denote it simply by $\ell$.

Fix an architecture $\mathcal{A} = (\mathcal{V},\mathcal{E},\mathcal{F})$ and, for each $\mathcal{C} \in \mathcal{F}$, denote by $\psi_{\mathcal{C}} \coloneqq \psi_{\mathcal{G}}$ \deleted{as} the set operator generated by MCG $\mathcal{G} = (\mathcal{V},\mathcal{E},\mathcal{C})$. We define the mean empirical loss of each realization $\mathcal{C} \in \mathcal{F}$ of the DMNN architecture $\mathcal{A}$ as
\begin{linenomath}
	\begin{equation*}
		L_{\mathcal{D}_{N}}(\mathcal{C}) = \frac{1}{N} \sum_{j=1}^{N} \ell(X_{j},Y_{j};\psi_{\mathcal{C}}).
	\end{equation*}
\end{linenomath}

The optimal minimization of $L_{\mathcal{D}_{N}}$ in $\mathcal{F}$ is a highly combinatorial problem whose complexity is exponential on the number $|\mathcal{V}|$ of computational vertices, specially when $\mathcal{F}$ is the Cartesian product of subsets of Boolean lattices, as is the case of unrestricted CDMNN. To circumvent the computational complexity, we propose an iterative algorithm that performs a greedy search of $\mathcal{F}$ and returns a \textit{local minimum} of $L_{\mathcal{D}_{N}}$ in $\mathcal{F}$ that, although it may not be a minimizer of $L_{\mathcal{D}_{N}}$, \added{it} may be suitable for the application at hand. The proposed algorithm is the instantiation of a U-curve algorithm, that was initially proposed to minimize U-shaped functions in Boolean lattices \cite{u-curve3,ucurveParallel,reis2018,u-curve1}. The proposed algorithm is a lattice version of the gradient descent algorithm (GDA) \cite{ruder2016overview}, and we call it the \textit{lattice \deleted{gradient} descent algorithm} (L\deleted{G}DA). As in the GDA, we may add stochasticity to meaningfully decrease the problem complexity and do not get stuck on bad local \deleted{minimums} \added{minima}. 

In the following sections, we present the details of the L\deleted{G}DA and its stochastic version (SL\deleted{G}DA). 

\subsection{Lattice \deleted{gradient} Descent Algorithm for training CDMNN}

The L\deleted{G}DA performs an iterative greedy search of $\mathcal{F}$, at each step jumping to the point in the neighborhood of the current point with the least mean empirical loss, and stopping after a predefined number of steps (epochs). This algorithm is analogous to the gradient descent algorithm to minimize differentiable functions in $\mathbb{R}^{p}, p \geq 1,$ since both algorithms go, at each time, to the direction with the least function value. In the gradient descent algorithm, this direction is opposed to the function gradient, while in the L\deleted{G}DA this direction is a chain of $(\mathcal{F},\leq)$ that passes through the current point that contains its neighbor with least mean empirical loss.

The L\deleted{G}DA \deleted{algorithm} is formalized in Algorithm \ref{A1}. It is initialized at a point $\mathcal{C} \in \mathcal{F}$ and a predetermined number of training epochs is fixed. \deleted{The initial point is stored as the point with minimum empirical loss visited so far.} For each epoch, $\mathcal{C}$ is updated to a point in its neighborhood with the least empirical loss. If this point has an empirical loss lesser than that of the point with minimum loss visited so far, the current point is stored as the minimum. After all epochs, the algorithm returns the visited point with minimum empirical loss.

\begin{algorithm}[ht]
	\centering
	\caption{Lattice \deleted{gradient} Descent Algorithm for training a CDMNN.}
	\label{A1}
	\begin{algorithmic}[1]
		\ENSURE $\mathcal{C} \in \mathcal{F}$, Epochs
		\STATE $L_{min} \gets L_{\mathcal{D}_{N}}(\mathcal{C})$
		\STATE $\widehat{\mathcal{C}} \gets \mathcal{C}$ 
		\FOR{run $\in \{1,\dots,\text{Epochs}\}$}
		\STATE $\mathcal{C} \gets \mathcal{C}^{\prime} \text{ s.t. } \ \mathcal{C}^{\prime} \in \mathcal{N}(\mathcal{C}) \text{ and } L_{\mathcal{D}_{N}}(\mathcal{C}^\prime)  = \min\{L_{\mathcal{D}_{N}}(\mathcal{C}^{\prime\prime}): \mathcal{C}^{\prime\prime} \in \mathcal{N}(\mathcal{C})\}$
		\IF{$L_{\mathcal{D}_{N}}(\mathcal{C}) < L_{min}$}
		\STATE $L_{min} \gets L_{\mathcal{D}_{N}}(\mathcal{C})$
		\STATE $\widehat{\mathcal{C}} \gets \mathcal{C}$ 
		\ENDIF
		\ENDFOR
		\RETURN{$\widehat{\mathcal{C}}$}
	\end{algorithmic}
\end{algorithm}

\begin{remark}
	\label{remark_FnotH}
	On the one hand, since the poset $(\mathcal{F},\leq)$ is known a priori, for any $\mathcal{C} \in \mathcal{F}$ the set $\mathcal{N}(\mathcal{C})$ is known so the complexity of minimizing $L_{\mathcal{D}_{N}}$ in $\mathcal{N}(\mathcal{C})$ should be $|\mathcal{N}(\mathcal{C})| \mathcal{O}(L_{\mathcal{D}_{N}})$, in which $\mathcal{O}(L_{\mathcal{D}_{N}})$ is the computational complexity of $L_{\mathcal{D}_{N}}$. On the other hand, if the L\deleted{G}DA \deleted{algorithm} \deleted{were} \added{was} applied directly on poset $(\mathcal{H}(\mathcal{A}),\leq)$, fixed a $\psi \in \mathcal{H}(\mathcal{A})$, the computation of its neighborhood in $(\mathcal{H}(\mathcal{A}),\leq)$ would be problem-specific and could have a complexity that is not linear on the number of neighbors. Therefore, sub-optimally minimizing the empirical loss in $\mathcal{F}$ via the L\deleted{G}DA should be less computationally complex than doing so in $\mathcal{H}(\mathcal{A})$.
\end{remark}

Although the returned DMNN parameter $\widehat{\mathcal{C}}$ may not be a minimizer of $L_{\mathcal{D}_{N}}$ in $\mathcal{F}$, it may have a low enough mean empirical loss and well approximate the actual minimizer. Furthermore, Algorithm \ref{A1} may be run starting from many initial points $\mathcal{C}$ and lead to distinct local \deleted{minimums} \added{minima} $\widehat{\mathcal{C}}$, with mean empirical loss tending to that of the minimizer as more initial points are considered. We illustrate the L\deleted{G}DA \deleted{algorithm} in a sequential architecture.

\begin{example}[Training an ASF architecture]
	\normalfont \label{example_training_ASF}
	
	Consider the ASF architecture $\mathcal{A}_{3}$ in Figure \ref{fig_sequential_architecture} \deleted{for} \added{in} which $\mathcal{F}$ is isomorphic to the Boolean lattice $\mathcal{P}(W)^{2}$ (cf. Example \ref{example_ASF_LatticeF}). In Figure \ref{fig_F_ASF} we present selected elements of this Boolean lattice when $E = \mathbb{Z}^{2}$ and $W = \{-1,0,1\}^{2}$ is the three by three square centered at the origin of $E$. The number on top of the elements in Figure \ref{fig_F_ASF} is the mean empirical loss in a sample $\mathcal{D}_{N}$ of the set operator generated by the respective realization of the architecture. The colored edges represent the path of the L\deleted{G}DA starting from the empty sets after three epochs. The edges of paths that remove points from the structuring elements were omitted for a better visualization.
	
	In this example, each element in $\mathcal{F}$ has $18$ neighbors, which are obtained by adding or removing a point from one of its two sets, so the complexity of each step of the L\deleted{G}DA is at most eighteen times the complexity of applying the realized set operator to each input point in sample $\mathcal{D}_{N}$ and comparing the result with the respective target set to compute the mean empirical loss.
	
	\begin{figure}[ht]
		\centering
		\includegraphics[width=\textwidth]{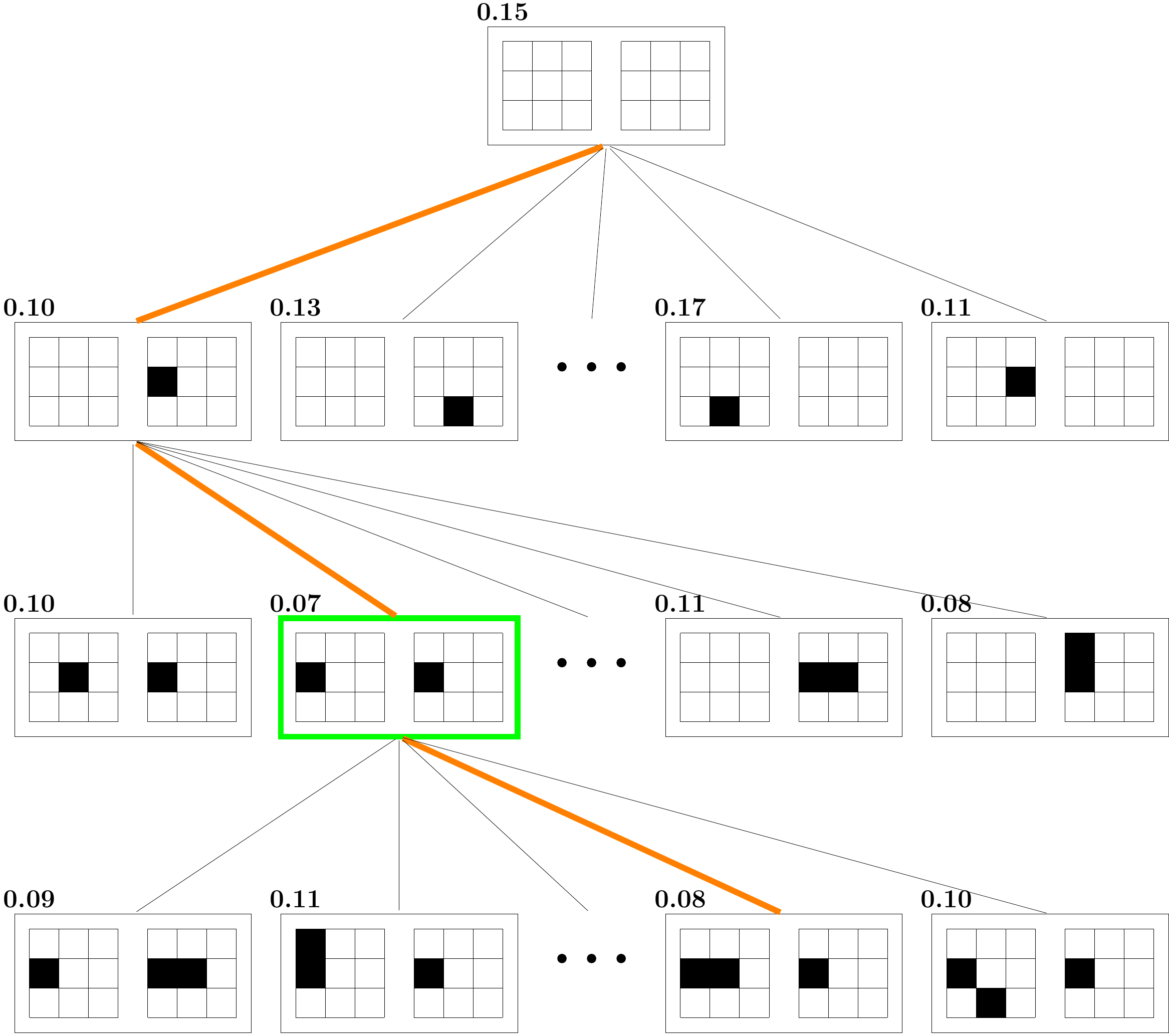}
		\caption{\footnotesize Selected elements in the lattice $(\mathcal{F},\leq)$ of the ASF architecture $\mathcal{A}_{3}$ in Figure \ref{fig_sequential_architecture} when $E = \mathbb{Z}^{2}$ and $W = \{-1,0,1\}^{2}$. The number on top of the elements is the mean empirical loss of the set operator realized by them, and the colored edges represent the path of the L\deleted{G}DA starting from the empty sets after three epochs. The edges pointing to sets obtained by removing a point from a set were omitted for a better visualization. The returned element $\hat{\mathcal{C}}$ is outlined in green.} \label{fig_F_ASF}
	\end{figure}	
	\hfill$\blacksquare$
\end{example}

Even though the L\deleted{G}DA does not perform an exhaustive search of $\mathcal{F}$, Algorithm \ref{A1} may be too complex and may not be feasible to solve practical problems. Furthermore, it may get stuck in local \deleted{minimums} \added{minima} or realize a periodical path, visiting the same points multiple times. Indeed, since at each step it jumps to the neighbor with the least empirical loss, it may jump back to the previous point if it is the neighbor with the least empirical loss, and keep jumping between these two points during the remaining epochs. Inspired by the success of stochastic gradient descent algorithms, we propose the Stochastic Lattice \deleted{gradient} Descent Algorithm (SL\deleted{G}DA) to decrease the computational complexity of the L\deleted{G}DA and mitigate the risk of getting stuck at local \deleted{minimums} \added{minima}.

\subsection{Stochastic Lattice \deleted{gradient} Descent Algorithm for training CDMNN}

The L\deleted{G}DA has two computational bottlenecks: the cost of calculating $L_{\mathcal{D}_{N}}(\mathcal{C})$ and the number of neighbors in $\mathcal{N}(\mathcal{C})$. In order to calculate $L_{\mathcal{D}_{N}}(\mathcal{C})$ it is necessary to compute the graph $(\mathcal{V},\mathcal{E},\mathcal{C})$ for all the $N$ inputs $\{X_{1},\dots,X_{N}\}$. Since it may be expensive to compute this graph, specially if the dimensionality of $X$ is too great\footnote{We define the dimensionality of $X$ as the size $|F|$ of $F$.}, the calculation of $L_{\mathcal{D}_{N}}$ may be an \deleted{impeditive} \added{impediment} of Algorithm \ref{A1} for great values of $N$. Moreover, if the number of neighbors of each point $\mathcal{C} \in \mathcal{F}$ is too great, then Algorithm \ref{A1} may not be computable since at each step it should calculate the mean empirical loss of every point in a neighborhood. We propose the SL\deleted{G}DA to circumvent these bottlenecks by considering sample batches to calculate the empirical loss and by sampling neighbors of a point at each batch. We also expect with this algorithm to avoid periodical paths and getting stuck at local \deleted{minimums} \added{minima}.

The SL\deleted{G}DA is formalized in Algorithm \ref{A2}. The initial point $\mathcal{C} \in \mathcal{F}$, a batch size\footnote{We assume that $N/b$ is an integer to easy notation. If this is not the case, the last batch will contain less than $b$ points.} $b$, the number $n$ of neighbors to be sampled at each step, and the number of training epochs is fixed. \deleted{The initial point is stored as the point with minimum empirical loss visited so far.} For each epoch, the sample $\mathcal{D}_{N}$ is randomly partitioned in $N/b$ batches $\{\tilde{\mathcal{D}}^{(1)}_{b},\dots,\tilde{\mathcal{D}}^{(N/b)}_{b}\}$. For each batch $\tilde{\mathcal{D}}^{(j)}_{b}$, $n$ neighbors of $\mathcal{C}$ are sampled and $\mathcal{C}$ is updated to the sampled neighbor with the least empirical loss $L_{\tilde{\mathcal{D}}^{(j)}_{b}}$, that is calculated on the sample batch $\tilde{\mathcal{D}}^{(j)}_{b}$. Observe that $\mathcal{C}$ is updated at each batch, so during an epoch, it is updated $N/b$ times. At the end of each epoch, the empirical loss $L_{\mathcal{D}_{N}}(\mathcal{C})$ of $\mathcal{C}$ on the whole sample $\mathcal{D}_{N}$ is compared with the loss of the point with the least empirical loss visited so far at the end of an epoch, and it is stored as this point if its empirical loss is lesser. After the predetermined number of epochs, the algorithm returns the point with the least empirical loss on the whole sample $\mathcal{D}_{N}$ visited at the end of an epoch.

\begin{algorithm}[ht]
	\centering
	\caption{Stochastic Lattice \deleted{gradient} Descent Algorithm for training CDMNN.}
	\label{A2}
	\begin{algorithmic}[1]
		\ENSURE $\mathcal{C} \in \mathcal{F}, n, b, Epochs$		
		\STATE $L_{min} \gets L_{\mathcal{D}_{N}}(\mathcal{C})$
		\STATE $\widehat{\mathcal{C}} \gets \mathcal{C}$ 
		\FOR{run $\in \{1,\dots,\text{Epochs}\}$}
		\STATE $\{\tilde{\mathcal{D}}^{(1)}_{b},\dots,\tilde{\mathcal{D}}^{(N/b)}_{b}\} \gets \text{SampleBatch}(\mathcal{D}_{N},b)$
		\FOR{$j \in \{1,\dots,N/b\}$}
		\STATE $\tilde{N}(\mathcal{C}) \gets \text{SampleNeighbors}(\mathcal{C},n)$	
		\STATE $\mathcal{C} \gets \mathcal{C}^{\prime} \text{ s.t. } \ \mathcal{C}^{\prime} \in \tilde{N}(\mathcal{C}) \text{ and } L_{\tilde{\mathcal{D}}^{(j)}_{b}}(\mathcal{C}^\prime)  = \min\{L_{\tilde{\mathcal{D}}^{(j)}_{b}}(\mathcal{C}^{\prime\prime}): \mathcal{C}^{\prime\prime} \in \tilde{N}(\mathcal{C})\}$
		\ENDFOR
		\IF{$L_{\mathcal{D}_{N}}(\mathcal{C}) < L_{min}$}
		\STATE $L_{min} \gets L_{\mathcal{D}_{N}}(\mathcal{C})$
		\STATE $\widehat{\mathcal{C}} \gets \mathcal{C}$ 
		\ENDIF
		\ENDFOR
		\RETURN{$\widehat{\mathcal{C}}$}
	\end{algorithmic}
\end{algorithm}

The computational complexity of each epoch of Algorithm \ref{A2} is of order $N/b\left[n \mathcal{O}(L_{\tilde{\mathcal{D}}^{(j)}_{b}})\right]\\ + \mathcal{O}(L_{\mathcal{D}_{N}})$ so it is controlled by the batch size $b$ and the number of sampled neighbors $n$. The SL\deleted{G}DA could be applied to find an initial point for the L\deleted{G}DA, or when the L\deleted{G}DA gets stuck at a local minimum or on a periodical path. Moreover, the L\deleted{G}DA and SL\deleted{G}DA may be combined by, for example, taking sample batches, but considering all neighbors of a point without sampling. In the next section, we illustrate the algorithms on \deleted{an applied} \added{a proof-of-concept} example.

\section{Boundary recognition of digits with noise}
\label{Sec_Applications}

In this section, we apply the SL\deleted{G}DA and L\deleted{G}DA to train CDMNN to recognize the boundary of \added{noisy} \deleted{noised} digits. The algorithms were implemented in \textbf{R} \cite{R} and are available as a \textit{package} at \url{https://github.com/dmarcondes/DMNN}. The basic morphological operators are calculated by the functions in the \textbf{mmand} package \cite{mmand}. The training was performed on a \added{server} \deleted{personal computer} with processor \added{Intel(R) Xeon(R) Platinum 8358 CPU @ 2.60GHz x 16 and 32 GB of RAM} \deleted{13th Gen Intel Core i7-1355U x 12 and 16 GB of ram}. The calculation of the empirical loss of the neighbors in $\mathcal{N}(\mathcal{C})$ or $\tilde{N}(\mathcal{C})$ was parallelized in \added{16} \deleted{12} cores.

The training sample is composed of ten pairs of $56 \times 56$ images, one of each digit from \deleted{0} \added{zero} to \deleted{9} \added{nine}, and the validation sample is composed of another ten pairs of $56 \times 56$ images, one of each digit from \deleted{0} \added{zero} to \deleted{9} \added{nine}. The input images are black and white digits with \added{salt-and-pepper} \deleted{pepper and salt} noise, and the output images are the boundary of the input digit. The input and output images in the validation sample are presented in Figure \ref{fig_SLDA_best_digits}.

We trained ten CDMNN architectures, which we denote by the operators computed in their layers. We use the notation $kSGd$ to mean a layer that computes the supremum of $k$ sup-generating operators locally defined within $W_{d} \coloneqq \{-(d-1)/2,\dots,(d-1)/2\}^2$ and we denote by $ASFd$ a layer that calculates an ASF with structuring element in $\mathcal{P}(W_{d})$. For example, we denote by $16SG3$ the architecture given by the supremum of \deleted{16} \added{sixteen} sup-generating operators locally defined within $W_{3} = \{-1,0,1\}^2$ and by $ASF3-16SG3$ the architecture whose output is the supremum of \deleted{16} \added{sixteen} sup-generating operators locally defined in $W_{3}$ applied to the output of an ASF with structuring element in $\mathcal{P}(W_{3})$. The ten considered architectures are presented in Table \ref{table_res_SLDA_digits} and illustrated in Figure \ref{fig_archs_digits}. In the figure, we omitted the operators parameters and the orientation of the graph, that is always from vertex $X$ to vertex $\psi$, for a better visualization.

\begin{figure}[ht]
	
	\begin{subfigure}{0.475\linewidth}
		\includegraphics[width=\linewidth]{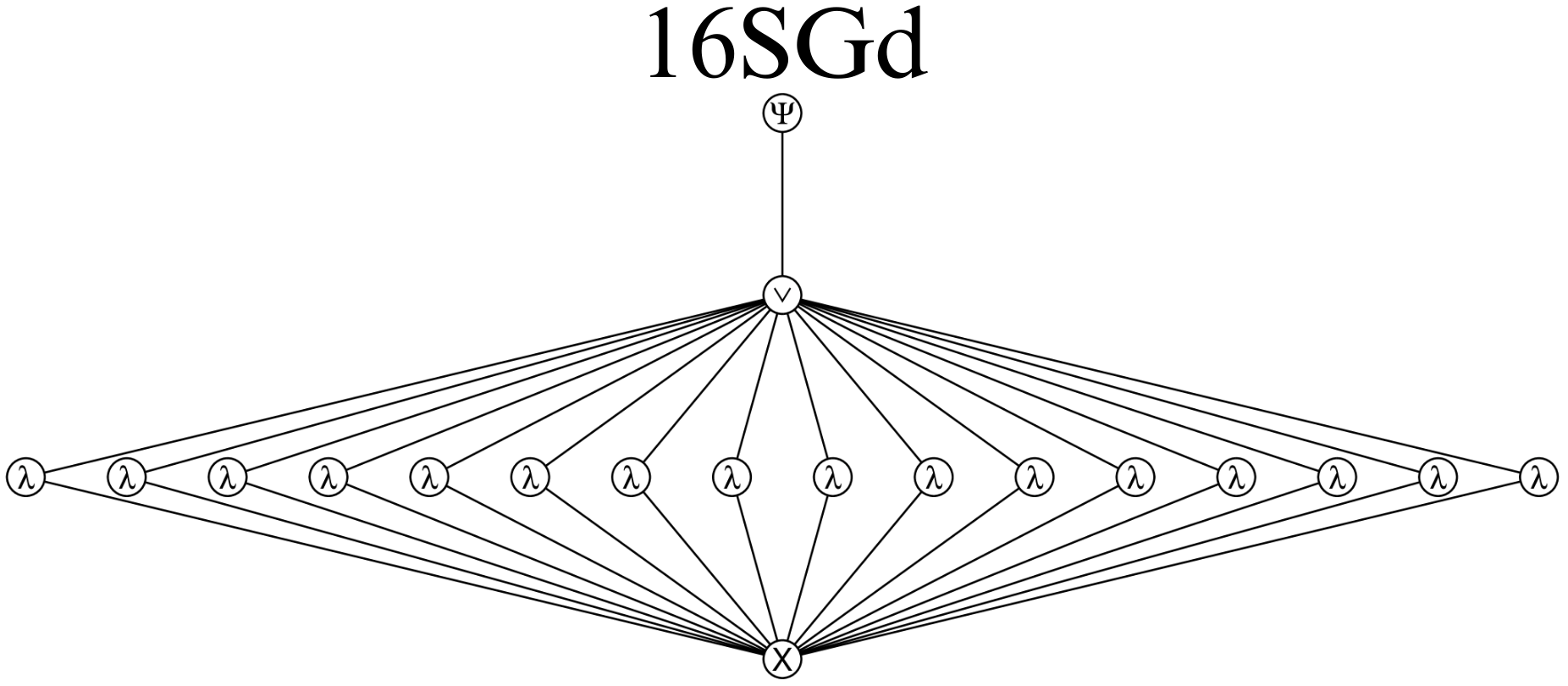}
	\end{subfigure}\hfill
	\begin{subfigure}{0.475\linewidth}
		\includegraphics[width=\linewidth]{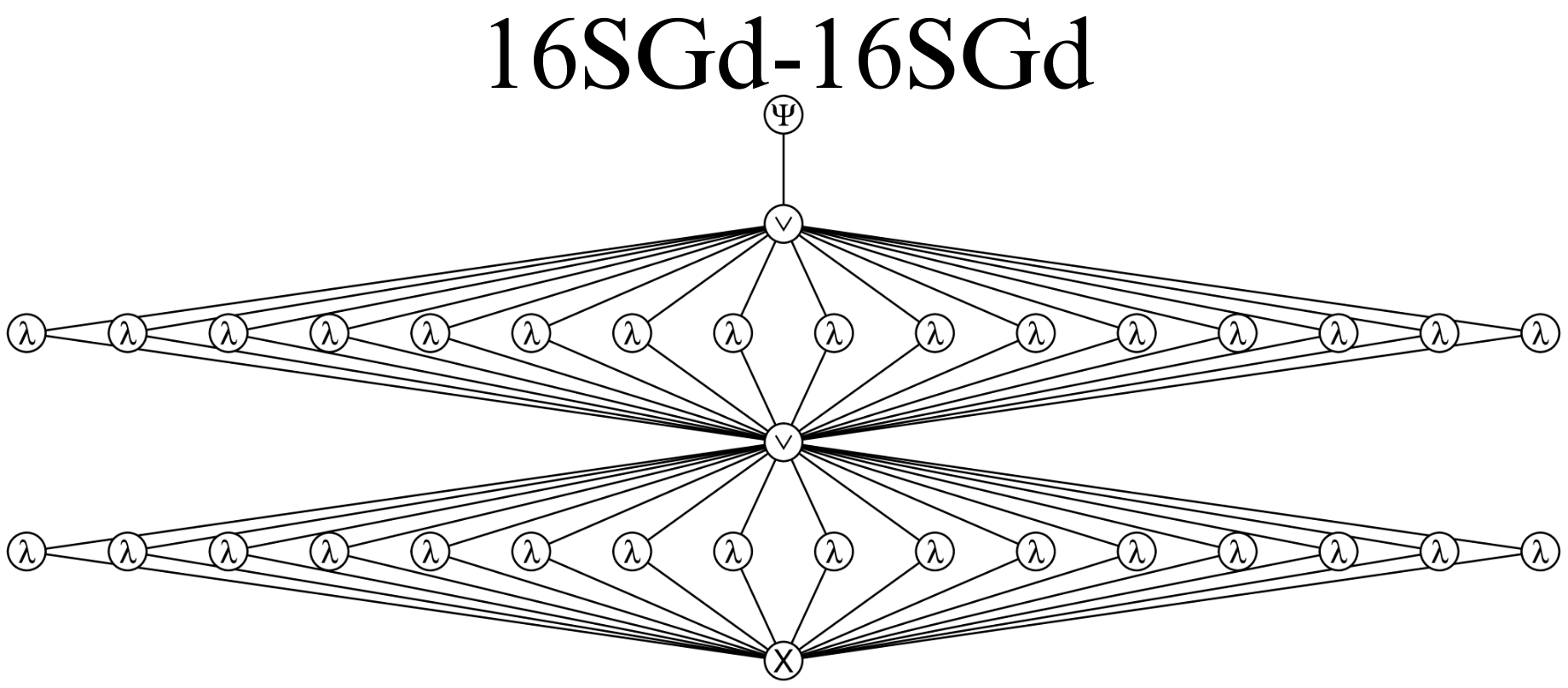}
	\end{subfigure}
	
	\medskip
	\begin{subfigure}{0.475\linewidth}
		\includegraphics[width=\linewidth]{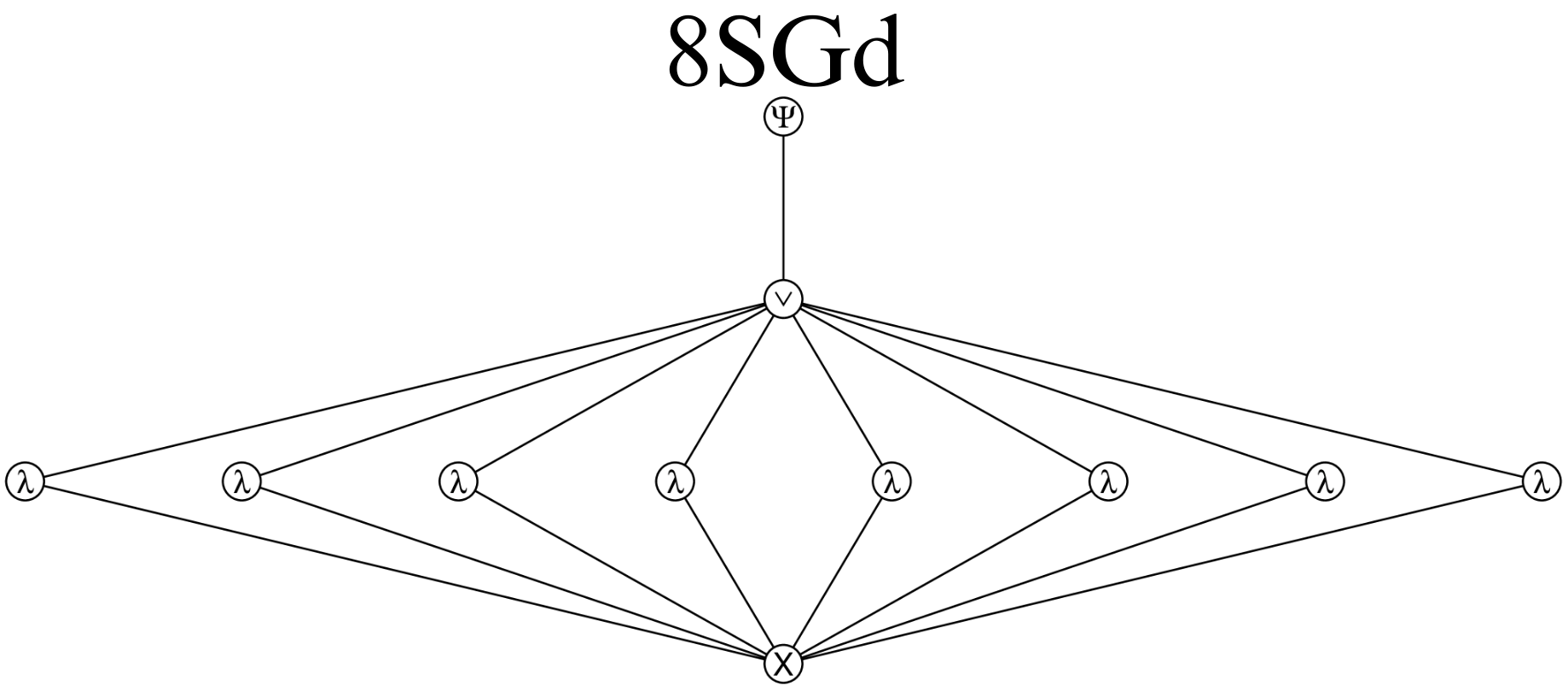}
	\end{subfigure}\hfill
	\begin{subfigure}{0.475\linewidth}
		\includegraphics[width=\linewidth]{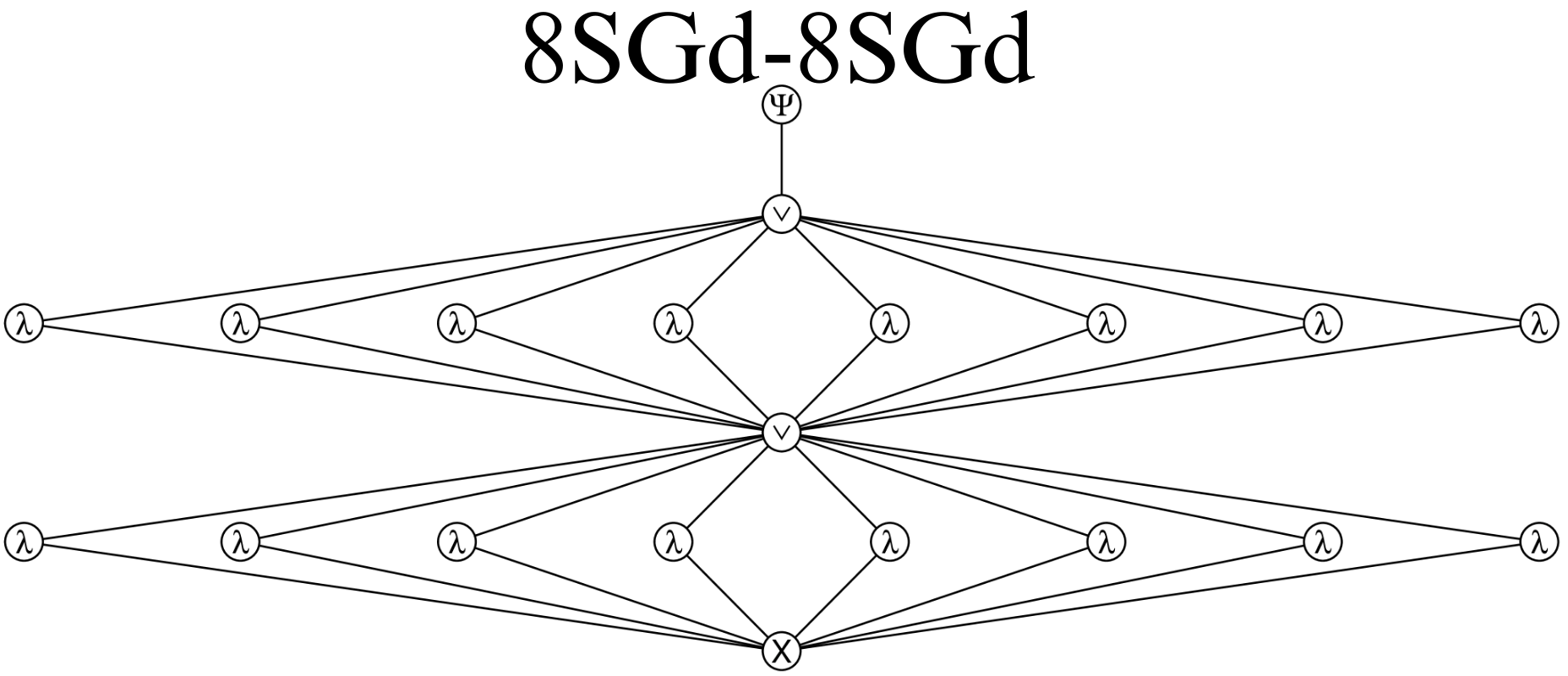}
	\end{subfigure}
	
	\medskip
	\begin{subfigure}{0.475\linewidth}
		\includegraphics[width=\linewidth]{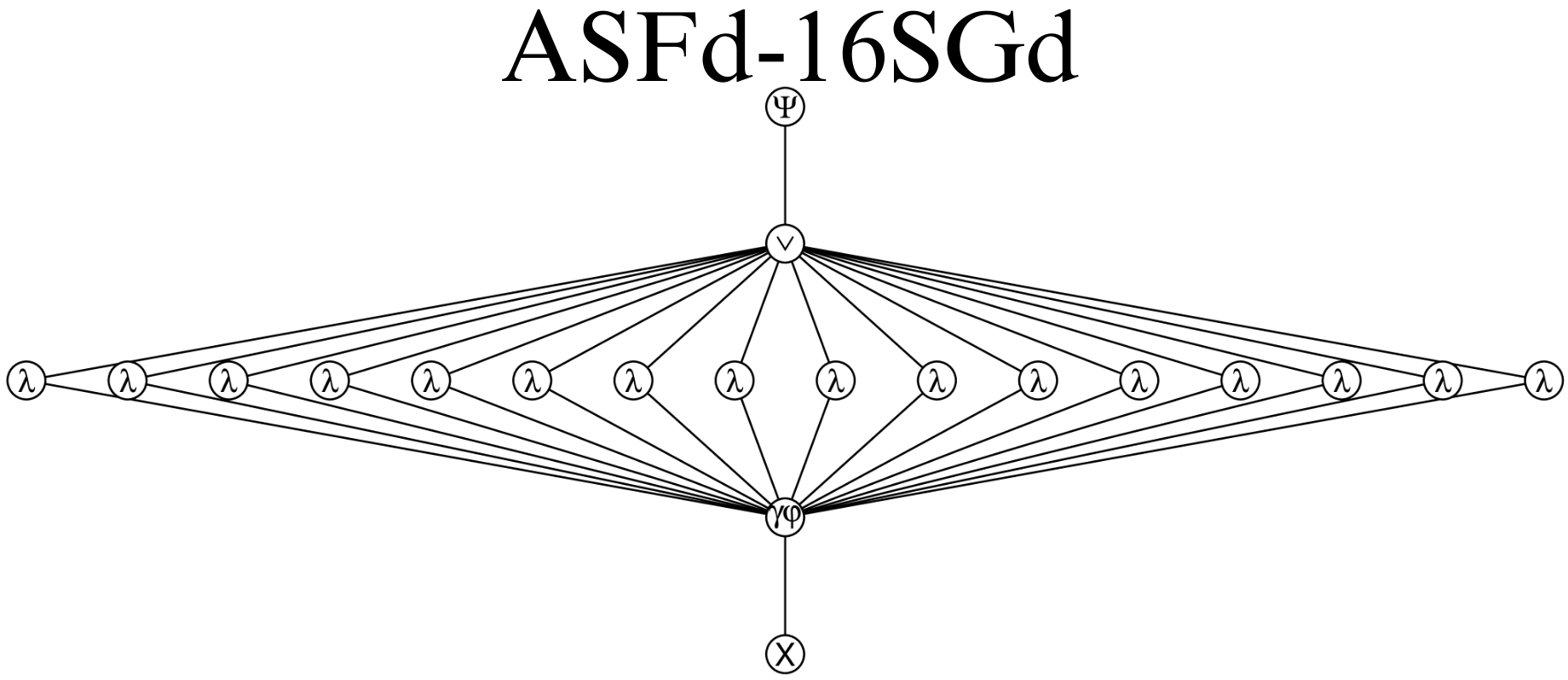}
	\end{subfigure}\hfill
	\begin{subfigure}{0.475\linewidth}
		\includegraphics[width=\linewidth]{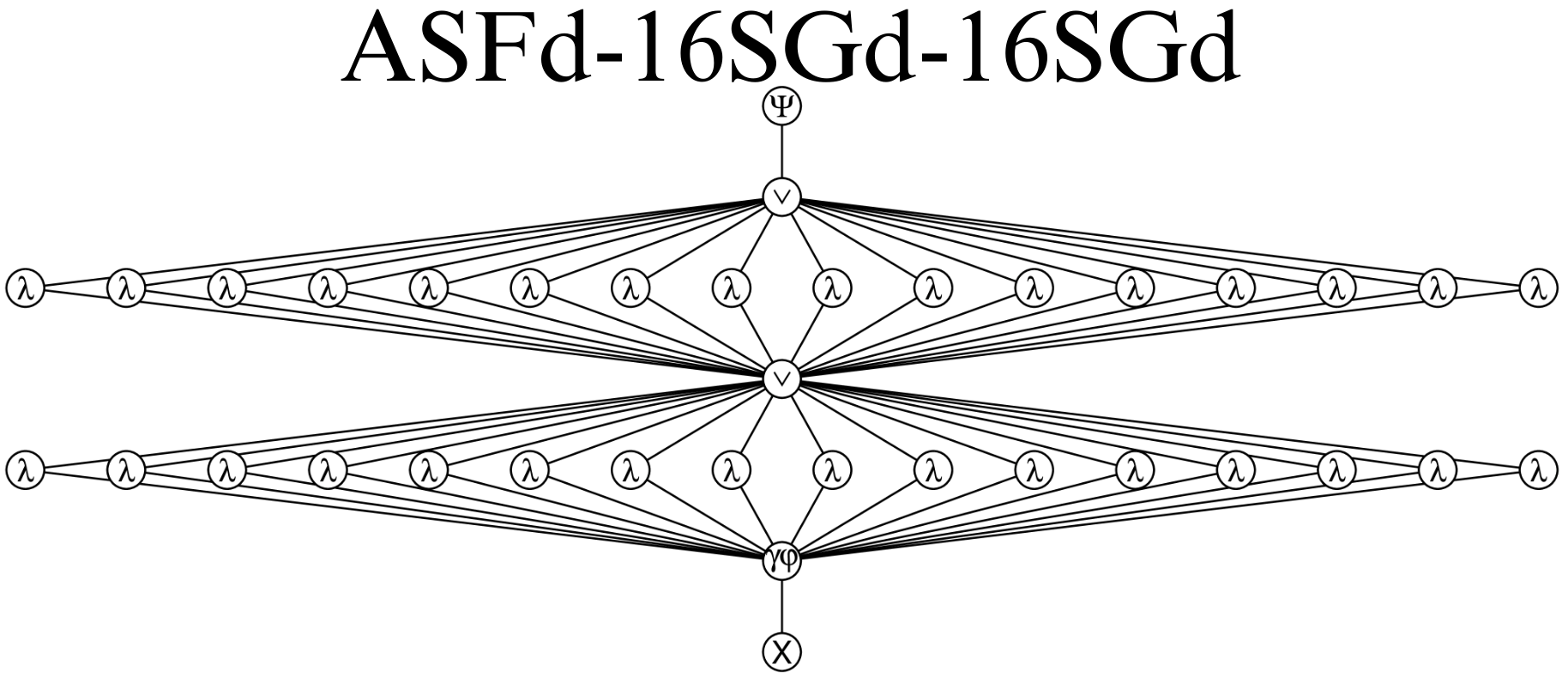}
	\end{subfigure}
	
	\medskip
	\begin{subfigure}{0.475\linewidth}
		\includegraphics[width=\linewidth]{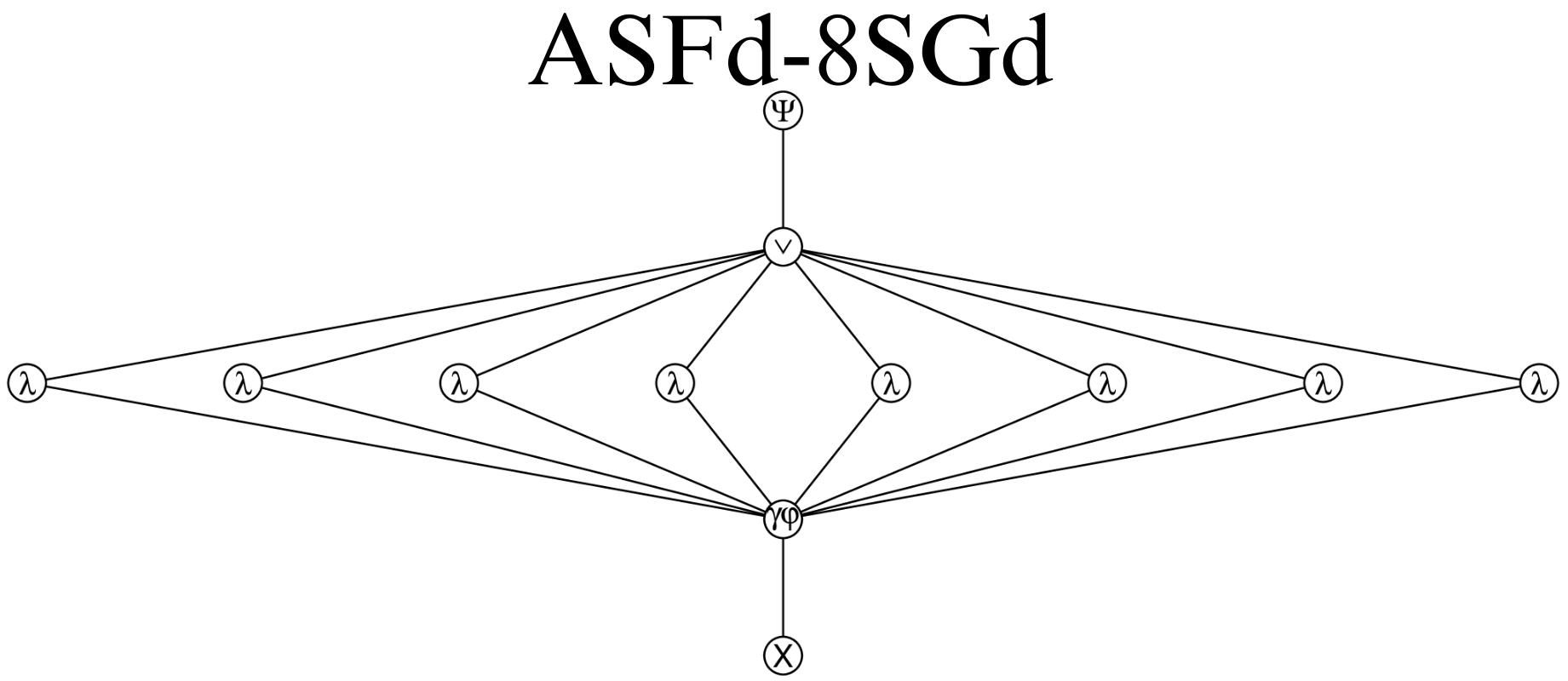}
	\end{subfigure}\hfill
	\begin{subfigure}{0.475\linewidth}
		\includegraphics[width=\linewidth]{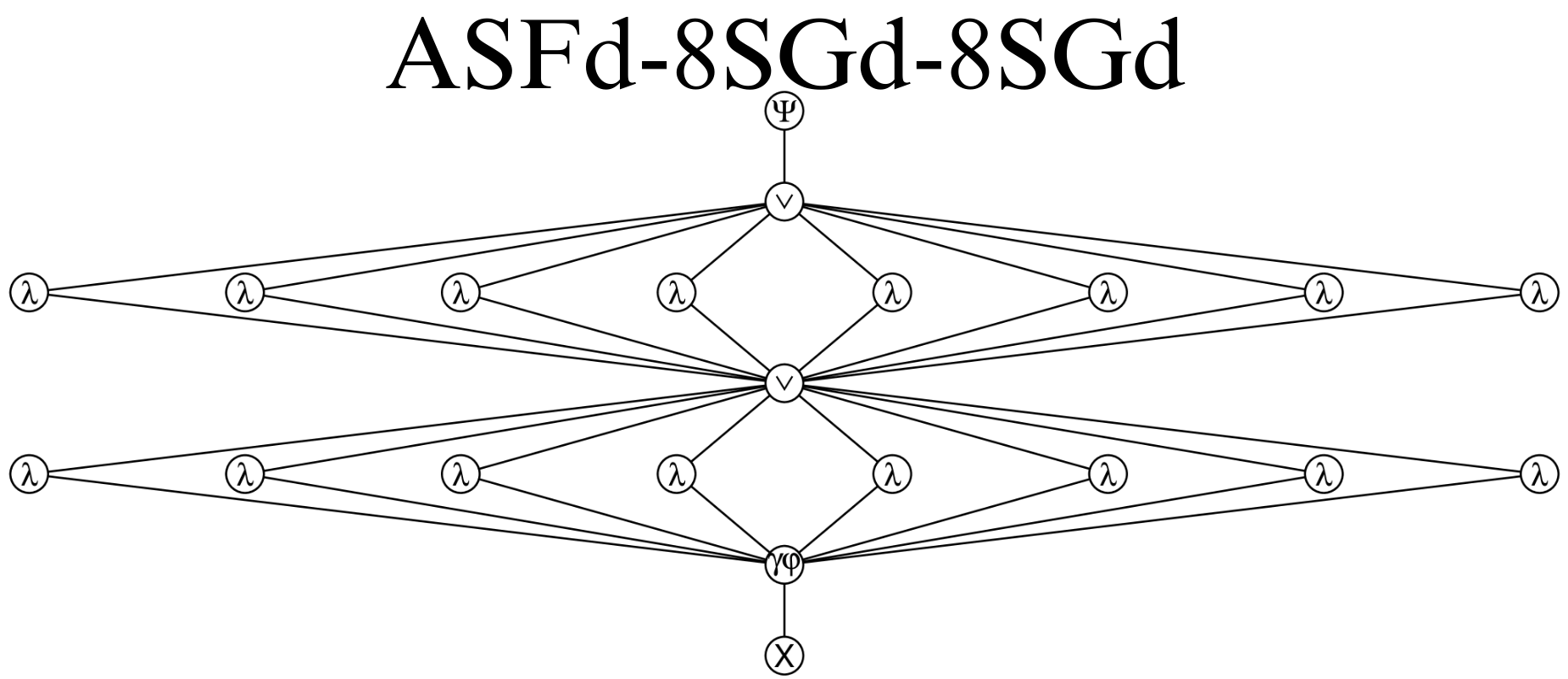}
	\end{subfigure}
	\caption{\footnotesize CDMNN architectures trained to recognize the boundary of digits with noise. The parameters of the operators and the orientation of the edges, that is, from the bottom to the top, were omitted for a better visualization.}
	\label{fig_archs_digits}
\end{figure}

The ten architectures were trained for $1{,}000$ epochs with the SL\deleted{G}DA sampling \deleted{$10$} \added{sixteen} neighbors and considering batch sizes of \deleted{$1, 5$ and $10$} \added{one, five and ten} (whole sample). We considered the IoU loss function and the parameters were initialized so each vertex compute an operator given by a random perturbation of the identity operator. We considered the same initial point of each architecture for the SL\deleted{G}DA with batch sizes of \deleted{$1, 5$ and $10$} \added{one, five and ten}. Finally, when more than one sampled neighbor had \deleted{a} \added{the} same least empirical loss in a step of the algorithm, the next point of the algorithm was uniformly sampled from these neighbors. \added{Each training scenario was replicated ten times starting from ten distinct initial points. We present the minimum, mean and standard deviation of the results among the ten replications.}

The results are presented in Table \ref{table_res_SLDA_digits}. First, we observe that the architectures with higher mean IoU loss on the validation sample are those in which there are operators locally defined within $W_{5}$. In these cases, the class of operators generated by the architecture is more complex and the respective poset $(\mathcal{F},\leq)$ has more points, so more samples and more epochs should be necessary to properly train these more complex architectures. It is also interesting to note that, for example, even though the architectures ASF3-16SG3-16SG3 and ASF3-16SG5 generate W-operators locally defined within a same window, the performance of ASF3-16SG3-16SG3 was significantly better, since it had \added{in average} a lesser validation loss. This can be explained by the fact that the collection of W-operators generated by ASF3-16SG3-16SG3 is more restricted than that generated by ASF3-16SG5, so fewer epochs and samples are necessary to properly train it.

Figure \ref{fig_SLDA_train_digits} presents, for each architecture and batch size, the mean empirical IoU at each epoch, \added{of each repetition and the median over the repetitions}. We see in this figure and in Table \ref{table_res_SLDA_digits} that there were cases in which the minimum was visited in later epochs and others in which it was visited in early epochs. On the one hand, when the batch size equals \deleted{$5$ or $10$} \added{five or ten}, the empirical IoU usually decreases abruptly from time to time until it reaches the minimum visited and, once a low value of the empirical loss is attained, it does not vary significantly in later epochs. On the other hand, when the batch size is \deleted{$1$} \added{one}, the empirical IoU decreases quickly in the early epochs and is highly unstable during later epochs, coming back to high values and not significantly decreasing the empirical loss. \deleted{Indeed, apart from the architecture ASF3-8SG5, the minimum when the batch size is $5$ or $10$ was achieved, with few exceptions, between epochs $700$ and $1,000$, while when the batch size was 1 the minimum was achieved at as early as the epoch $133$.}

The lowest \added{minimum} validation loss \added{among the repetitions} was obtained when training with a batch size of \added{five for all architectures}. \deleted{$1$ for two architectures, and with a batch size of $5$ and $10$ for four architectures each, and} The lowest validation loss overall was obtained \added{by a repetition of} \deleted{for} the architecture \deleted{ASF3-16SG3-16SG3} \added{ASF3-8SG3-8SG3} trained with a batch size of \added{five.} \deleted{1.} The $1,000$ training epochs took from around \deleted{$7$ to $27$} \added{11 to 45} minutes, and training with a batch size of \deleted{$1$} \added{one} took longer in all cases. 

\added{The results in Table \ref{table_res_SLDA_digits} evidence that some stochasticity (batch size of five) aids the algorithm to not get stuck at bad local minima, while too much stochasticity (batch size of one) can be actually detrimental. We observe that the standard deviation over the repetitions is lesser for the batch size one in the majority of cases, so the initial point has less influence on the algorithm in this case. However, considering a batch size of one may lead to a worse local minima than that reached with batch size five, as evidenced by the fact that the minimum validation loss among the repetitions was always attained with a batch size of five. But there is a nuance: even though a batch size five can attain better minima, in average it might attain a local minima with greater validation loss than that attained with batch sizes one and ten.}

\begin{table}[ht]
	\centering
	\caption{\footnotesize Results \added{in the form Minimum - Mean (Standard Deviation) among the ten repetitions} of the training of ten CDMNN architectures to recognize the boundary of digits with noise by the SL\deleted{G}DA sampling \added{$16$} \deleted{10} neighbors and considering sample batches of size $b = 1, b = 5$ and $b = 10$, and the IoU loss. For each architecture and batch size, we present the minimum empirical loss, the empirical loss on the validation sample of the trained CDMNN, the total training time, the time it took to visit the minimum and the epochs it took to visit the minimum.} \label{table_res_SLDA_digits}
	\resizebox{\linewidth}{!}{\begin{tabular}{lllllll}
			\hline
			Arch & b & $L_{\mathcal{D}_{N}}(\widehat{\mathcal{C}})$ & Validation loss & Total time (m) & Time to minimum (m) & Epochs to minimum \\ 
			\hline
			\multirow{3}{*}{16sg3} & 1 & 0.054 - 0.062 (0.005) & 0.069 - 0.073 (0.004) & 20.178 - 33.887 (11.613) & 7.823 - 22.159 (11.372) & 272 - 655.2 (221.684) \\ 
			& 5 & 0.048 - 0.067 (0.019) & 0.066 - 0.083 (0.017) & 11.631 - 22.245 (8.816) & 8.721 - 17.425 (7.701) & 514 - 787.6 (146.957) \\ 
			& 10 & 0.053 - 0.076 (0.013) & 0.071 - 0.093 (0.013) & 11.17 - 21.496 (8.799) & 9.453 - 16.794 (7.023) & 562 - 789.2 (114.652) \\ 
			\hline \multirow{3}{*}{16sg3\_16sg3} & 1 & 0.051 - 0.076 (0.017) & 0.067 - 0.092 (0.018) & 24.473 - 47.18 (18.985) & 6.677 - 25.473 (22.451) & 220 - 504.1 (275.083) \\ 
			& 5 & 0.028 - 0.081 (0.032) & 0.048 - 0.099 (0.03) & 18.604 - 36.492 (14.737) & 14.154 - 29.561 (15.65) & 474 - 791.2 (170.885) \\ 
			& 10 & 0.049 - 0.111 (0.053) & 0.072 - 0.127 (0.05) & 18.432 - 35.154 (14.1) & 18.375 - 33.58 (13.485) & 825 - 959.3 (50.844) \\ 
			\hline \multirow{3}{*}{16sg5} & 1 & 0.134 - 0.194 (0.025) & 0.143 - 0.201 (0.025) & 25.468 - 44.241 (15.362) & 2.104 - 20.745 (21.405) & 75 - 409.8 (310.944) \\ 
			& 5 & 0.087 - 0.107 (0.025) & 0.099 - 0.12 (0.023) & 19.598 - 32.65 (10.055) & 8.189 - 24.132 (9.782) & 374 - 742.4 (203.628) \\ 
			& 10 & 0.103 - 0.133 (0.023) & 0.121 - 0.148 (0.025) & 19.673 - 31.204 (9.657) & 11.528 - 22.207 (10.578) & 462 - 711.7 (234.54) \\ 
			\hline \multirow{3}{*}{8sg3} & 1 & 0.063 - 0.065 (0.001) & 0.072 - 0.074 (0.001) & 18.274 - 28.602 (8.571) & 4.594 - 15.427 (6.268) & 105 - 576.9 (237.392) \\ 
			& 5 & 0.055 - 0.07 (0.012) & 0.072 - 0.082 (0.012) & 8.471 - 16.402 (6.483) & 2.904 - 10.066 (7.259) & 117 - 604.7 (276.737) \\ 
			& 10 & 0.062 - 0.086 (0.02) & 0.073 - 0.095 (0.019) & 7.819 - 15.017 (6.019) & 3.009 - 11.332 (5.723) & 391 - 736.7 (185.475) \\ 
			\hline \multirow{3}{*}{8sg3\_8sg3} & 1 & 0.08 - 0.103 (0.017) & 0.079 - 0.11 (0.02) & 20.621 - 35.227 (11.869) & 2.63 - 13.339 (9.163) & 50 - 424.5 (297.31) \\ 
			& 5 & 0.034 - 0.077 (0.035) & 0.06 - 0.093 (0.031) & 12.192 - 23.075 (9.093) & 4.695 - 19.25 (9.707) & 387 - 821.9 (227.375) \\ 
			& 10 & 0.046 - 0.108 (0.054) & 0.066 - 0.125 (0.05) & 11.956 - 22.134 (8.752) & 7.256 - 16.198 (6.797) & 490 - 752.2 (186.995) \\ 
			\hline \multirow{3}{*}{8sg5} & 1 & 0.171 - 0.241 (0.076) & 0.173 - 0.246 (0.075) & 21.907 - 33.627 (9.656) & 0.853 - 16.397 (14.526) & 21 - 494.9 (389.379) \\ 
			& 5 & 0.133 - 0.177 (0.048) & 0.144 - 0.19 (0.047) & 12.915 - 20.851 (6.806) & 1.701 - 13.759 (6.921) & 104 - 650.8 (296.229) \\ 
			& 10 & 0.144 - 0.211 (0.048) & 0.158 - 0.221 (0.046) & 11.857 - 19.738 (6.282) & 2.223 - 7.486 (3.114) & 185 - 415.3 (230.014) \\ 
			\hline \multirow{3}{*}{asf3\_16sg3\_16sg3} & 1 & 0.047 - 0.078 (0.031) & 0.064 - 0.088 (0.027) & 24.812 - 46.998 (17.418) & 6.766 - 22.113 (13.934) & 118 - 514.9 (328.685) \\ 
			& 5 & 0.036 - 0.095 (0.077) & 0.051 - 0.107 (0.077) & 19.514 - 37.025 (14.135) & 7.46 - 28.158 (14.379) & 347 - 753.7 (256.399) \\ 
			& 10 & 0.049 - 0.128 (0.071) & 0.061 - 0.139 (0.071) & 20.408 - 35.461 (13.193) & 9.389 - 26.275 (10.645) & 463 - 746.4 (190.977) \\ 
			\hline \multirow{3}{*}{asf3\_16sg5} & 1 & 0.155 - 0.202 (0.04) & 0.166 - 0.207 (0.037) & 25.628 - 43.738 (14.173) & 0.359 - 11.664 (13.297) & 8 - 250 (263.799) \\ 
			& 5 & 0.057 - 0.13 (0.043) & 0.071 - 0.139 (0.042) & 20.448 - 32.518 (9.423) & 9.108 - 23.211 (7.316) & 455 - 716.7 (137.711) \\ 
			& 10 & 0.059 - 0.149 (0.058) & 0.072 - 0.158 (0.055) & 19.372 - 30.975 (9.155) & 7.435 - 23.176 (9.326) & 361 - 743.4 (186.098) \\ 
			\hline \multirow{3}{*}{\textbf{asf3\_8sg3\_8sg3}} & 1 & 0.051 - 0.069 (0.012) & 0.061 - 0.078 (0.013) & 21.516 - 35.002 (10.969) & 5.99 - 23.121 (12.349) & 209 - 670.2 (305.001) \\ 
			& \textbf{5} & \textbf{0.036 - 0.068 (0.028)} & \textbf{0.046 - 0.081 (0.025)} & \textbf{12.668 - 23.337 (8.8)} & \textbf{3.951 - 17.977 (8.403)} & \textbf{279 - 769.5 (232.824)} \\ 
			& 10 & 0.07 - 0.145 (0.093) & 0.079 - 0.156 (0.091) & 11.863 - 22.06 (8.242) & 11.039 - 18.389 (5.671) & 637 - 860.9 (144.989) \\ 
			\hline \multirow{3}{*}{asf3\_8sg5} & 1 & 0.119 - 0.214 (0.066) & 0.125 - 0.219 (0.065) & 21.957 - 33.581 (9.231) & 0.261 - 15.937 (12.809) & 12 - 430 (256.373) \\ 
			& 5 & 0.084 - 0.157 (0.042) & 0.093 - 0.165 (0.04) & 13.226 - 21.095 (6.523) & 4.991 - 17.598 (7.722) & 390 - 812.8 (217.689) \\ 
			& 10 & 0.115 - 0.224 (0.101) & 0.128 - 0.231 (0.097) & 12.184 - 19.81 (6.286) & 3.607 - 11.009 (8.381) & 189 - 565.2 (339.365) \\
			\hline
	\end{tabular}}
\end{table}

\begin{figure}[ht]
	\centering
	\includegraphics[width=\linewidth]{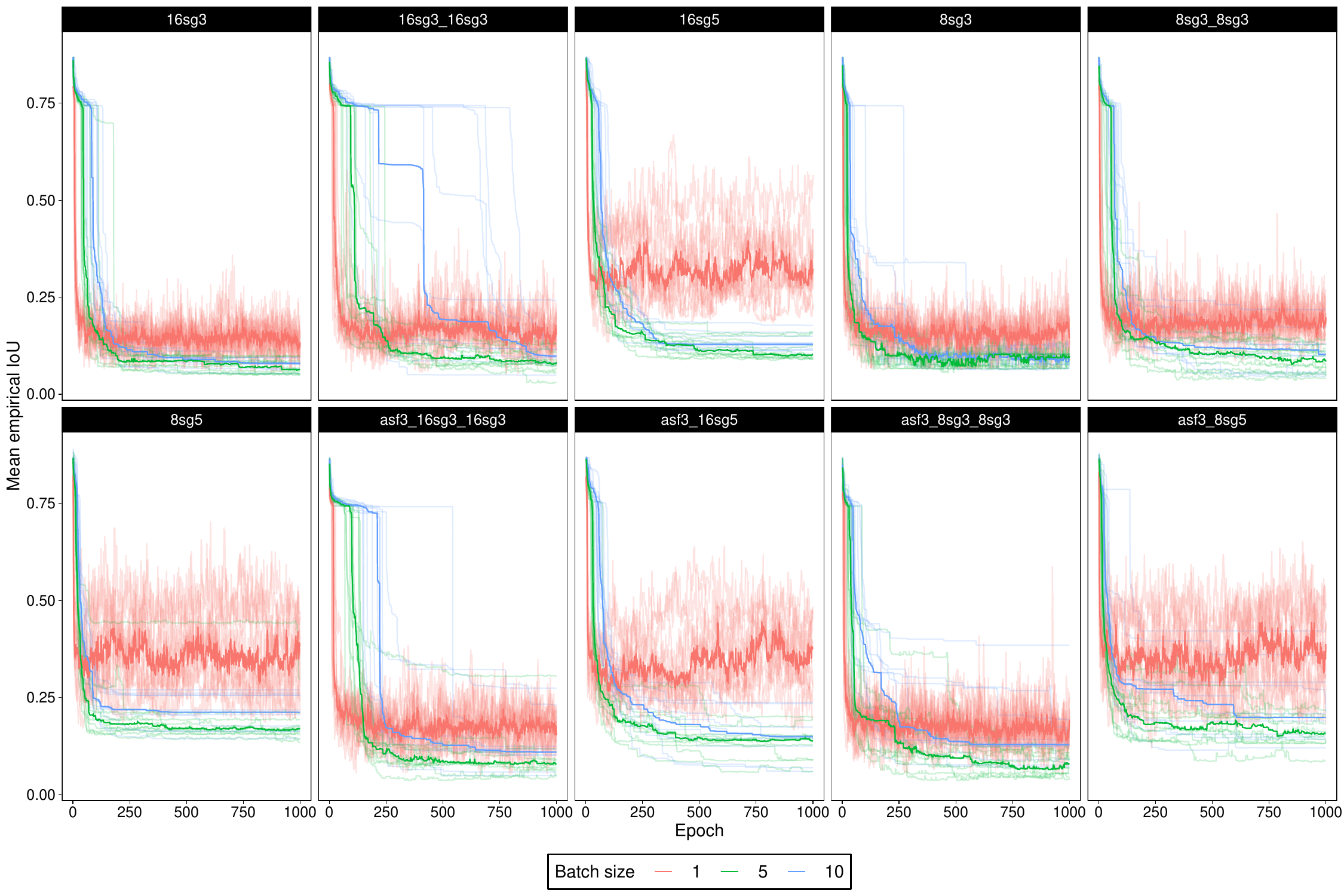}
	\caption{\footnotesize Mean empirical IoU at each epoch of the SL\deleted{G}DA for each architecture and batch size. \added{The transparent lines represent the repetitions from distinct initial points and the solid lines represent the median among the repetitions.}} \label{fig_SLDA_train_digits}
\end{figure}

The trained CDMNN that obtained the lesser validation loss \deleted{(ASF3-16SG3-16SG3 with $b = 1$)} \added{(a repetition of ASF3-8SG3-8SG3 with $b = 5$)} was fine trained via the L\deleted{G}DA for another \deleted{$100$} \added{$1,000$} epochs with batch sizes of \added{one, five and ten} \deleted{$1$, $5$ and $10$} (whole sample). Considering the CDMNN trained via the SL\deleted{G}DA as the initial point, the L\deleted{G}DA took \deleted{around 21} \added{between 70 and 80} minutes to fine train this architecture during \deleted{$100$} \added{$1,000$} epochs, \deleted{but} \added{and} achieved a lesser training and validation losses for \added{the batch sizes five and ten,} \deleted{all batch sizes,} as presented in Table \ref{table_res_refined_SLDA_digits}. The lowest validation loss was achieved with the \added{batch size 5.} \deleted{whole sample as a batch, that is the L\deleted{G}DA described in Algorithm \ref{A1}, and the total training time of this architecture was around 48 minutes.} We see in Figure \ref{fig_SLDA_refine_digits} a behavior analogous to that of Figure  \ref{fig_SLDA_train_digits}, since the empirical loss with batch sizes of $5$ and $10$ decreased abruptly during the training, while that of a sample batch size of $1$ was highly unstable during training. 

We also retrained the \deleted{ASF3-16SG3-16SG3} \added{ASF3-8SG3-8SG3} architecture from scratch via the L\deleted{G}DA with batch sizes of \deleted{$1, 5$ and $10$} \added{one, five and ten} during $1,000$ epochs and obtained the results in Table \ref{table_res_refined_SLDA_digits}. The training epochs took \added{around one hour} \deleted{at least 3 hours} to complete, and both the training and validation loss of the trained DMNN were \added{slightly lesser than that obtained by training via the SLDA, but were} higher than that obtained \deleted{via the SL\deleted{G}DA and} by refining the SL\deleted{G}DA result via the L\deleted{G}DA. In this example, the \deleted{re was no} \added{slight} gain in \added{performance by} training via the L\deleted{G}DA \added{may not be worth the increase on training time when compared to training with the SLDA,} and the best result was attained by refining via the L\deleted{G}DA the CDMNN trained via the SL\deleted{G}DA.

\begin{table}[ht]
	\centering
	\caption{\footnotesize Results of the fine training of the CDMNN architecture \deleted{ASF3-16SG3-16SG3} \added{ASF3-8SG3-8SG3} via \deleted{$100$} \added{$1,000$} epochs of the L\deleted{G}DA starting from the parameters trained by the SL\deleted{G}DA with batch size of \deleted{$1$} \added{five} and of retraining the CDMNN architecture \deleted{ASF3-16SG3-16SG3} \added{ASF3-8SG3-8SG3} from scratch via $1,000$ epochs of the L\deleted{G}DA starting from a random perturbation of the identity operator. \added{The architecture was retrained from ten distinct initial points and the results are presented in the form Minimum - Mean (Standard Deviation) among the ten repetitions.} The fine training and retraining were performed considering sample batches of size $b = 1, b = 5$ and $b = 10$, and the IoU loss. For each training type and batch size, we present the minimum empirical loss, the empirical loss on the validation sample of the trained CDMNN, the number of training epochs, the total training time, the time it took to visit the minimum and the epochs it took to visit the minimum.} \label{table_res_refined_SLDA_digits}
	\resizebox{\linewidth}{!}{\begin{tabular}{l|cccccc}
			\hline
			Type & b & $L_{\mathcal{D}_{N}}(\widehat{\mathcal{C}})$ & Val. loss &  Total time (m) & Time to min (m) & Epochs to min \\ 
			\hline
			\multirow{3}{*}{\textbf{Refine}} & 1 & 0.037 & 0.046 & 79.497 & 0.129 & 1 \\ 
			& \textbf{5} & \textbf{0.029} & \textbf{0.042} & \textbf{73.995} & \textbf{67.649} & \textbf{913} \\ 
			& 10 & 0.032 & 0.043 & 70.979 & 49.739 & 704 \\ 
			\hline
			\multirow{3}{*}{Retrain} & 1 & 0.027 - 0.057 (0.02) & 0.045 - 0.072 (0.018) & 80.8 - 84.202 (3.441) & 6.225 - 61.005 (25.824) & 69 - 724.9 (298.998) \\ 
			& 5 & 0.036 - 0.076 (0.029) & 0.047 - 0.09 (0.03) & 65.34 - 67.609 (1.764) & 26.876 - 54.857 (11.983) & 406 - 812.9 (174.766) \\ 
			& 10 & 0.033 - 0.106 (0.066) & 0.049 - 0.124 (0.065) & 63.081 - 65.672 (1.331) & 29.682 - 49.569 (11.388) & 462 - 755.1 (167.964) \\
			\hline
	\end{tabular}}
\end{table}

\begin{figure}[ht]
	\centering
	\includegraphics[width=\linewidth]{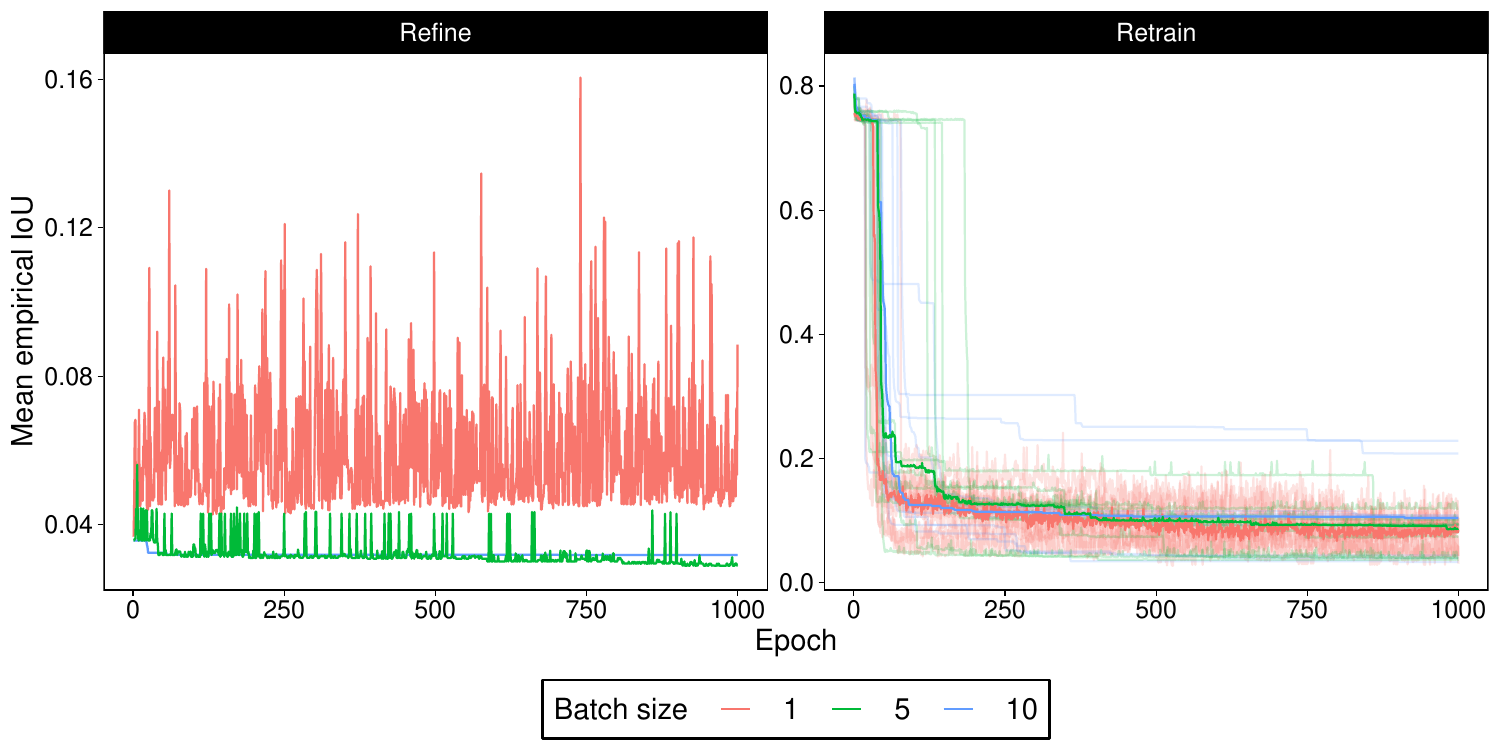}
	\caption{\footnotesize Mean empirical IoU at each epoch of the L\deleted{G}DA when fine training and retraining the architecture \deleted{ASF3-16SG3-16SG3} \added{ASF3-8SG3-8SG3} for each batch size. \added{The transparent lines represent the repetitions from distinct initial points and the solid lines represent the median among the repetitions.}} \label{fig_SLDA_refine_digits}
\end{figure}

The image predicted by the fine trained CDMNN \deleted{ASF3-16SG3-16SG3} \added{ASF3-8SG3-8SG3} with a batch size of \deleted{$10$} \added{five} for each input image in the validation sample is presented in Figure \ref{fig_SLDA_best_digits}. As expected from prior information about the problem at hand, an architecture that applies an ASF at the first layer should have a better performance than architectures that do not, since an ASF has the property of removing noise. We see in Figure \ref{fig_SLDA_best_digits} that the first layer of the trained CDMNN is indeed removing the noise. After this noise removal, the second layer, given by a W-operator represented by the supremum of \added{eight} \deleted{16} sup-generating operators, is \added{actually already recognizing the boundary of the digit.} \deleted{crudely recognizing the boundary.} The third layer, also given by a W-operator represented by the supremum of \added{eight} \deleted{16} sup-generating operators, \added{does not have a meaningful effect since the boundary had already been recognized by the second layer after the first one removed the noise.} \deleted{is refining the boundary of the digit roughly recognized by the first W-operator, obtaining a good visual result.}

\begin{figure}[ht]
	\centering
	\includegraphics[width=\linewidth]{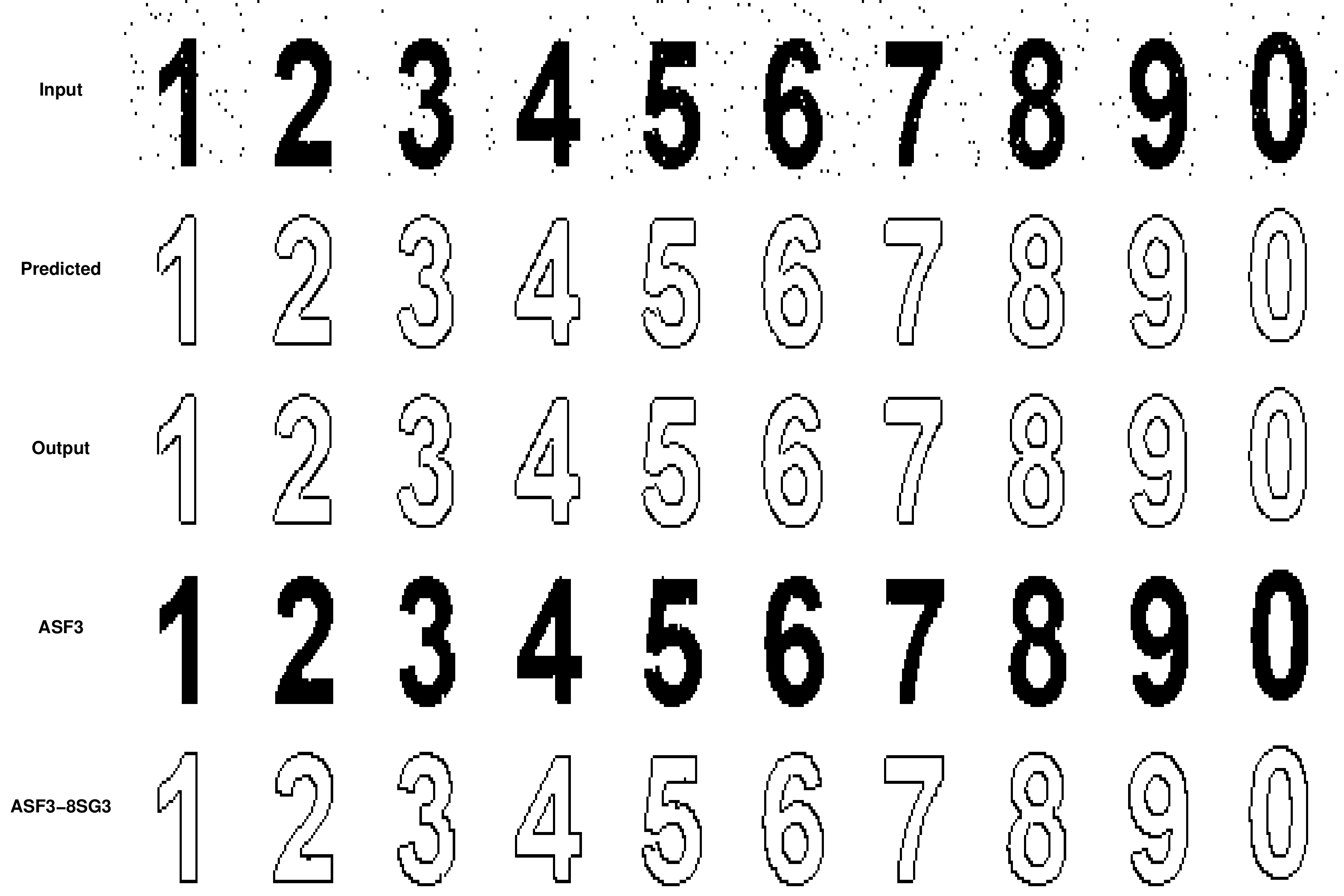}
	\caption{\footnotesize Input and output images in the validation sample, images predicted by the fine trained \deleted{ASF3-16SG3-16SG3} \added{ASF3-8SG3-8SG3} architecture, image obtained after applying the ASF3 layer of this architecture and image obtained after applying the \deleted{ASF3-16SG3} \added{ASF3-8SG3} layers of this architecture.} \label{fig_SLDA_best_digits}
\end{figure}

\section{Discussion}
\label{Sec_Discussion}

\subsection{Main contributions}

The method proposed in this paper merges the classical heuristic design of morphological operators with the estimation of parameters via machine learning. The method proposes the definition of a class of W-operators through a DMNN architecture and the training of its parameters through a discrete stochastic combinatorial optimization algorithm analogous to the stochastic gradient descent algorithm. An important characteristic of this approach is that the class of W-operators can be quite general (e.g., all W-operators with at most $k$ elements in the basis) or specialized (e.g., filters and granulometric filters) according to expected properties of the target operator, i.e., prior information. The semantic expressed by algebraic properties of classes of operators is a \deleted{differential} \added{distinctive feature} relative to other methods, such as deep neural networks. Some benefits of the proposed approach are: smaller sample complexity, interpretation of the designed operator and smaller computational complexity (due to constraints on the class of operators and the stochasticity of the training algorithm). The semantics of the designed operator being comprehensible open doors for further research in the context of DMNN.

More specifically, this paper formalizes the DMNN and proposes lattice \deleted{gradient} descent algorithms to train CDMNN. Although CDMNN are a special case of classical designs of set operators, that combine erosions and dilations via composition, infimum, supremum and complement, its formalization enabled a general algorithm to learn W-operators from data via a greedy search of a lattice. Furthermore, the CDMNN are transparent by design and fully interpretable, since the basis of the operators realized by them can be computed and from it their properties can be deduced. Finally, prior information about the problem at hand can be inserted into a CDMNN architecture to obtain better results and to account for features of the domain of application. 

\subsection{Algorithmic and application perspectives}

We have not exhausted the study of DMNN for binary image analysis, but rather presented its main features, and we leave many important questions for future research. First, the L\deleted{G}DA and SL\deleted{G}DA are not exclusive to train CDMNN and can be readily generalized to any DMNN that is parameterized by a vector of sets and intervals, or any other object contained in a lattice or a poset. \added{For instance, a recent work \cite{marcondes2023algorithm} has proposed a LDA and SLDA to train unconstrained sequential DMNN (cf. $\mathcal{A}_{1}$ in Figure \ref{fig_sequential_architecture})}. Nevertheless, the complexity of the algorithm may increase when $(\mathcal{F},\leq)$ does not have a lattice structure, as can be the case of restricted CDMNN, since the computation of the neighborhoods $\mathcal{N}(\mathcal{C})$ may be costly. An interesting topic for future research is, based on prior information and domain knowledge, to design specific DMNN to solve problems of interest and implement efficient algorithms based on the L\deleted{G}DA for their training. 

An efficient implementation of the general L\deleted{G}DA and SL\deleted{G}DA in Algorithms \ref{A1} and \ref{A2} to train CDMNN is also a topic for future research. The implementation of an algorithm to calculate the window and the basis of a MCG, as discussed in Remark \ref{remark_basis}, is also necessary. Once an efficient implementation is available, it would be interesting to further study empirical features illustrated in the application. For instance, it is necessary to better understand the effect of the architecture complexity and the batch size on the generalization quality of the trained CDMNN in order to build optimized architectures and properly select hyperparameters such as the number of epochs and batch size. 

\added{The numerical experiments evidenced that some stochasticity increases the performance of the algorithm and may avoid getting stuck in bad local minima. On the one hand, the SLDA was able to attain the best minima with a batch size five, while with a batch size one it could not attain these minima, even though the algorithm was less sensible to the initial point. On the other hand, the best results for training with the LDA were obtained with a batch size of one evidencing that a batch size of five is not enough stochasticity when there is no neighbor sampling. These preliminary findings need to be better explored and the performance may be improved by developing more sophisticated methods to select the initial point.}

\added{We also leave for future studies the comparison of the DMNN with other methods for binary image processing. An important comparison is from the point of view of control and interpretability which is the aspect of DMNN that mostly differentiate it from usual methods based on MNN. For instance, with DMNN it is possible to constraint the class of operators based on domain knowledge and to mathematically deduce the properties of the learned operator, and it would be interesting to assess if this can also be done with other methods. Furthermore, it would be interesting to investigate if it is possible to more efficiently train DMNN by developing hybrid algorithms which combine lattice searches with continuous optimization algorithms for MNN such as that proposed in \cite{groenendijk2023geometric}.}

\subsection{Possible extensions}

We believe the DMNN could be extended to represent a broader class of morphological operators in more general complete lattices. For instance, the canonical decomposition results of \cite{banon1991minimal} were extended by \cite{banon1993decomposition} for general complete lattices and could in theory be the foundation of a DMNN architecture. In this instance, it may be necessary to combine the L\deleted{G}DA with a continuous optimization algorithm \added{(e.g., algorithms to train MNN \cite{dimitriadis2021advances,groenendijk2023geometric})} since the operators may not be represented by parameters in a lattice, but rather by a set of parameters that can be decomposed into a lattice of subsets of parameters. For example, for gray-scale images, the parameters of a vertex may be the window on which its operator is locally defined and the function representing the structural element whose domain is the window. The L\deleted{G}DA could be applied to select the window, while another algorithm would have to learn the structural element. The extension of DMNN to gray-scale images is a promising line of research which we \deleted{pretend to follow} \added{are currently working on}.

The main bottlenecks to extending DMNN to solve more general image processing problems are the definition of a lattice algebra under which the problem can be solved via a lattice operator and the development of practical algorithms to train DMNN under this lattice algebra. \added{These bottlenecks are further discussed in \cite{marcondes2023paradigm}.} For example, if one could develop a lattice algebra for RGB images (see \cite{angulo2003mathematical,pastore2016new,sun2023order} for examples of possible algebras) and design image transformation operators as lattice operators, then the results of \cite{banon1993decomposition} could be applied to define the architectures of a DMNN, and if one could develop algorithms to train them, then they could be applied to solve state-of-the-art problems. Such an extension seems a giant leap now, but having a controllable and interpretable analogue of deep learning for general image transformation tasks would be a major advance and even if MM methods cannot attain this degree of generalization, some insights and important results may be achieved along the way. Such a possibility is a motive in itself to further study MM and DMNN extensions.

\subsection{Prior information and Neural Architecture Search}

The quality of a DMNN is associated with the correct design of its architecture, in the sense of it representing operators that approximate a target operator, and with the efficiency of its training from both a computational and sample complexity perspective. Therefore, from an algebraic perspective, it is necessary to better understand how desired classes of operators can be efficiently represented by unrestricted CDMNN architectures since they can be more efficiently trained. Furthermore, it is necessary to properly insert prior information into the architecture, so it represents a collection of operators \textit{compatible} with the problem being solved, on which it is statistically possible, in the sense of statistical learning theory \cite{vapnik1998}, to learn with the available sample size.

However, when strong prior information is not available, it may not be possible to properly design a DMNN architecture, and \added{it} should be learned from data. A state-of-the-art topic in deep learning is architecture selection, or Neural Architecture Search (NAS) \cite{elsken2019neural}, whose methods are mainly based on computation demanding procedures to optimize the architecture based on data within an automatic machine learning (AutoML) paradigm \cite{he2021automl}. Such an automatic approach would be incompatible with MM, since it does not take into account prior information and the properties of morphological operators. 

A true MM method for architecture selection should, based on prior information about the problem at hand, consider a set of candidate DMNN architectures and then select one of them based on data. In some instances, the available prior information may not be enough to properly build one architecture, but may suffice to design a collection of candidate architectures that should be evaluated on data to select the \textit{best} one. These architectures should generate collections of operators with increasing complexity, and selecting them should also be a mean of avoiding over-fitting and controlling the generalization quality of the learned architecture. If these candidate architectures could be themselves organized in a lattice, then they could be selected by a L\deleted{G}DA. \deleted{algorithm.} The study of lattice-based architecture selection methods is an important line of research to control the complexity of the architecture relative to the sample size and obtain better practical results when strong prior information is absent. This is a line of research we are currently following \added{and the method proposed in \cite{marcondes2023algorithm} is an example of DMNN architecture selection.}

\subsection{Impact to Mathematical Morphology}

Mathematical Morphology is a classical paradigm for representing non-linear lattice transformations and had been a state-of-the-art method for image analysis, from the 1960s until recently, when it was overshadowed by the success of deep learning techniques. However, even though established MM methods cannot compete with deep learning from a performance point-of-view, MM has the great advantage of being a mathematically grounded image processing theory from which one can build methods that are completely controllable and interpretable, characteristics that deep learning lacks in general. Therefore, we argue, resonating the arguments of \cite{angulo2021some}, that one should not disregard deep learning techniques, that are extremely useful and have many qualities, and expect to obtain state-of-the-art methods based solely on classical MM methods, but should rather rethink MM in view of modern learning techniques. The method proposed in this paper follows this research agenda.

We believe that the DMNN is a \textit{true} MM method since it retains the control over the design and interpretability of results of classical MM methods, which is an advantage over CNN. For instance, even though CNN can easily solve virtually any problem of binary image analysis, they cannot, in general, be controlled to represent a restricted class of operators based on prior information or be interpreted to understand the properties of the learned operator. Both these tasks are natural to DMNN, since its architecture can be specified via the MM toolbox to represent any class of operators, and, once it is trained, the basis of the operator can be computed and its properties can be fully known. Therefore, the DMNN are better than CNN from a semantic point of view. Since, from an algorithmic perspective, the SL\deleted{G}DA makes the training of DMNN scalable, they are a viable mathematically grounded practical method which ought to be further explored.

Besides providing a path to bring MM to the deep learning era, this paper also merges two, sometimes competing, paradigms of MM: heuristic design and automatic design via machine learning. The advocates of the heuristic approach have argued that the machine learning approach did not take into account strong prior information and did not have as much control over the operator design, while the advocates of the machine learning paradigm have argued that machine learning methods were necessary to scale MM applications and obtain good practical results via a less time-consuming approach. In hindsight, we believe that both arguments are valid and hope with the DMNN approach to unite these paradigms and help to regain the relevance of MM in the deep learning era.

\section*{Acknowledgments}

D. Marcondes was funded by grants \#22/06211-2 and \#23/00256-7, São Paulo Research Foundation (FAPESP), and J. Barrera was funded by grants \#14/50937-1 and \#2020/06950-4, São Paulo Research Foundation (FAPESP).

\bibliographystyle{plain}
\bibliography{Ref}

\end{document}